\newtheoremstyle{plain}
	  {}
	  {}
	  {\itshape}
	  {}
	  {\bfseries}
	  {}
	  {5pt plus 1pt minus 1pt}
	  {}
\newtheoremstyle{definition}
  	  {}
	  {}
	  {\normalfont}
	  {}
	  {\bfseries}
	  {}
	  {5pt plus 1pt minus 1pt}
	  {}
\theoremstyle{plain}
\newtheorem{assumption}{Assumption}
\newcommand{\refeq}[1]			{(\ref{#1})} 
\newcommand{\reffig}[1]			{Fig. \ref{#1}} 
\newcommand{\refsec}[1]			{Section \ref{#1}}
\newcommand{\refapp}[1]			{Appendix \ref{#1}}
\newcommand{\reftab}[1]			{Table \ref{#1}}
\newcommand{\refthm}[1]			{Theorem \ref{#1}}
\newcommand{\refprop}[1]		{Proposition \ref{#1}}
\newcommand{\reflem}[1]			{Lemma \ref{#1}}
\newcommand{\refdef}[1]			{Definition \ref{#1}}
\newcommand{\refasm}[1]			{Assumption \ref{#1}}
\newcommand{\refalg}[1]			{Algorithm \ref{#1}}
\newcommand{\reffn}[1] 		    {\textsuperscript{\ref{#1}}}
\theoremstyle{plain}
\newcommand{\freespace}	{\mathcal{F}} 
\newcommand{\workspace}	{\mathcal{W}} 
\newcommand{\obstspace}	{\mathcal{O}} 
\newcommand{\maxsenserange}{\mathrm{r}_{\max}}
\newcommand{\scanradius}{\mathrm{r}_{\max}}
\newcommand{\pointcloud}{P}
\newcommand{\scanpoint}{\mathrm{p}}
\newcommand{\scanpoints}{P}
\newcommand{\scanset}{\mathcal{S}}
\newcommand{\numberofscan}{n}
\newcommand{\scanpoly}{\mathrm{scanpoly}}
\newcommand{\safescanpoly}{\mathrm{safepoly}}
\newcommand{\scancenter}{\vect{c}}
\newcommand{\safepoly}[1]{\mathrm{safepoly}(#1)}
\newcommand{\saferpoly}[1]{\mathrm{saferpoly}(#1)}
\newcommand{\obstdist}{\mathrm{dist2obst}}
\newcommand{\bndrydist}{\mathrm{dist2bnd}}
\newcommand{\activepolicy}{\mathrm{activescan}}
\newcommand{\policycost}{\mathrm{scancost}}
\newcommand{\policygoal}{\mathrm{scangoal}}
\newcommand{\navcost}{\mathrm{navcost}}
\newcommand{\localcost}{\mathrm{localcost}}
\newcommand{\R}  	{\mathbb{R}} 
\newcommand{\radius} 	{\rho}
\DeclareMathOperator*{\conv}{conv}
\newcommand{\interior}{\mathrm{int}}
\newcommand{\ball}{\mathrm{B}}  
\newcommand{\erode}{\mathrm{erode}}
\newcommand{\pos} 		{\vect{x}} 			
\newcommand{\goal}		{\vect{x^{*}}}
\newcommand{\ctrlgoal}	{\vect{y}} 
\newcommand{\gain}   	{\kappa} 			
\newcommand{\ctrl}      {\vect{u}} 			
\newcommand{\graph}{G}
\newcommand{\vertexset}{V}
\newcommand{\edgeset}{E}
\newcommand{\subgraph}{\overline{G}}
\newcommand{\subvertexset}{\overline{V}}
\newcommand{\subedgeset}{\overline{E}}
\newcommand{\proj} {\Pi}
\let\originalleft\left
\let\originalright\right
\renewcommand{\left}{\mathopen{}\mathclose\bgroup\originalleft}
\renewcommand{\right}{\aftergroup\egroup\originalright}
\newcommand{\plist}[1] 	{\left(#1\right)} 
\newcommand{\blist}[1]	{\left[ #1 \right]} 
\newcommand{\clist}[1]	{\left\{#1\right\}} 
\newcommand{\vect}[1]   {\mathrm{#1}}
\newcommand{\tr}[1] {{#1}^{\mathrm{T}}} 
\newcommand{\norm}[1]  {\|#1\|}
\newcommand{\argmin}{\operatornamewithlimits{arg\ min}} 
\newcommand{\diff} {\mathrm{d}}
\newtheoremstyle{plain}
	  {}
	  {}
	  {\itshape}
	  {}
	  {\bfseries}
	  {}
	  {5pt plus 1pt minus 1pt}
	  {}
\newtheoremstyle{definition}
  	  {}
	  {}
	  {\normalfont}
	  {}
	  {\bfseries}
	  {}
	  {5pt plus 1pt minus 1pt}
	  {}
\theoremstyle{plain}
\newtheorem{theorem}{Theorem}
\newtheorem{lemma}{Lemma}
\newtheorem{proposition}{Proposition}
\theoremstyle{definition}
\newtheorem{definition}{Definition}
\title{\LARGE \bf 
Key-Scan-Based Mobile Robot Navigation:
\\
\scalebox{0.95}{Integrated Mapping, Planning, and Control using Graphs of Scan Regions}
\\
{\large(Technical Report)}
}
\author{Dharshan Bashkaran Latha and \"{O}m\"{u}r Arslan
\thanks{The authors are with the Department of Mechanical Engineering, Eindhoven University of Technology, P.O. Box 513, 5600 MB Eindhoven, The Netherlands. The authors are also affiliated with the Eindhoven AI Systems Institute. Emails:  d.bashkaran.latha@student.tue.nl, o.arslan@tue.nl}%
}
\begin{document}

\maketitle
\thispagestyle{empty}
\pagestyle{empty}

\begin{abstract}
Safe autonomous navigation in a priori unknown environments is an essential skill for mobile robots to reliably and adaptively perform diverse tasks (e.g., delivery, inspection, and interaction) in unstructured cluttered environments. 
Hybrid metric-topological maps, constructed as a pose graph of local submaps, offer a computationally efficient world representation for adaptive mapping, planning, and control at the regional level.
In this paper, we consider a pose graph of locally sensed star-convex scan regions as a metric-topological map, with star convexity enabling simple yet effective local navigation strategies.
We design a new family of safe local scan navigation policies and present a perception-driven feedback motion planning method through the sequential composition of local scan navigation policies, enabling provably correct and safe robot navigation over the union of local scan regions.
We introduce a new concept of bridging and frontier scans for automated key scan selection and exploration for integrated mapping and navigation in unknown environments.
We demonstrate the effectiveness of our key-scan-based navigation and mapping framework using a mobile robot equipped with a 360$^{\circ}$ laser range scanner in 2D cluttered environments through numerical ROS-Gazebo simulations and real hardware~experiments.
\end{abstract}

\section{Introduction}
\label{sec.introduction}

The ability to safely and smoothly navigate in unknown unstructured environments is crucial for autonomous robots to reliably and adaptively perform various tasks, such as logistics \cite{renan_nascimento_RAS2021, taranta_etal_INDIN2021}, assistance \cite{bettencourt_lima_ICARSC2021,  gul_rahiman_alhady_sahal_CE2019}, inspection and surveillance \cite{paola_etal_IJARS2010, halder_afsari_AS2023}.
Closing the gap between perception and action for autonomous navigation in such application settings positively impacts adaptability, flexibility, and robustness \cite{oriolo_ulivi_vendittelli_TSMC1998, placed_etal_TRO2023}.
Hybrid metric-topological maps, for instance, constructed as a pose graph of local submaps, offer a computationally efficient world representation for adaptive mapping, strategic planning, and reliable control at the regional level \cite{thrun_AI1998, konolige_marder_marthi_ICRA2011}.

In this paper, we consider a pose graph of local scan regions to systematically and tightly integrate mapping, planning, and control for improved performance and computationally efficiency in both perception and action.
We present a perception-driven feedback motion planning approach for safe global navigation in unknown environments by incrementally deploying and sequentially composing simple local scan navigation policies using a graph of star-convex scan polygons.
We introduce the notion of bridging and frontier scans for key scan selection and exploration to enhance the topological connectivity and coverage of scan regions while avoiding redundant measurements, as illustrated in \reffig{fig.key_scan_based_navigation_mapping}.

\begin{figure}[t]
\centering
\begin{tabular}{@{}c@{\hspace{0.75mm}}c@{\hspace{0.75mm}}c@{}}
\begin{tabular}{@{}c@{\hspace{0.5mm}}c@{}}
\includegraphics[width=0.16\columnwidth]{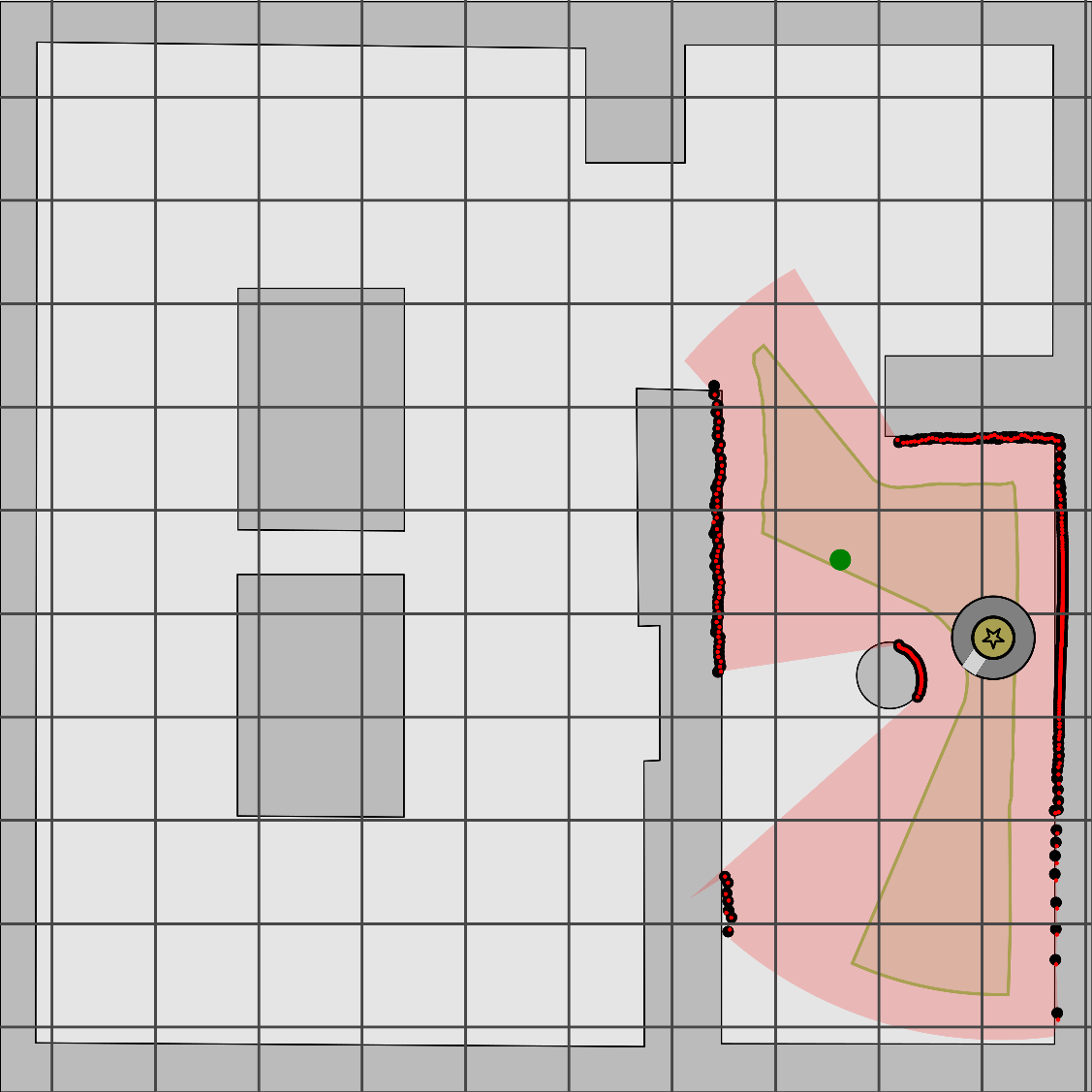}
&
\includegraphics[width=0.16\columnwidth]{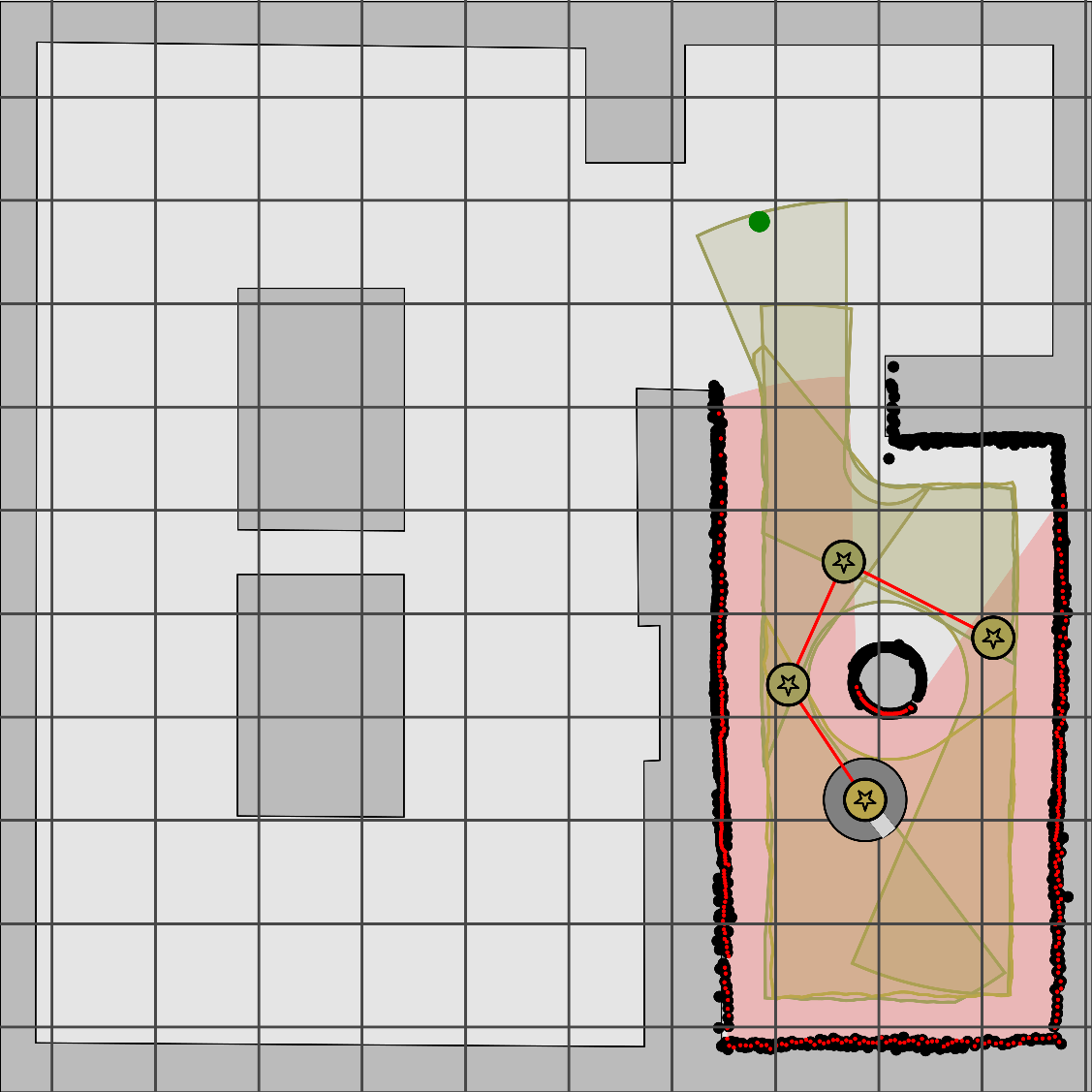}
\\[-0.9mm]
\includegraphics[width=0.16\columnwidth]{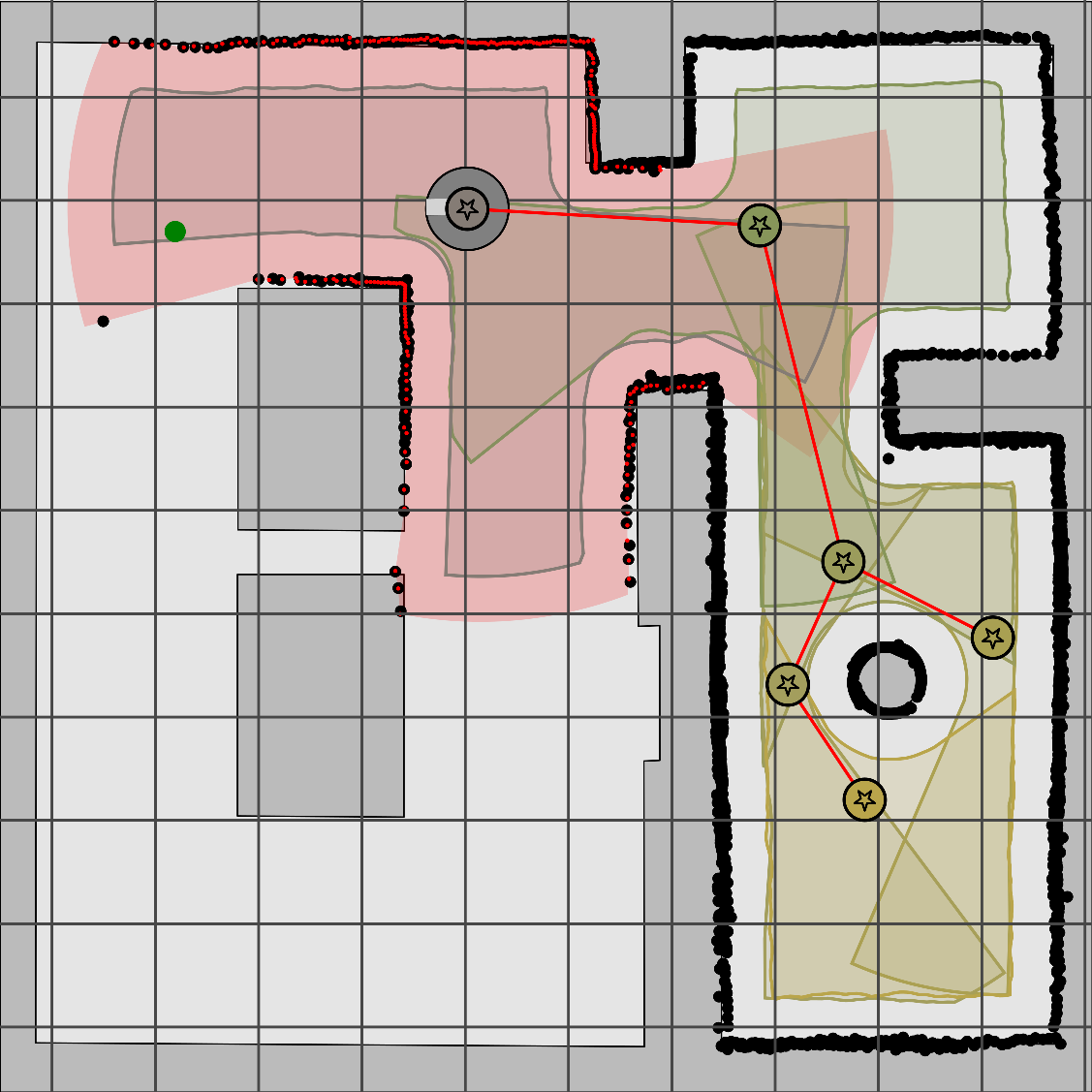}
&
\includegraphics[width=0.16\columnwidth]{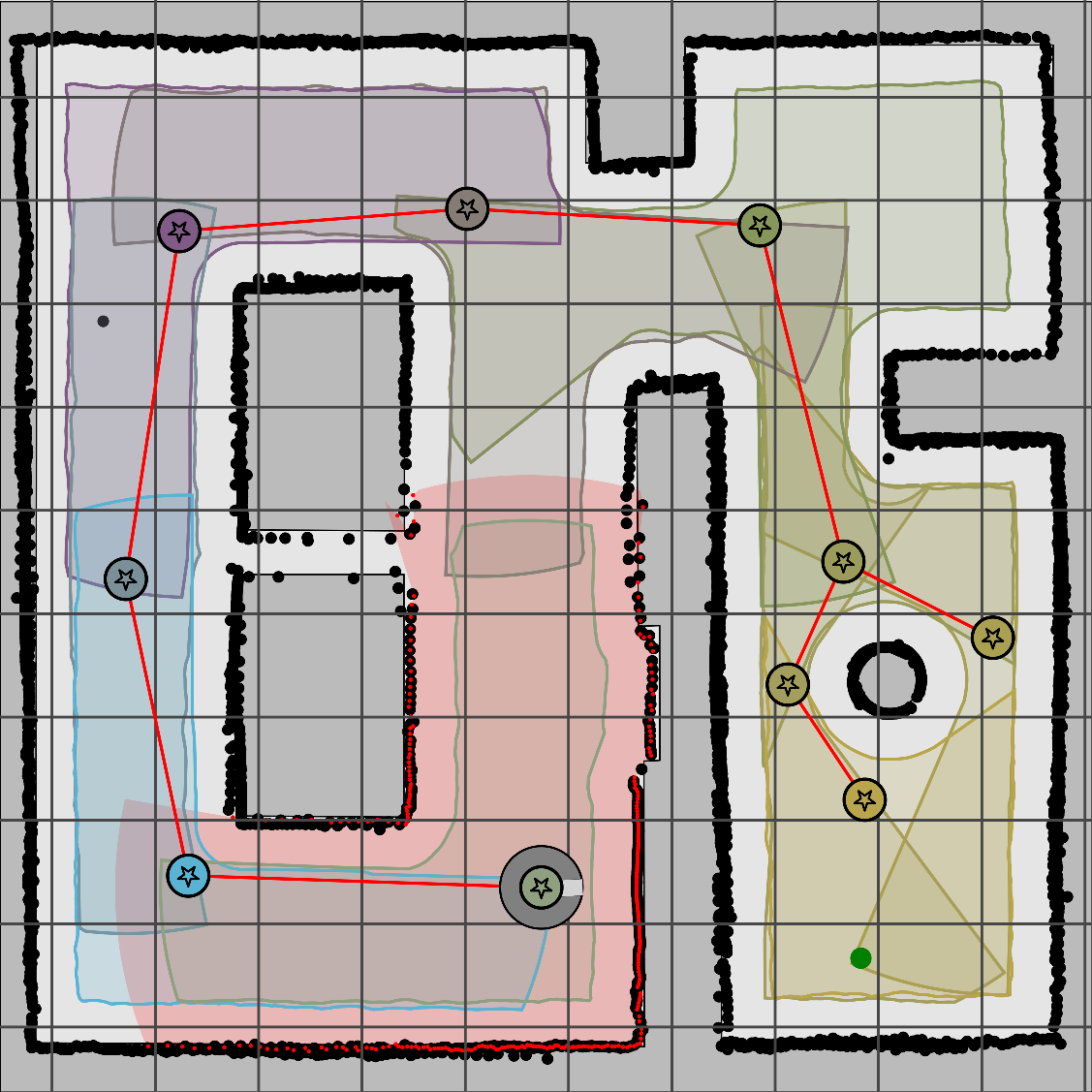}
\end{tabular}
&
\begin{tabular}{@{}c@{}}
\includegraphics[width=0.325\columnwidth]{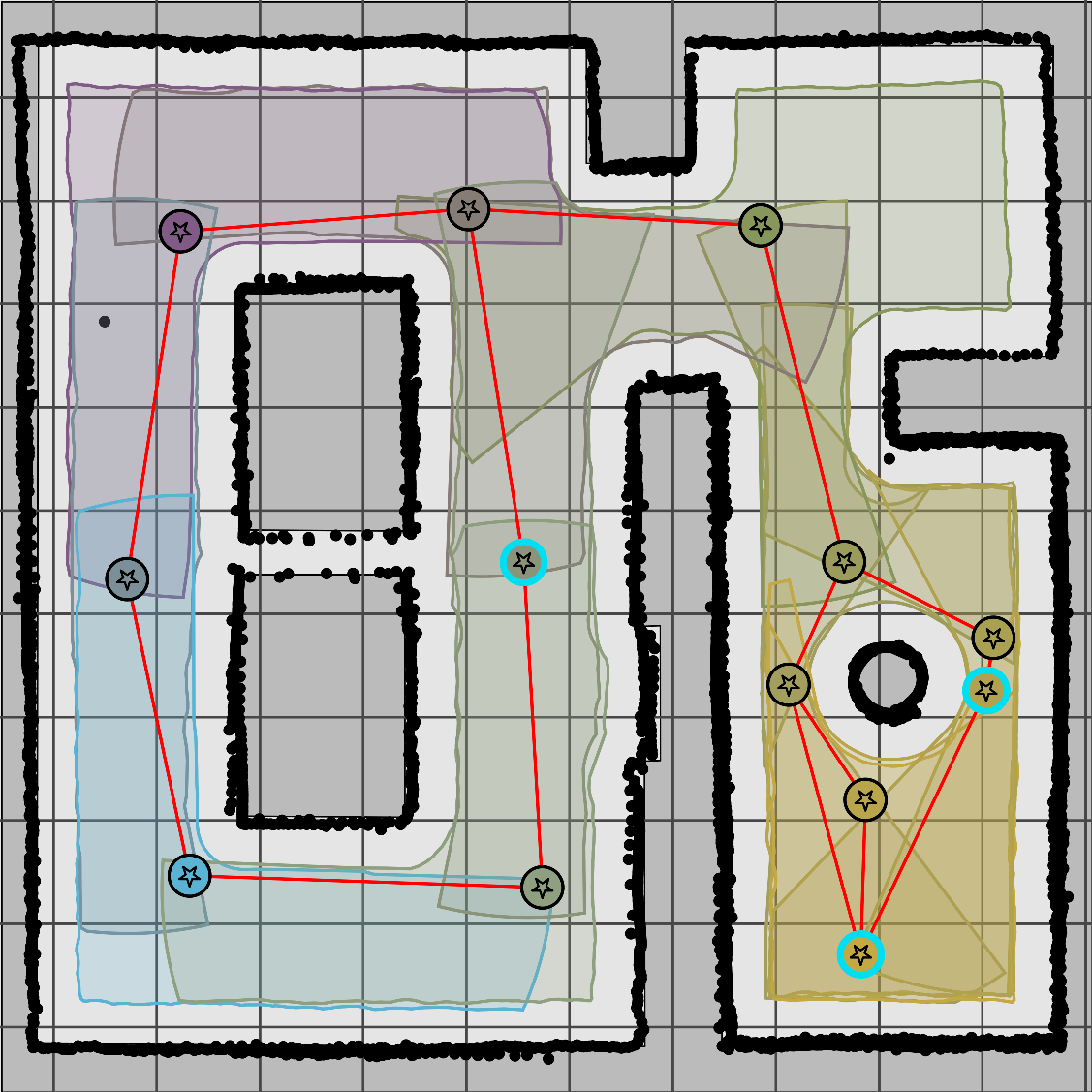}
\end{tabular}
& 
\begin{tabular}{@{}c@{}}
\includegraphics[width=0.325\columnwidth]{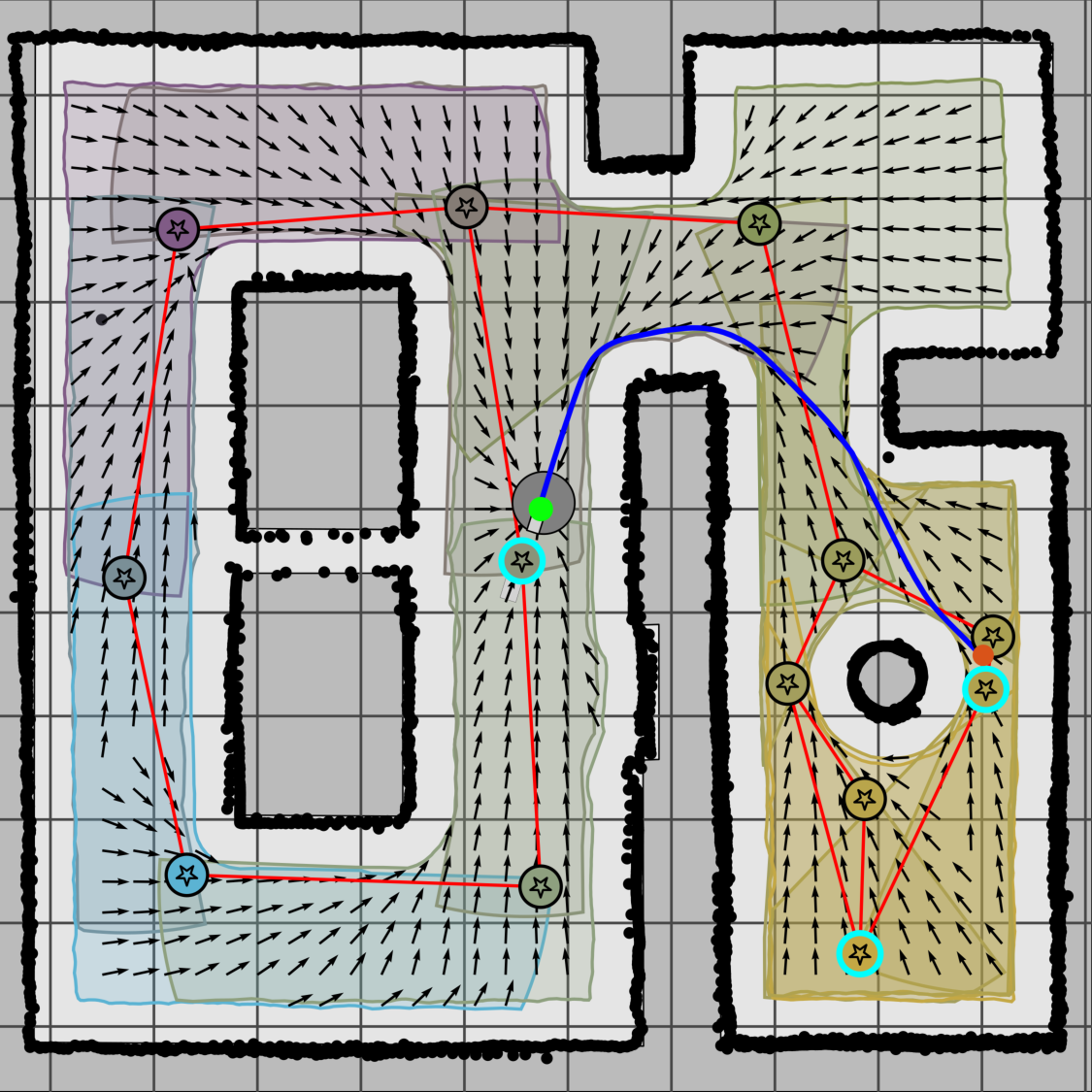}
\end{tabular}
\\[-0.65mm]
\includegraphics[width=0.325\columnwidth]{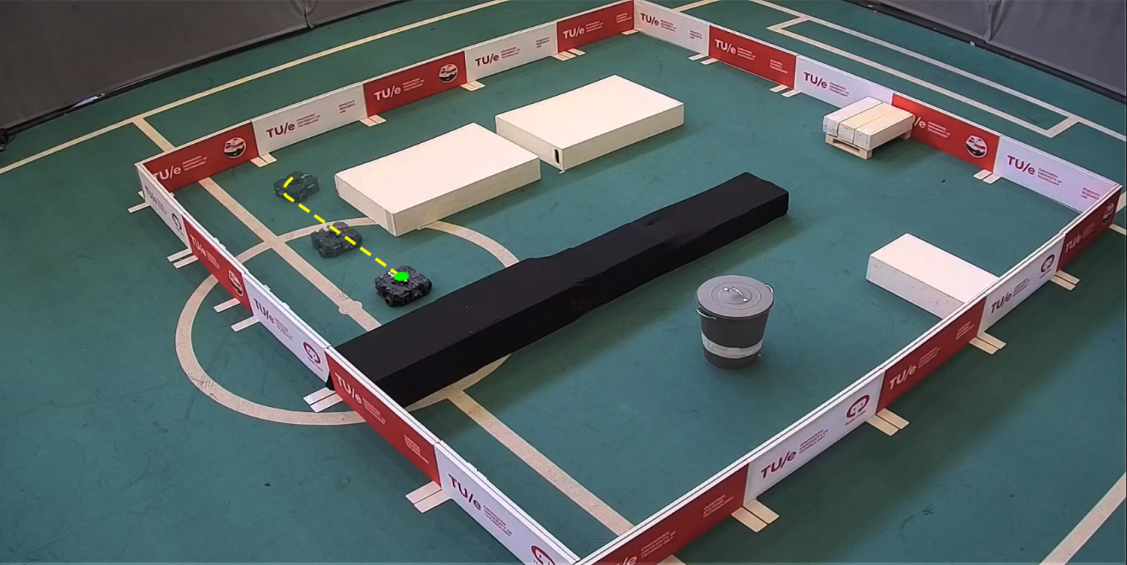}
&
\includegraphics[width=0.325\columnwidth]{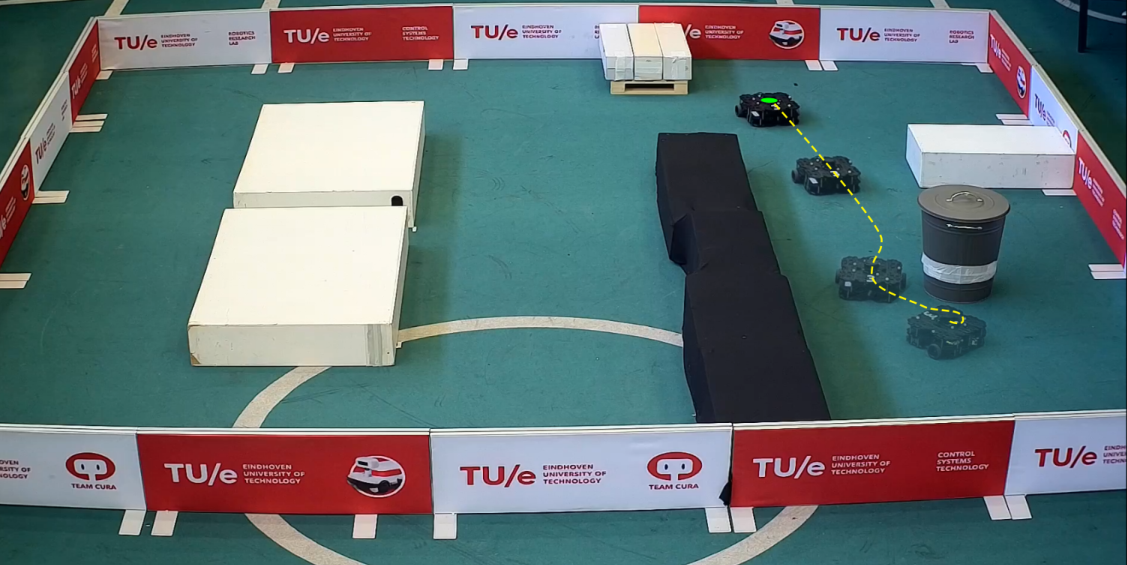}
&
\includegraphics[width=0.325\columnwidth]{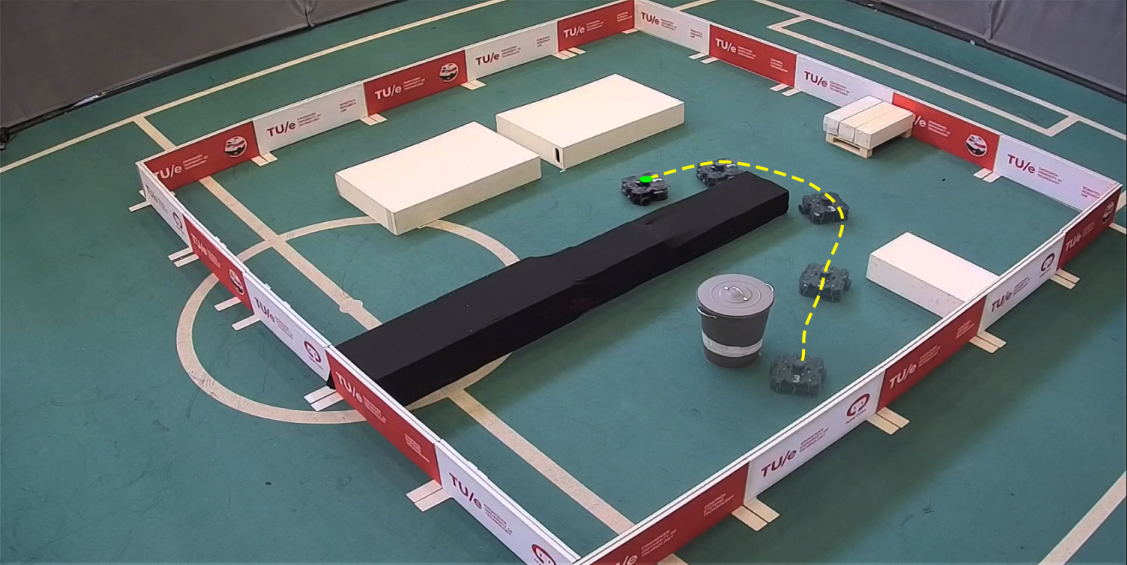}

\end{tabular}
\vspace{-3mm}
\caption{Key-scan-based mobile robot navigation in an unknown cluttered environment by online sequential deployment and composition of local scan navigation policies. (top-left) Automated key scan selection and deployment based on frontier and bridging scans. (top-middle) An incrementally built motion graph of star-convex scan polygons. (top-right) Global feedback motion planning via sequential composition of local scan navigation policies. (bottom) Example robot trajectories during autonomous exploration.}
\label{fig.key_scan_based_navigation_mapping}
\vspace{-2mm}
\end{figure}

\subsection{Motivation and Related Literature}
\label{sec.Literature}

\subsubsection{Integrated Robot Motion Planing and Control}

Safe and smooth motion planning and control is essential for autonomous robots operating around people and other robots, but it is known to be computationally challenging for many robotic systems in complex cluttered environments \cite{canny_ComplexityRobotMotionPlanning1988}. 
Due to real-time operation requirements and onboard computation limitations, many existing robot motion planning and control methods adopt a decoupled high-level planning and low-level control approach, by first finding a collision-free reference path, and then executing the reference plan as accurately as possible through feedback control \cite{choset_etal_PrinciplesOfRobotMotion2005, siciliano_etal_RoboticsModellingPlanningControl2009, lynch_park_ModernRobotics2017}.
However, due to its open-loop nature, such decoupled robot motion planning and control methods often suffer from frequent replanning cycles in practice to ensure safety and consistent system performance \cite{aguiar_hespanha_kokotovic_Automatica2008}, especially in unknown environments \cite{koenig_likhachev_TRO2005, ding_gao_wang_shen_TRO2019, tordesillas_etal_TRO2021}. 
Optimization-based planning and control approaches, such as model predictive control and trajectory optimization, aim to close the gap between high-level planning and low-level control, but this often comes with high computational costs and performance issues due to local minima and initialization, especially in large cluttered environments \cite{nguyen_etal_ECC2021}.
Such technical challenges of optimization-based motion planning can be mitigated by performing motion optimization over a graph of convex sets obtained by a convex decomposition of globally known environments \cite{deits_tedrake_ICRA2015, chen_liu_shen_ICRA2016, liu_etal_RAL2017, gao_etal_JFR2019, marcucci_etal_JoO2024, marcucci_etal_SR2023}, but one still needs replanning under disturbances and in unknown environments \cite{ding_gao_wang_shen_TRO2019, zhou_etal_TRO2021}.    
Feedback motion planners constructed based on artificial potentials \cite{khatib_IJRR1986} and navigation functions \cite{rimon_kod_TRA1992} offer a robust and reliable tightly coupled motion planning and control solution, but they are usually difficult to construct for arbitrary environments without local minima \cite{koren_borenstein_ICRA1991} or high numerical computation costs \cite{barraquand_langlois_latombe_TSMC1992}.
Fortunately, sequential composition \cite{burridge_rizzi_koditschek_IJRR1999} of local feedback control policies allows for computationally efficient feedback motion planning of complex robotic systems in cluttered environments, assuming a (convex) spatial decomposition of globally known environments \cite{belta_isler_pappas_TRO2005, conner_choset_rizzi_tro2009, conner_howie_rizzi_AR2011, arslan_saranli_TRO2012, arslan_guralnik_kod_TRO2016}.
In this paper, as a step toward closing the gap between perception and action \cite{arslan_koditschek_IJRR2019}, we present a perception-driven, integrated planning and control approach for safe and reliable global robot navigation in unknown environments through the online sequential deployment and composition of simple local navigation policies (e.g., navigation through the scan center) over automatically selected critical scan regions measured by onboard sensors.

\subsubsection{Integrated Robot Perception and Action}

Achieving truly safe, reliable, and adaptive robot autonomy in unknown unstructured environments requires leveraging the interaction and dependency between perception and action.
Because robot perception of the environment greatly influences both the quality and process of planning and control, and vice versa \cite{choset_etal_PrinciplesOfRobotMotion2005}.
For example, occupancy grid maps are a widely used metric world model that can be  incrementally built as a fine tessellation of the environment into simple grid shapes \cite{elfes_C1989}. 
However, robot motion planning and control over such grid maps suffer from high computational costs in large environments due to increasing connectivity and collision detection complexity \cite{lavalle_PlanningAlgorithms2006, choset_etal_PrinciplesOfRobotMotion2005}.
To tackle this computational issue, dense metric (e.g., occupancy grid) maps are often segmented into local regions to build sparse topological skeleton maps \cite{thrun_AI1998, bormann_etal_ICRA2016}, enabling computationally efficient high-level global motion planning \cite{blochliger_etal_ICRA2018, chen_etal_IROS2022}.
Alternatively, hybrid metric-topological maps, built as a pose graph of optimally aligned local (e.g., occupancy grid) submaps or sensed (e.g., scan) regions, offer computationally efficient, adaptive incremental mapping \cite{lu_milios_AR1997, grisetti_etal_MITS2010, mendes_koch_lacroix_SSRR2016} and global motion planning \cite{konolige_marder_marthi_ICRA2011}, without constantly requiring topological segmentation of metric maps.
In this paper, we develop an incrementally built motion graph of star-convex scan polygons, enriched with local feedback navigation policies, enabling computationally efficient global feedback motion planning for safe and reliable navigation in unknown environments.


Robot actions also play a significant role in shaping both the quality and process of perception \cite{placed_etal_TRO2023}.
For example, active perception for mapping aims to leverage robot motion planning and control (i.e., action) to obtain a more accurate map representation of the environment \cite{placed_etal_TRO2023, lluvia_lazkano_ansuategi_Sensors2021}.
The most widely used strategy for active mapping is frontier-based exploration  by navigating towards the boundary region between the known obstacle-free space and the unknown unexplored space in a map \cite{yamauchi_CIRA1997}. 
In large environments, frontier-based exploration is usually combined with an active loop-closing strategy to revisit previously visited regions and reduce uncertainty and missing information in mapping and localization, with the active loop-closing decision based on the discrepancy between the topological and metric maps \cite{stachniss_hahnel_burgard_IROS2004, valencia_etal_IROS2012, fermin-leon_neira_castellanos_ECMR2017} or local perceptual saliency \cite{kim_eustice_IJRR2015, suresh_etal_ICRA2020}.
Due to the computational efficiency of high-level motion planning in topological maps, active frontier-based exploration and loop-closing strategies are often adapted for graph-based topological exploration in large environments \cite{yang_etal_ICRA2021}.
In this paper, we introduce new notions of bridging and frontier scanning positions for exploration in a pose graph of scan regions, aiming to build a more accurate and complete topological and metric representation of the environment that supports better robot motion planning and control.

\subsection{Contributions and Organization of the Paper}
\label{sec.Contribution}

This paper introduces an integrated mapping, planning, and control approach using the sequential composition of local scan navigation policies over an incrementally built graph of scan regions for key-scan-based mobile robot navigation in unknown environments.  
In summary, the three major contributions of our paper are as follows:
\begin{itemize}
\item We introduce simple local feedback control policies for safe navigation over star-convex scan polygons using the central connectivity of scan centers (\refsec{sec.local_navigation}).
\item We present a new method for constructing a motion graph of star-convex scan polygons using reciprocal center inclusion and describe how to use it for global feedback motion planning through the sequential composition of local scan navigation policies, ensuring provably correct and safe global navigation  (\refsec{sec.planning_over_graph_of_scans}).
\item We propose new key scan selection criteria to identify bridging and frontier scans to complete missing connectivity in a motion graph of scans and apply it for auto-nomous \mbox{exploration for active mapping (\refsec{sec.automated_deployment}).}    
\end{itemize}
We demonstrate the effectiveness of our key-scan-based mapping and navigation framework using a mobile robot in numerical ROS-Gazebo simulations and real physical hardware experiments (\refsec{sec.numerical_simulations}).
On a more conceptual level, we believe that our results demonstrate that systematically integrated mapping, planning, and control enables action for better perception and perception for better action.

The rest of the paper is organized as follows.
\refsec{sec.local_navigation} presents how to perform safe local navigation over a scan region. 
\refsec{sec.planning_over_graph_of_scans} describes how to perform feedback motion planning over a graph of scan regions. \refsec{sec.automated_deployment} presents autonomous key scan selection and exploration for active mapping. 
\refsec{sec.numerical_simulations} demonstrates numerical simulations and experimental validation. 
We conclude in \refsec{sec.conclusions} with a summary of our work and future research directions.

\section{Safe Local Navigation over a Scan Region}
\label{sec.local_navigation}

In this section, we briefly describe our robot motion and perception model and present two local scan navigation strategies that leverage the star-convexity of scan regions for simple yet effective safe local navigation.

\subsection{Robot Motion and Perception Model}

For ease of exposition,%
\footnote{Our results can be generalized to 3D robot navigation settings using omnidirectional 3D point-cloud sensing with an appropriate ordering relation of 3D points based on 3D triangular meshes. We defer this discussion to a future paper on sensor-based drone navigation, where 3D perception, planning, and control are more relevant.}
we consider a fully-actuated mobile robot moving in a 2D planar Euclidean space $\R^2$  with a circular robot body, centered at position $\pos \in \R^2$ with radius $\radius \geq 0$, whose equation of motion is given by 
\begin{align}\label{eq.equation_of_motion}
\dot{\pos} = \ctrl
\end{align}
where $\ctrl \in \R^2$ is the robot's velocity control input.
The robot is assumed to be operating in a bounded workspace $\workspace \subset \R^2$ with a priori unknown obstacles $\obstspace \subset \R^2$ and with the known localization%
\footnote{Given a collection of overlapping scans, the global localization of the robot can be estimated by scan matching \cite{segal_haehnel_thrun_RSS2009} and pose/factor-graph optimization \cite{dellaert_kaess_FTR2017}. In a follow-up paper, we plan to study the systematic integration of key-scan-based navigation with tightly coupled localization, mapping, planning, and control, which is outside the scope of this paper.} 
of its position $\pos$. 
Hence, the unknown free space of collision-free robot positions is given by 
\begin{align}\label{eq.free_space}
\freespace := \clist{\pos \in \workspace \, \big |\, \ball(\pos, \radius) \subseteq \workspace \setminus \obstspace}
\end{align}  
where $\ball(\pos, \radius):=\clist{\vect{y} \in \R^2 \, | \, \norm{\vect{y} - \pos} \leq \radius} $  denotes the closed Euclidean ball with center $\pos \in \R^2$ and radius $\radius \geq 0$ and $\norm{.}$ denotes the standard Euclidean norm.

Since the obstacles are unknown, the robot is assumed to be equipped with a 2D 360$^{\circ}$ point-cloud scanning sensor, with a maximum range of $\maxsenserange > \radius$ (greater than the body radius $\radius$), that
senses at a constant angular resolution and returns a set of $n$ counter-clockwise-ordered sensed (e.g., obstacle) points%
\footnote{For example, one can convert an ordered set of 2D laser range readings $\plist{r_0, \ldots, r_\numberofscan} \in \R ^{\numberofscan+1}$ taken at a constant angular regulation $\Delta \theta$ at angles $\plist{\theta_0, \ldots, \theta_\numberofscan}$  relative to the sensor center $\scancenter$ and  sensor orientation $\theta$  into the counter-clockwise ordered set of obstacle points $\plist{\scanpoint_0, \ldots, \scanpoint_{\numberofscan}}$ relative to the scan center $\scancenter$ in the global world coordinates as 
\begin{align*}
\scanpoint_i = \scancenter + r_i \begin{bmatrix}
\cos(\theta + \theta_i) \\
\sin(\theta + \theta_i)
\end{bmatrix}.
\end{align*}
} 
$\pointcloud = \plist{\scanpoint_0, \scanpoint_1, \ldots, \scanpoint_\numberofscan} \in \R^{{\numberofscan+1} \times 2}$   relative to the sensor center $\scancenter \in \R^2$ with identical first and last sensor readings, i.e., $\scanpoint_0 = \scanpoint_\numberofscan$, which is a simplifying assumption to ease the notation and handle circular ordering of scan points effectively.
For convenience, we assume that the sensor center and the robot center coincide, i.e., $\scancenter = \pos$.
Accordingly, we define the star-convex%
\footnote{A set $X \subseteq \R^n$ is said to be star-convex if and only if there exists a point $\vect{x} \in X$ such that $ [\vect{x}, \vect{y}] = \clist{\alpha \vect{x} + (1 - \alpha) \vect{y} \big| \alpha \in [0,1]} \subseteq X$ for all $\vect{y} \in X$, where $\vect{x}$ is referred to as a star center. 
Intuitively, a star-convex set $X$ is a collection of points that are visible (i.e., connected by collision-free straight line paths) to the star center, with visibility (i.e., obstacle-free space) constrained to the set $X$.
In this paper, we combine and exploit both of these perception and action related interpretations of star-convex regions for sensor-based safe navigation.
} 
polygon of the 2D scan points $ \scanpoints=\plist{\scanpoint_0, \ldots, \scanpoint_{\numberofscan}}$ relative to the scan center (a.k.a., star center) $\scancenter$ as
\begin{align}\label{eq.scan_poylgon}
\scanpoly(\scancenter, \scanpoints) := \bigcup\limits_{i = 1}^{\numberofscan} \conv(\scancenter, \scanpoint_{i-1}, \scanpoint_{i})
\end{align}
where $\conv$ denotes the convex hull operator, which, in our case, corresponds to a triangle for any given three vertex points.
Hence, the boundary, denoted by $\partial \scanpoly(\scancenter, \scanpoints)$, of the star-convex  polygon of scan $(\scancenter, \scanpoints)$ is given by
\begin{align}\label{eq.scan_polygon_boundary}
\partial \scanpoly(\scancenter, \pointcloud) := \bigcup_{i=1}^{n} [\scanpoint_{i-1}, \scanpoint_{i}]
\end{align}
where $\blist{\vect{p}, \vect{q}}:=\clist{\alpha \vect{p} + (1-\alpha)\vect{q} \,\big|\, \alpha \in [0,1]}$ denotes the line segment between points $\vect{p}$ and $\vect{q}$.

Although finite resolution scanning of obstacles might miss sharp, spiky obstacle corners, we consider this is less of a problem in human-centric environments and can be overcome via high angular resolution and high-definition artificial point clouds generated by fusing consecutive sensor data.
Accordingly, we assume that scan polygons truly captures the local obstacle-free space around the robot. 

\begin{assumption}\label{asm.star_polygon_safety}
\emph{(Obstacle-Free Scan Polygon)} For any collision-free scan center $\scancenter \in \freespace$ and the scan points $\scanpoints = \plist{\scanpoint_0, \ldots, \scanpoint_\numberofscan}$ sensed at point $\scancenter$, we assume that:
\begin{itemize}
\item The scan center has a positive clearance from obstacles with respect to the robot's body, i.e., $\min\limits_{\scanpoint_i \in \scanpoints} \norm{\scanpoint_i - \scancenter} > \radius$.
\item The scan points that are strictly within the maximum sensing range $\scanradius$ are actual obstacle points, i.e., \mbox{$\norm{\scanpoint_i - \scancenter} < \scanradius \Longrightarrow \scanpoint_i \in \obstspace$} for all $i = 0, \ldots, \numberofscan$. 
\item The interior\footnote{Note that $\interior(\scanpoly(\scancenter, \scanpoints)) = \scanpoly(\scancenter, \scanpoints) \setminus \partial \scanpoly(\scancenter, \scanpoints)$.} of the polygon of the scan $(\scancenter,\scanpoints)$ is free of obstacles $\obstspace$, i.e., $\interior(\scanpoly(\scancenter, \scanpoints))  \cap \obstspace = \varnothing$.
\end{itemize}
\end{assumption}

\noindent This assumption ensures that any obstacle point observed by the point-cloud sensor can only lie on the scan polygon boundary $\partial\scanpoly(\scancenter, \scanpoints)$. As a result, a local collision-free space around a scan center $\scancenter$ can be constructed by eroding the scan polygon $\scanpoly(\scancenter, \scanpoints)$ by the robot body shape as
\begin{align*}
\erode(\scanpoly(\scancenter, \pointcloud), \radius) \subseteq \freespace
\end{align*} 
where the erosion of a set $A$ by a radius of $\radius$ is defined as  
\begin{align*}
\erode(A, \radius) := \clist{\vect{y} \in A \Big| \ball(\vect{y}, \radius) \subseteq A}.    
\end{align*}
It is important to observe that erosion by itself is sufficient to guarantee safety, but it may not guarantee a simply-connected set or star-convex set  with respect to the scan center $\scancenter$, as illustrated in \reffig{fig.local_navigation_over_scan_polygon}. 
Hence, to take the advantage of star convexity in planning and control, we find it useful to define the collision-free star-convex safe polygon of a scan $(\scancenter, \scanpoints)$~as
{\small
\begin{align}\label{eq.safe_scan_polygon}
\!\!\safepoly{\scancenter, \!\scanpoints} \!&:=\! \clist{\vect{y} \!\in\! \R^{2} \Big|  \blist{\scancenter, \vect{y}} \!\subseteq\! \erode(\scanpoly(\scancenter, \!\scanpoints), \radius)\! }\!\!  
\\
\!\!\saferpoly{\scancenter, \!\scanpoints} \!&:=\! \clist{\vect{y} \!\in\! \R^{2} \Big|  \blist{\scancenter, \vect{y}} \!\subseteq\! \erode(\scanpoly(\scancenter, \!\scanpoints), \radius \!+ \! \epsilon)\! }\!\!\!  \nonumber 
\end{align}
}%
where the safer scan polygon defines the local planning domain for goal selection and provides a positive $\epsilon \!> \! 0$ margin\footnote{As a design choice, we find it more practical to define $\saferpoly{\scancenter, \scanpoints}$ as a strictly interior set of $\safepoly{\scancenter,\scanpoints}$, though mathematically one could equivalently use the interior of $\safepoly{\scancenter, \scanpoints}$. Thus, $\epsilon$ represents a very small value on the order of numerical computation precision.} for  continuous control over the safe polygon, and the distance to the boundary of a safer scan polygon is defined~as
{\small
\begin{align}\label{eq.distance_to_boundary}
\!\!\bndrydist_{(\scancenter, \scanpoints)} (\pos) \!:= \! \min\limits_{\vect{y} \in \partial\saferpoly{\scancenter, \scanpoints}\!)} \norm{\vect{x} - \vect{y}}.
\end{align}
}%
Note that the distance to the safe scan polygon boundary is a conservative measure of safety since the point-cloud sensor is assumed to have a finite maximum range of $\maxsenserange$, and any scan point $\scanpoint \!\in\! \pointcloud$ with $\norm{\scanpoint - \scancenter} \!= \!\maxsenserange$ might not hit an obstacle but merely reaches the maximum sensing range from the scan center $\scancenter$.
Hence, using \refasm{asm.star_polygon_safety}, we measure the distance  to the sensed obstacle points of a scan $(\scancenter, \scanpoints)$ as
{\small
\begin{align}\label{eq.distance_to_obstacles}
\obstdist_{(\scancenter, \scanpoints)}(\pos) := \min_{\substack{\scanpoint \, \in\, \scanpoints, \,  \norm{\scanpoint - \scancenter} <\, \maxsenserange}} \norm{\pos - \scanpoint}.
\end{align} 
}%

\subsection{Navigation Control over a Star-Convex Scan Region}
\label{subsec.navigation}

Thanks to its star convexity, the collision-free polygon $\safescanpoly(\scancenter, \scanpoints)$ of scan points $\pointcloud= \plist{\scanpoint_0, \ldots, \scanpoint_{\numberofscan}}$ around a collision-free scan center $\scancenter \in \freespace$ allows for a simple and safe navigation strategy to move between any two points, $\vect{x}$ and $\vect{y}$, within the safe polygon $\safescanpoly(\scancenter, \pointcloud)$ as follows:
\begin{enumerate}[i)]
\item If the straight line segment $[\vect{x}, \vect{y}]$ joining $\vect{x}$ and $\vect{y}$ is in $\safescanpoly(\scancenter, \pointcloud)$, then move directly between these points.
\item Otherwise, first move toward a shared visible point from both points $\vect{x}$ and $\vect{y}$ (e.g., the scan center $\scancenter$), until condition (i) holds, and then move to the destination.
\end{enumerate}
As a potential selection for a shared visible point, we consider the following two local navigation strategies: moving through the scan center or moving towards a project goal.

\smallskip

\subsubsection{Move-Through-Scan-Center Navigation Law}
\label{sec.move_through_star_center}
For any given safe robot and goal positions $\pos, \ctrlgoal \in \safescanpoly(\scancenter, \scanpoints)$, we define the move-through-scan-center navigation policy, denoted by $\ctrl_{\ctrlgoal, (\scancenter, \scanpoints)}(\pos)$ associated with the goal $\ctrlgoal$ and the scan $(\scancenter, \scanpoints)$, which specifies the robot's velocity command~as
{\small
\begin{align} \label{eq.local_navigation_policy}
\dot{\pos} = \ctrl_{\ctrlgoal, (\scancenter, \scanpoints)} (\pos)\! :=\! 
\begin{cases}
-\gain \plist{\pos \!-\! \ctrlgoal} & \!\text{if } \blist{\pos, \ctrlgoal} \!\subseteq\! \safescanpoly(\scancenter, \pointcloud) 
 \\
-\gain \plist{\pos \!- \!\scancenter} & \!\text{otherwise},
\end{cases}
\end{align}
}%
where $\gain >0$ is a fixed positive control gain.
By construction, as illustrated in \reffig{fig.local_navigation_over_scan_polygon}, the move-through-scan-center policy asymptotically and safely steers all robot positions $\pos\in \safepoly{\scancenter, \scanpoints}$ to any given goal $\ctrlgoal\in \saferpoly{\scancenter, \scanpoints}$.

\begin{proposition}\label{prop.move_through_scan_center_convergence}
\emph{(Convergence of Move-Through-Scan-Center Policy)}
Given any goal $\vect{y} \!\in\! \saferpoly{\scancenter, \scanpoints}$ in the safe polygon of a scan $(\scancenter, \scanpoints)$, under \refasm{asm.star_polygon_safety}, the move-through-scan-center navigation policy $\ctrl_{\vect{y}, (\scancenter, \scanpoints)}(\pos)$ asymptotically bring any robot position $\pos \!\in\! \safepoly{\scancenter,\scanpoints}$ to the goal $\ctrlgoal$ without collisions, while non-increasing the perimeter of the triangle $\conv(\pos, \scancenter, \ctrlgoal)$ that defines a local navigation cost~as 
\begin{align*}
\navcost_{(\scancenter, \scanpoints)}(\pos, \ctrlgoal) := \norm{\pos - \scancenter} + \norm{\scancenter - \ctrlgoal} + \norm{\pos - \ctrlgoal}.
\end{align*}
\end{proposition}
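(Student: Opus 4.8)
The plan is to exploit three structural facts: the safe polygon $\safepoly{\scancenter, \scanpoints}$ is itself star-convex about $\scancenter$ and contained in $\freespace$; each branch of the policy keeps the trajectory on a line segment that lies inside $\safepoly{\scancenter, \scanpoints}$; and the ``direct'' branch is absorbing and necessarily activates in finite time. First I would record the preliminaries. For any $\pos \in \safepoly{\scancenter, \scanpoints}$ we have $\blist{\scancenter, \pos} \subseteq \erode(\scanpoly(\scancenter, \scanpoints), \radius)$, and every sub-segment $\blist{\scancenter, \vect{z}}$ with $\vect{z} \in \blist{\scancenter, \pos}$ is contained in $\blist{\scancenter, \pos}$; hence $\safepoly{\scancenter, \scanpoints}$ is star-convex about $\scancenter$, and since $\pos \in \blist{\scancenter, \pos}$ we also get $\safepoly{\scancenter, \scanpoints} \subseteq \erode(\scanpoly(\scancenter, \scanpoints), \radius) \subseteq \freespace$ by \refasm{asm.star_polygon_safety}. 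Thus safety reduces to showing the trajectory never leaves $\safepoly{\scancenter, \scanpoints}$. I would also note that the middle term $\norm{\scancenter - \ctrlgoal}$ of $\navcost_{(\scancenter, \scanpoints)}$ is constant, so monotonicity of the cost is equivalent to monotonicity of $V(\pos) := \norm{\pos - \scancenter} + \norm{\pos - \ctrlgoal}$.

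Next I would treat safety and the switching structure. Along the through-center branch, $\dot{\pos} = -\gain(\pos - \scancenter)$ integrates to $\pos(t) = \scancenter + e^{-\gain t}(\pos(0) - \scancenter)$, which stays on $\blist{\scancenter, \pos(0)} \subseteq \safepoly{\scancenter, \scanpoints}$ by star-convexity; along the direct branch, $\pos(t)$ travels along $\blist{\pos(t_0), \ctrlgoal}$, so $\blist{\pos(t), \ctrlgoal} \subseteq \blist{\pos(t_0), \ctrlgoal} \subseteq \safepoly{\scancenter, \scanpoints}$. The latter inclusion simultaneously proves safety of the direct branch and shows that its activation condition persists: once the direct branch is active it remains active and drives $\pos$ exponentially to $\ctrlgoal$.

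The crux is to show the direct branch necessarily activates in finite time when the robot starts on the through-center branch, and this is where the $\epsilon$-margin $\blist{\scancenter, \ctrlgoal} \subseteq \erode(\scanpoly(\scancenter, \scanpoints), \radius + \epsilon)$ encoded by $\ctrlgoal \in \saferpoly{\scancenter, \scanpoints}$ is essential. The key geometric observation is that any convex combination $\alpha \scancenter + \beta \pos + \gamma \ctrlgoal$ equals $(\alpha + \beta)\scancenter + \gamma \ctrlgoal + \beta(\pos - \scancenter)$, a point of $\blist{\scancenter, \ctrlgoal}$ displaced by at most $\norm{\pos - \scancenter}$; hence the whole triangle $\conv(\scancenter, \pos, \ctrlgoal)$ lies within distance $\norm{\pos - \scancenter}$ of $\blist{\scancenter, \ctrlgoal}$. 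Combining this with the elementary erosion fact that $\ball(\vect{z}, \epsilon) \subseteq \erode(A, \radius)$ whenever $\vect{z} \in \erode(A, \radius + \epsilon)$, I would conclude $\conv(\scancenter, \pos, \ctrlgoal) \subseteq \erode(\scanpoly(\scancenter, \scanpoints), \radius)$ as soon as $\norm{\pos - \scancenter} \leq \epsilon$. Since for every $\vect{w} \in \blist{\pos, \ctrlgoal}$ the segment $\blist{\scancenter, \vect{w}}$ lies inside that triangle, this gives $\blist{\pos, \ctrlgoal} \subseteq \safepoly{\scancenter, \scanpoints}$, i.e., the direct branch is active. Because the through-center flow contracts $\norm{\pos - \scancenter}$ to $0$ exponentially, it enters $\ball(\scancenter, \epsilon)$ in finite time, so the switch to the absorbing direct branch --- and hence asymptotic convergence to $\ctrlgoal$ --- is guaranteed from every $\pos(0) \in \safepoly{\scancenter, \scanpoints}$. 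I expect this finite-time activation argument to be the main obstacle; a secondary technicality is to confirm well-posedness of the switched flow, which follows since $\safepoly{\scancenter, \scanpoints}$ is closed, the single switch occurs at a well-defined first time, and no chattering is possible by absorption.

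Finally, for cost monotonicity I would differentiate $V$ along each branch. Writing $\vect{a} := \pos - \scancenter$ and $\vect{b} := \pos - \ctrlgoal$, the through-center branch yields $\dot{V} = -\gain\bigl(\norm{\vect{a}} + \tr{\vect{a}}\vect{b}/\norm{\vect{b}}\bigr)$ and the direct branch yields $\dot{V} = -\gain\bigl(\tr{\vect{b}}\vect{a}/\norm{\vect{a}} + \norm{\vect{b}}\bigr)$; in both cases the reverse Cauchy--Schwarz bound $\tr{\vect{a}}\vect{b} \geq -\norm{\vect{a}}\norm{\vect{b}}$ makes the parenthesis nonnegative, so $\dot{V} \leq 0$. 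As $V$ is continuous in $\pos$ and $\pos(t)$ is continuous across the single switch, $\navcost_{(\scancenter, \scanpoints)}$ is non-increasing along the entire trajectory, which together with the convergence and safety claims establishes the proposition.
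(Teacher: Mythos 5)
Your proof is correct, but it takes a genuinely different route from the paper's. The paper argues at the level of set invariance: it observes that both branches of \refeq{eq.local_navigation_policy} point sub-tangentially into the triangle $\conv(\pos, \scancenter, \ctrlgoal)$, invokes Nagumo's theorem to conclude the triangles are nested and shrinking along the flow (hence the perimeter $\navcost_{(\scancenter, \scanpoints)}$ is non-increasing), asserts finite-time switching from the through-center branch to the direct branch only qualitatively (from the positive clearance of $\scancenter$ and the goal lying strictly inside the safe polygon), and closes with LaSalle's invariance principle using the perimeter as a Lyapunov function. You instead exploit the piecewise-linear structure of the closed-loop flow: explicit exponential solutions on each branch, positive invariance of $\safepoly{\scancenter,\scanpoints}$ via its star-convexity about $\scancenter$, an absorption argument showing the direct branch, once active, stays active (so there is at most one switch and no chattering), and a direct computation $\dot{V} \leq 0$ via Cauchy--Schwarz for the non-constant part $V(\pos) = \norm{\pos - \scancenter} + \norm{\pos - \ctrlgoal}$ of the perimeter. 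The most valuable difference is your finite-time activation argument: the observation that every point of $\conv(\scancenter, \pos, \ctrlgoal)$ lies within distance $\norm{\pos - \scancenter}$ of $\blist{\scancenter, \ctrlgoal}$, combined with the erosion nesting $\ball(\vect{z}, \epsilon) \subseteq \erode(A, \radius)$ for $\vect{z} \in \erode(A, \radius + \epsilon)$, turns the paper's one-sentence claim into a quantitative bound (the switch occurs at latest when $\norm{\pos - \scancenter} \leq \epsilon$, i.e., within time $\tfrac{1}{\gain}\ln(\norm{\pos_0 - \scancenter}/\epsilon)$), and it makes explicit why the goal must lie in $\saferpoly{\scancenter, \scanpoints}$ rather than merely in $\safepoly{\scancenter, \scanpoints}$ --- the $\epsilon$-margin is exactly what guarantees the direct branch activates. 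What the paper's approach buys in exchange is generality and brevity: the Nagumo/LaSalle style does not rely on closed-form branch solutions, transfers directly to the move-to-projected-scan-goal policy of \refprop{prop.move_to_project_goal_convergence}, and yields the nested-triangle (area and perimeter) monotonicity statements in one stroke. One residual technicality you could mention for completeness: $V$ is non-differentiable at $\pos = \scancenter$ and $\pos = \ctrlgoal$, but this is harmless since the through-center flow never reaches $\scancenter$ in finite time (and if $\pos = \scancenter$ initially, the direct branch is already active, since $\blist{\scancenter, \ctrlgoal} \subseteq \erode(\scanpoly(\scancenter, \scanpoints), \radius + \epsilon)$ implies the direct-branch condition at the center), while $\ctrlgoal$ is the equilibrium reached only asymptotically.
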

\begin{proof}
See \refapp{app.move_through_scan_center_convergence}.
\end{proof}

\subsubsection{Move-To-Projected-Scan-Goal Navigation Law} 
\label{sec.move_to_projected_goal}

Alternatively, instead of moving through the scan center, one might aim to move toward the closest point from the scan center to the goal that is visible from the robot's position.
Hence, for any given safe robot and goal positions $\pos, \ctrlgoal \in \safescanpoly(\scancenter, \pointcloud)$, we define the move-to-projected-scan-goal navigation policy, denoted by $\overline{\ctrl}_{\ctrlgoal, (\scancenter, \pointcloud)}(\pos)$ associated with goal $\ctrlgoal$ and scan $(\scancenter, \pointcloud)$, specifying the robot's velocity as
\begin{align}\label{eq.move_to_visible_projected_goal} 
\dot{\pos} &= \overline{\ctrl}_{\ctrlgoal, (\scancenter, \pointcloud)} (\pos) := -\gain \plist{\pos - \proj_{\pos, (\scancenter, \pointcloud)}(\ctrlgoal)}
\end{align}
where $\gain > 0$ is a constant positive control gain and the projected scan goal $\proj_{\pos, (\scancenter, \pointcloud)}(\ctrlgoal)$ is defined as the closest point to the goal $\ctrlgoal$ from the scan center $\scancenter$ that is visible from $\pos$ within the safe scan polygon $\safescanpoly(\scancenter, \scanpoints)$ as
\begin{align}\label{eq.visible_projected_goal}
\proj_{\pos, (\scancenter, \pointcloud)}(\ctrlgoal) := \argmin_{\substack{\overline{\ctrlgoal} \, \in \, \blist{\scancenter, \ctrlgoal}\\ \blist{\pos, \overline{\ctrlgoal}} \, \subseteq \, \safescanpoly(\scancenter, \scanpoints)}} \norm{\overline{\ctrlgoal} - \ctrlgoal}
\end{align}
which is Lipschitz continuous with respect to $\pos$ and $\ctrlgoal$.\footnote{The projected scan goal $\proj_{\pos, (\scancenter, \pointcloud)}(\ctrlgoal)$ is a Lipschitz continuous function of both the robot position $\pos$ and the goal $\ctrlgoal$, since $\clist{\overline{\ctrlgoal} , \in , \blist{\scancenter, \ctrlgoal} \big| \blist{\pos, \overline{\ctrlgoal}} , \subseteq , \safescanpoly(\scancenter, \scanpoints)}$ is a convex set, the metric projection onto a convex set is Lipschitz continuous \cite{webster_Convexity1995}, and a continuous selection of Lipschitz continuous functions is also Lipschitz \cite{liu_JCO1995}.}

\begin{figure}[t]
\centering
\begin{tabular}{@{}c@{\hspace{0.2mm}}c@{\hspace{0.2mm}}c@{\hspace{0.2mm}}c@{}}
\includegraphics[width=0.247\columnwidth]{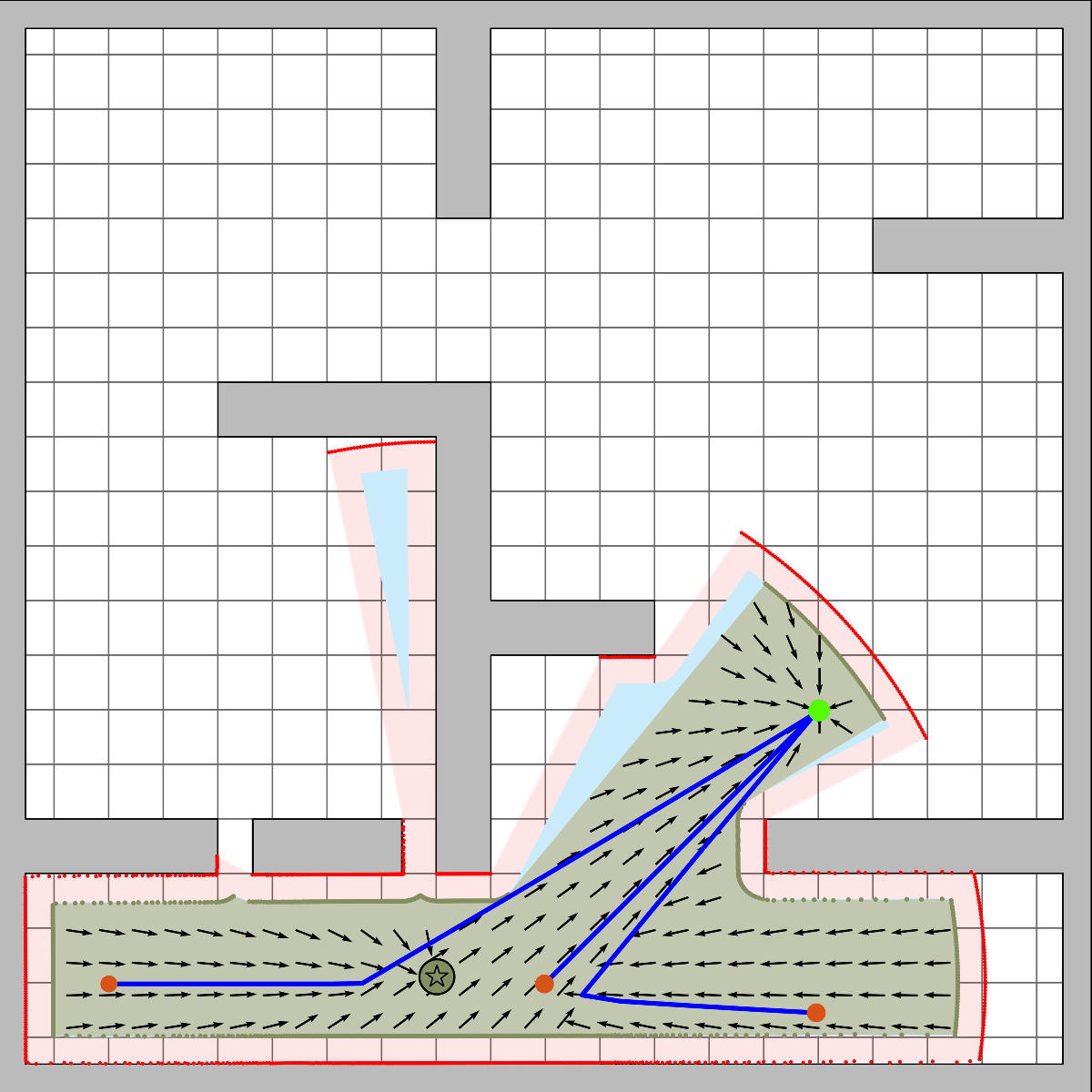}
&
\includegraphics[width=0.247\columnwidth]{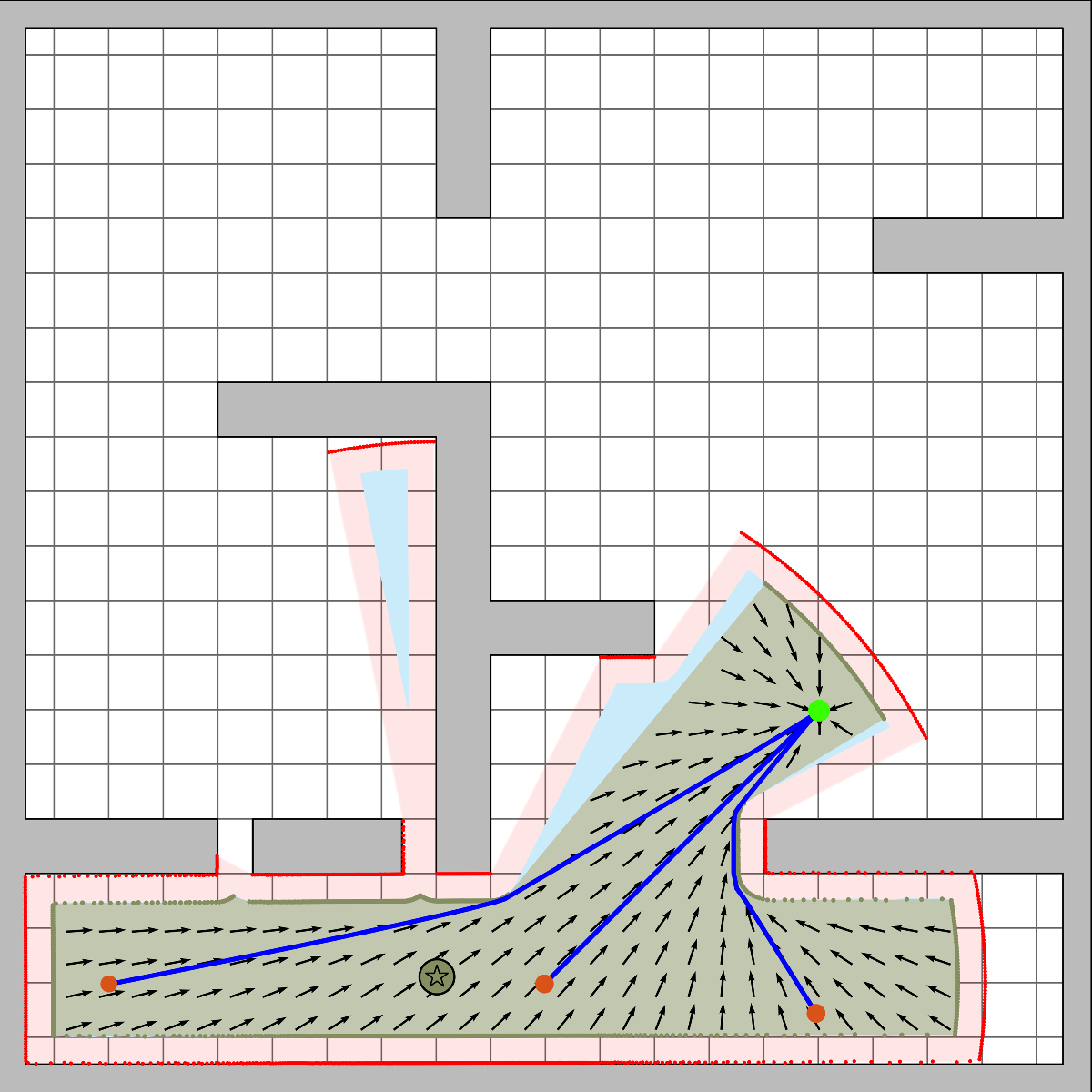}
&
\includegraphics[width=0.247\columnwidth]{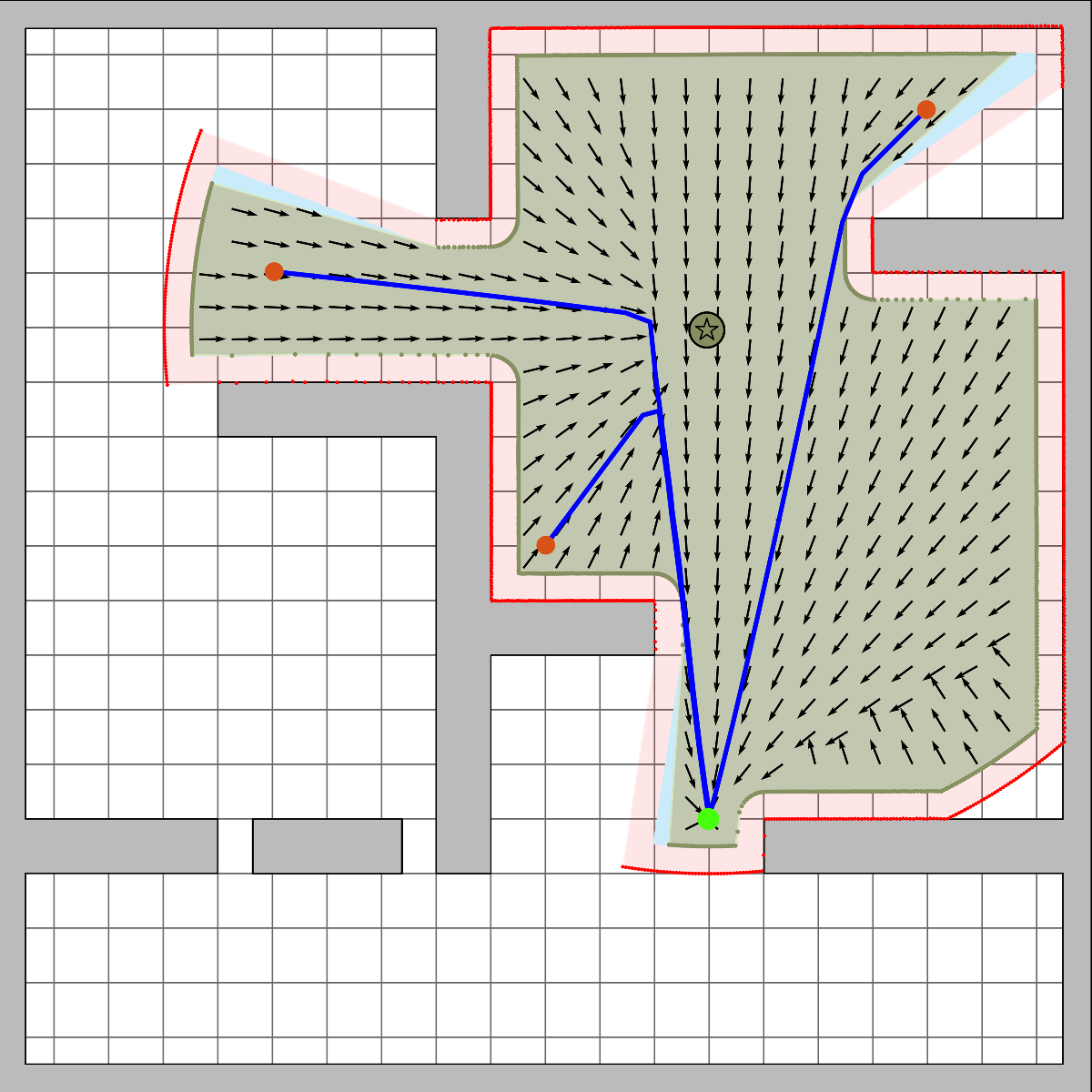}
&
\includegraphics[width=0.247\columnwidth]{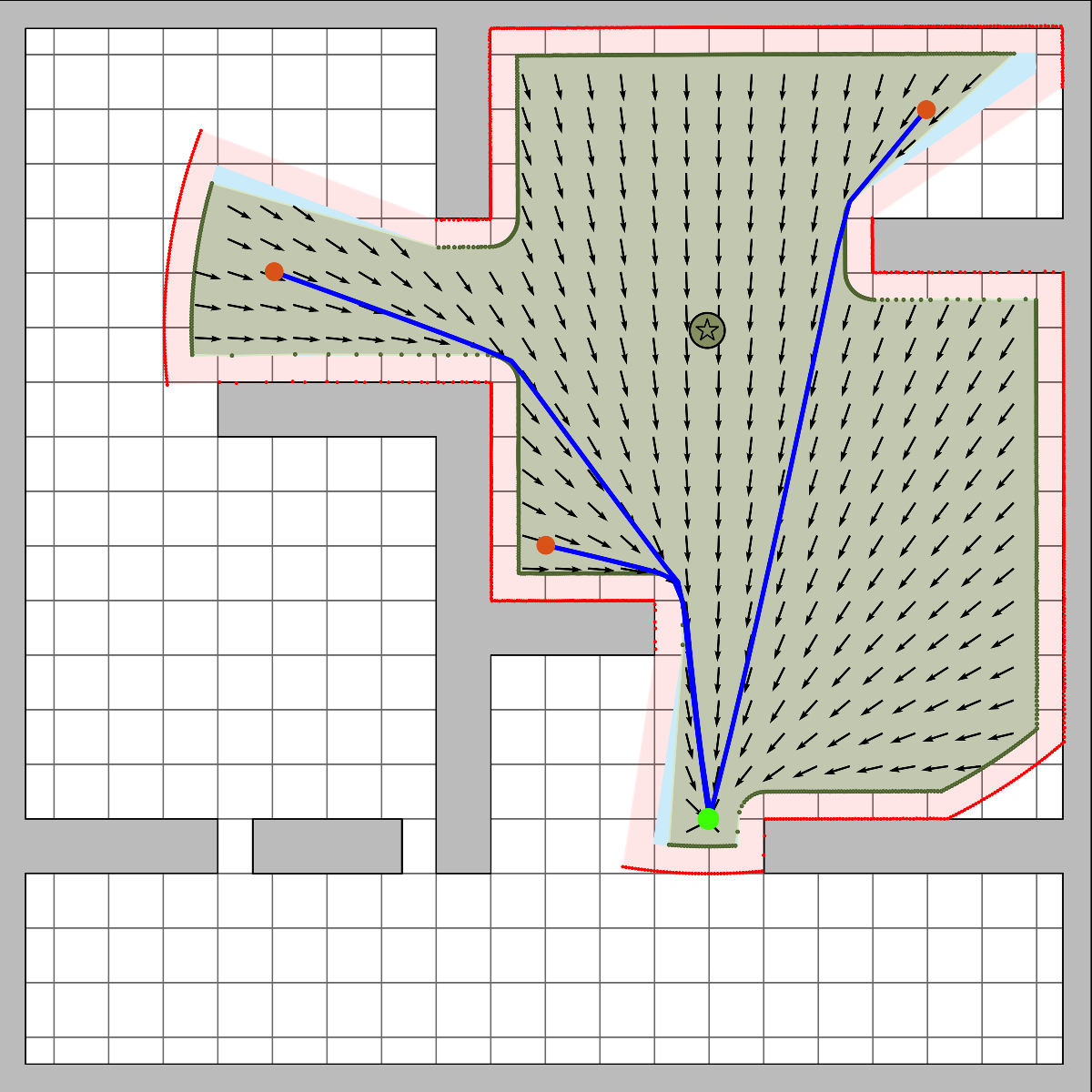}
\\[-2.5mm]
\tiny{(a)} & \tiny{(b)} & \tiny{(c)}& \tiny{(d)}
\end{tabular}
\vspace{-4.5mm}
\caption{Star-convex safe scan polygon (green patch) constructed by eroding the original scan polygon (red patch) and removing invisible points (cyan patch) from the scan center (star-circle). The vector field (black arrows) and example trajectories (blue lines) for (a,c) the move-through-scan-center policy and (b,d) the move-to-projected-scan-goal policy toward a given local goal position (green point).}
\label{fig.local_navigation_over_scan_polygon}
\vspace{-3mm}
\end{figure}

In addition to sharing the same non-increasing local navigation cost $\navcost_{\pos, (\scancenter, \scanpoints)}$ with the move-through-scan-center policy, the move-to-projected-scan-goal policy strictly decreases the distance to the projected scan goal. 

\begin{proposition}\label{prop.move_to_project_goal_convergence}
\emph{(Convergence of Move-To-Projected-Scan-Goal Policy)}
Given any goal $\ctrlgoal \in \saferpoly{\scancenter, \scanpoints}$ in the safe polygon of a scan $(\scancenter, \scanpoints)$, under \refasm{asm.star_polygon_safety}, the move-to-projected-scan-goal navigation policy $\overline{\ctrl}_{\ctrlgoal, (\scancenter, \scanpoints)}(\pos)$ in \refeq{eq.move_to_visible_projected_goal} asymptotically bring all robot position $\pos \in \safescanpoly(\scancenter, \scanpoints)$ to the goal $\ctrlgoal$  while avoiding collisions and decreasing the length of the piecewise straight path between $\pos$ and $\ctrlgoal$ joined through the visible projected goal $\proj_{\pos, (\scancenter, \scanpoints)}(\ctrlgoal)$ as a local navigation cost that is defined as\footnote{
Note that the perimeter $\norm{\pos - \proj_{\pos, (\scancenter, \scanpoints)}(\ctrlgoal)} + \norm{\proj_{\pos, (\scancenter, \scanpoints)}(\ctrlgoal) - \ctrlgoal} + \norm{\pos - \ctrlgoal}$ of the triangle $\conv(\pos,\proj_{\pos, (\scancenter, \scanpoints)}(\ctrlgoal), \ctrlgoal)$ or the perimeter $\norm{\pos - \scancenter}+\norm{\scancenter - \ctrlgoal} + \norm{\pos - \ctrlgoal}$ of the triangle $\conv(\pos, \scancenter, \ctrlgoal)$ might also be used as a local navigation of the move-to-projected-scan-goal policy since they are both non-increasing under the move-to-projected-scan-goal navigation policy.
}%
\begin{align*}\label{eq.move_to_visible_projected_goal_path_length}
\overline{\navcost}_{(\scancenter, \scanpoints)}(\pos, \ctrlgoal) := \norm{\pos \!- \proj_{\pos, (\scancenter, \scanpoints)}\!(\ctrlgoal)\!} + \norm{\proj_{\pos, (\scancenter, \scanpoints)}\!(\ctrlgoal)\! - \! \ctrlgoal}.
\end{align*}
\end{proposition}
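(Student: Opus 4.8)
The plan is to establish the three assertions of the proposition separately: forward invariance of the safe scan polygon (hence collision avoidance), strict decrease of the navigation cost $\overline{\navcost}_{(\scancenter,\scanpoints)}$, and asymptotic convergence to $\ctrlgoal$. Throughout I would abbreviate the projected scan goal as $\cpos := \proj_{\pos,(\scancenter,\scanpoints)}(\ctrlgoal)$, which by \refeq{eq.visible_projected_goal} is a point of the segment $\blist{\scancenter,\ctrlgoal}$ that is visible from $\pos$ within $\safescanpoly(\scancenter,\scanpoints)$. Since $\proj_{\pos,(\scancenter,\scanpoints)}(\ctrlgoal)$ is Lipschitz continuous in $\pos$ (as noted after \refeq{eq.visible_projected_goal}), the closed-loop field $\overline{\ctrl}_{\ctrlgoal,(\scancenter,\scanpoints)}$ is Lipschitz, so trajectories exist and are unique. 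I would first record two elementary inclusions used repeatedly: from $\ctrlgoal \in \saferpoly{\scancenter,\scanpoints}$ we get $\blist{\scancenter,\ctrlgoal} \subseteq \erode(\scanpoly(\scancenter,\scanpoints),\radius)$, whence every point of $\blist{\scancenter,\ctrlgoal}$ lies in $\safescanpoly(\scancenter,\scanpoints)$; and $\safescanpoly(\scancenter,\scanpoints) \subseteq \erode(\scanpoly(\scancenter,\scanpoints),\radius) \subseteq \freespace$ under \refasm{asm.star_polygon_safety}.

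For safety I would argue forward invariance of $\safescanpoly(\scancenter,\scanpoints)$. At every state the velocity $\gain(\cpos-\pos)$ points from $\pos$ toward the visible point $\cpos$, and by the definition of visibility the whole segment $\blist{\pos,\cpos}$ lies in $\safescanpoly(\scancenter,\scanpoints)$. Hence $\cpos-\pos$ is a feasible inward direction, so the field is subtangential to the closed set $\safescanpoly(\scancenter,\scanpoints)$ on its boundary, and Nagumo's theorem yields forward invariance. Since $\safescanpoly(\scancenter,\scanpoints)\subseteq\freespace$, the trajectory stays collision-free.

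The heart of the proof, which I expect to be the main obstacle, is the monotonicity of the projected goal. Fix a time $t$ and let $\pos' = \pos(t) + \gain\,dt\,(\cpos(t)-\pos(t)) + o(dt)$ be the state an instant later, so $\pos'\in\blist{\pos(t),\cpos(t)}$ for small $dt>0$. Because $\blist{\pos',\cpos(t)}\subseteq\blist{\pos(t),\cpos(t)}\subseteq\safescanpoly(\scancenter,\scanpoints)$, the point $\cpos(t)$ remains visible from $\pos'$ and stays feasible in the projection \refeq{eq.visible_projected_goal}, so $\norm{\cpos(t+dt)-\ctrlgoal}\le\norm{\cpos(t)-\ctrlgoal}$; as both points lie on $\blist{\scancenter,\ctrlgoal}$, this forces $\cpos(t+dt)\in\blist{\cpos(t),\ctrlgoal}$. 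Combining the triangle inequality $\norm{\pos'-\cpos(t+dt)}\le\norm{\pos'-\cpos(t)}+\norm{\cpos(t)-\cpos(t+dt)}$ with the collinearity identity $\norm{\cpos(t)-\cpos(t+dt)}+\norm{\cpos(t+dt)-\ctrlgoal}=\norm{\cpos(t)-\ctrlgoal}$ gives $\overline{\navcost}(\pos')\le\norm{\pos'-\cpos(t)}+\norm{\cpos(t)-\ctrlgoal}$, and since $\norm{\pos'-\cpos(t)}=(1-\gain\,dt)\norm{\pos(t)-\cpos(t)}+o(dt)$, the upper Dini derivative satisfies $D^{+}\overline{\navcost}(\pos)\le-\gain\norm{\pos-\cpos}$, which is strictly negative whenever $\pos\neq\cpos$.

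Finally, for convergence I would identify the stationary set. If $\pos=\cpos$ then $\pos\in\blist{\scancenter,\ctrlgoal}$, and by the first inclusion every point of $\blist{\scancenter,\ctrlgoal}$ — in particular $\ctrlgoal$ — is visible from $\pos$, forcing $\cpos=\ctrlgoal$ and hence $\pos=\ctrlgoal$. Therefore $\clist{\pos : D^{+}\overline{\navcost}(\pos)=0}=\clist{\ctrlgoal}$. Since $\overline{\navcost}$ is continuous, nonnegative, vanishing only at $\ctrlgoal$, and strictly decreasing along trajectories off $\ctrlgoal$ within the compact positively invariant set $\safescanpoly(\scancenter,\scanpoints)$, LaSalle's invariance principle gives asymptotic convergence of every $\pos\in\safescanpoly(\scancenter,\scanpoints)$ to $\ctrlgoal$ without collisions. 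The non-increase of the triangle perimeter $\navcost_{(\scancenter,\scanpoints)}$ asserted in the surrounding text follows from an analogous sign computation to \refprop{prop.move_through_scan_center_convergence}, now carried out with the motion direction $\cpos-\pos$ rather than $\scancenter-\pos$.
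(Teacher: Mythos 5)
Your proposal is correct, and its skeleton matches the paper's own proof: safety via Nagumo sub-tangentiality and positive invariance of $\safescanpoly(\scancenter,\scanpoints)$, monotonicity of the projected goal along $\blist{\scancenter,\ctrlgoal}$, the path length $\overline{\navcost}_{(\scancenter,\scanpoints)}$ as a Lyapunov function, and a LaSalle/Lyapunov conclusion. Where you genuinely differ is in how the decrease is established. Writing $\overline{\ctrlgoal}(t) := \proj_{\pos(t),(\scancenter,\scanpoints)}(\ctrlgoal)$, the paper's proof posits that the projected goal evolves as $\dot{\overline{\ctrlgoal}} = -\alpha(\pos)\plist{\overline{\ctrlgoal}-\ctrlgoal}$ for some $\alpha(\pos)\geq 0$ and formally differentiates the cost in a footnote --- a computation that presumes differentiability of a map that is only guaranteed Lipschitz (and whose displayed norm derivatives carry spurious factors of $2$, harmless for the sign). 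You instead derive the upper Dini bound $D^{+}\overline{\navcost} \leq -\gain\norm{\pos-\overline{\ctrlgoal}}$ from pure geometry: the current projection stays feasible from the advanced position, which forces $\overline{\ctrlgoal}(t+h) \in \blist{\overline{\ctrlgoal}(t),\ctrlgoal}$, and the triangle inequality plus collinearity do the rest, with no differentiability assumption; you also make explicit the stationary-set step ($\pos = \overline{\ctrlgoal}$ forces $\pos = \ctrlgoal$, via $\blist{\scancenter,\ctrlgoal} \subseteq \safescanpoly(\scancenter,\scanpoints)$) that the paper asserts only implicitly. Your route is thus somewhat more elementary and more rigorous than the original; what the paper's version buys is a shorter, vector-field-level computation. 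One wrinkle you should patch: $\pos(t+h)$ lies on $\blist{\pos(t),\overline{\ctrlgoal}(t)}$ only up to $o(h)$, and segment-visibility constraints are not robust to arbitrary small perturbations, so the feasibility claim does not literally apply at $\pos(t+h)$. Run the argument at the exact segment point $\tilde{\pos}(h) = \pos(t) + h\gain\plist{\overline{\ctrlgoal}(t)-\pos(t)}$, where it is exact, and transfer the estimate to $\pos(t+h)$ using the Lipschitz continuity of the projection (hence of $\overline{\navcost}_{(\scancenter,\scanpoints)}$ in $\pos$) that you already invoked at the outset; this costs only $o(h)$ and closes the argument.
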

\begin{proof}
See \refapp{app.move_to_project_goal_convergence}.
\end{proof}

\section{\!Motion Planning {\small over} Graphs {\small of} Scan Regions}
\label{sec.planning_over_graph_of_scans}

In this section, we describe how to construct a graph of star-convex scan polygons for global motion planning over the spatial cover (i.e., union) of these polygonal regions. 
Optimizing the visit sequence of these scan polygons enables systematic and effective sequential composition \cite{burridge_rizzi_koditschek_IJRR1999} of local navigation policies, ensuring safe and robust global navigation across their collectively covered domains.

\subsection{Motion Graph of Safe Scan Polygons}
\label{sec.motion_graph}

One can define various notions of a graph for a collection of spatial regions, such as the classical approach based on set intersection \cite{konolige_marder_marthi_ICRA2011}. 
To simplify the algorithmic design and implementation complexity in practice, we choose to construct a graph of star-convex scan polygons based on the reciprocal safe visibility of scan centers, \mbox{as illustrated in \reffig{fig.global_navigation_over_scan_polygons}.}

\begin{definition}\label{def.motion_graph}
\emph{(Motion Graph of Safe Scan Polygons)} The motion graph $\graph(\scanset):=(\vertexset,\edgeset)$ of an ordered set of scans $\scanset=\plist{(\scancenter_1, \scanpoints_1), \ldots, (\scancenter_m, \scanpoints_m)}$, with scan centers $\scancenter_1, \ldots, \scancenter_m$ and scan points $\scanpoints_1, \ldots, \scanpoints_m$, is an undirected graph where
\begin{itemize}

\item Vertices: Each vertex $i \in \vertexset = \clist{1, \ldots, m}$ corresponds to a pair $(\scancenter_i, \pointcloud_i)$ of scan center $\scancenter_i$ and scan points $\scanpoints_i$.

\item Edges: An edge $(i,j) \in \edgeset\subseteq \vertexset \times \vertexset$ associated with scans $(\scancenter_i, \pointcloud_i)$ and $ (\scancenter_j, \pointcloud_j)$ exists if and only if the scan centers can be safely visible to each other, i.e.,\footnote{Under \refasm{asm.star_polygon_safety}, the conditions in \refeq{eq.edge_connectivity} are equivalent. To minimize reliance on this assumption, we use both conditions to enhance robustness and reliability in practice, ensuring the undirected connectivity of the graph.}
\begin{subequations}\label{eq.edge_connectivity}
\begin{align}
\scancenter_i &\in \saferpoly{\scancenter_j, \scanpoints_j} 
\\
\scancenter_j & \in \saferpoly{\scancenter_i, \scanpoints_i}  
\end{align} 
\end{subequations}
where $\saferpoly{\scancenter, \scanpoints}$ is the safer scan polygon in \refeq{eq.safe_scan_polygon}.

\end{itemize} 
\end{definition}

\noindent It is important to observe that, by construction, the spatial embedding of the motion graph of safe scan polygons by connecting the adjacent scan centers via straight line segments yields a collision-free path in the free space $\freespace$ of the robot, as illustrated in \reffig{fig.global_navigation_over_scan_polygons}, i.e.,
\begin{align}
\bigcup_{\substack{(i,j) \in E\\ \graph(\scanset) = (\vertexset,\edgeset)}} \blist{\scancenter_i, \scancenter_j} \subseteq \freespace.
\end{align}
Hence, the motion graph of scan polygons can be considered a high-level topological roadmap \cite{lavalle_PlanningAlgorithms2006} for motion planning with a safe geometric embedding in the robot's free space.
For example, as a planning heuristic, one might assign to each edge $(i,j) \in \edgeset$ of the motion graph $\graph(\scanset)=(\vertexset, \edgeset)$ a weight equal to the Euclidean distance $\norm{\scancenter_i - \scancenter_j}$ between the centers of adjacent scans $(\scancenter_i, \pointcloud_i)$ and $(\scancenter_j, \pointcloud_j)$. 
Search-based optimal motion planning, such as A* or Dijkstra's algorithm, can then be performed over the motion graph of scan polygons to find a route that visits a set of local scan regions in an optimal order, heuristically minimizing the travel distance of a mobile robot based on scan-center distances.
Accordingly, we below present an integrated planning and control approach, based on the sequential composition \cite{burridge_rizzi_koditschek_IJRR1999} of local scan navigation policies in \refsec{sec.local_navigation}, for  safe and robust global robot navigation.

\subsection{\!\!Integrated Planning \!{\small\&} \!Control via Graphs of Scan Polygons\!}

In this part, we describe a feedback motion planning approach to generate a piecewise continuous velocity field in the robot's free space for safe global navigation over the union of safe scan polygons using optimal routes in their motion graph.
For a given ordered list of scans, denoted by $\scanset = \plist{(\scancenter_1, \scanpoints_1), \ldots, (\scancenter_m, \scanpoints_m)}$, we assume that: 

\begin{figure}[t]
\centering
\begin{tabular}{@{}c@{\hspace{0.5mm}}c@{\hspace{0.5mm}}c@{}}
\includegraphics[width=0.325\columnwidth]{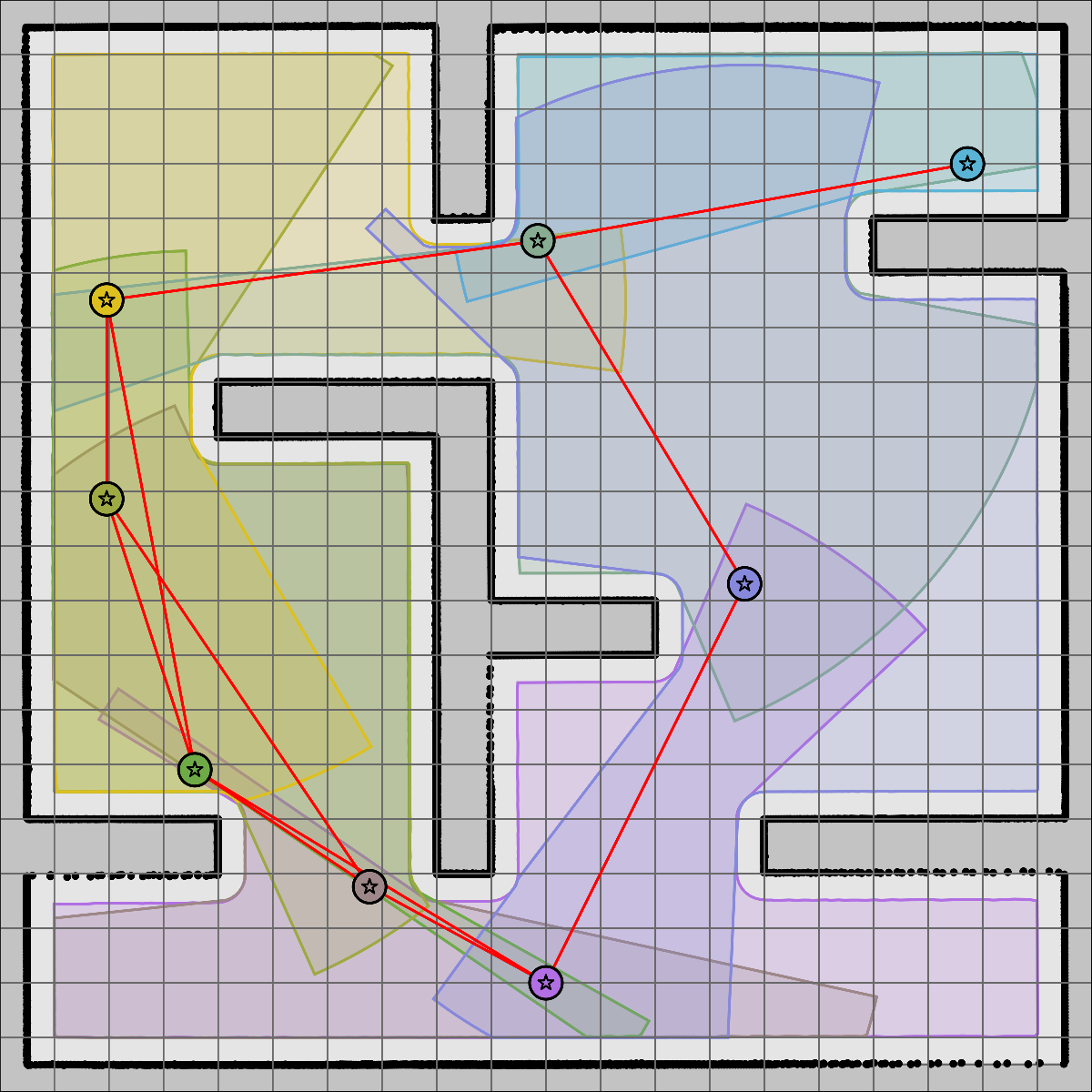}
&
\includegraphics[width=0.325\columnwidth]{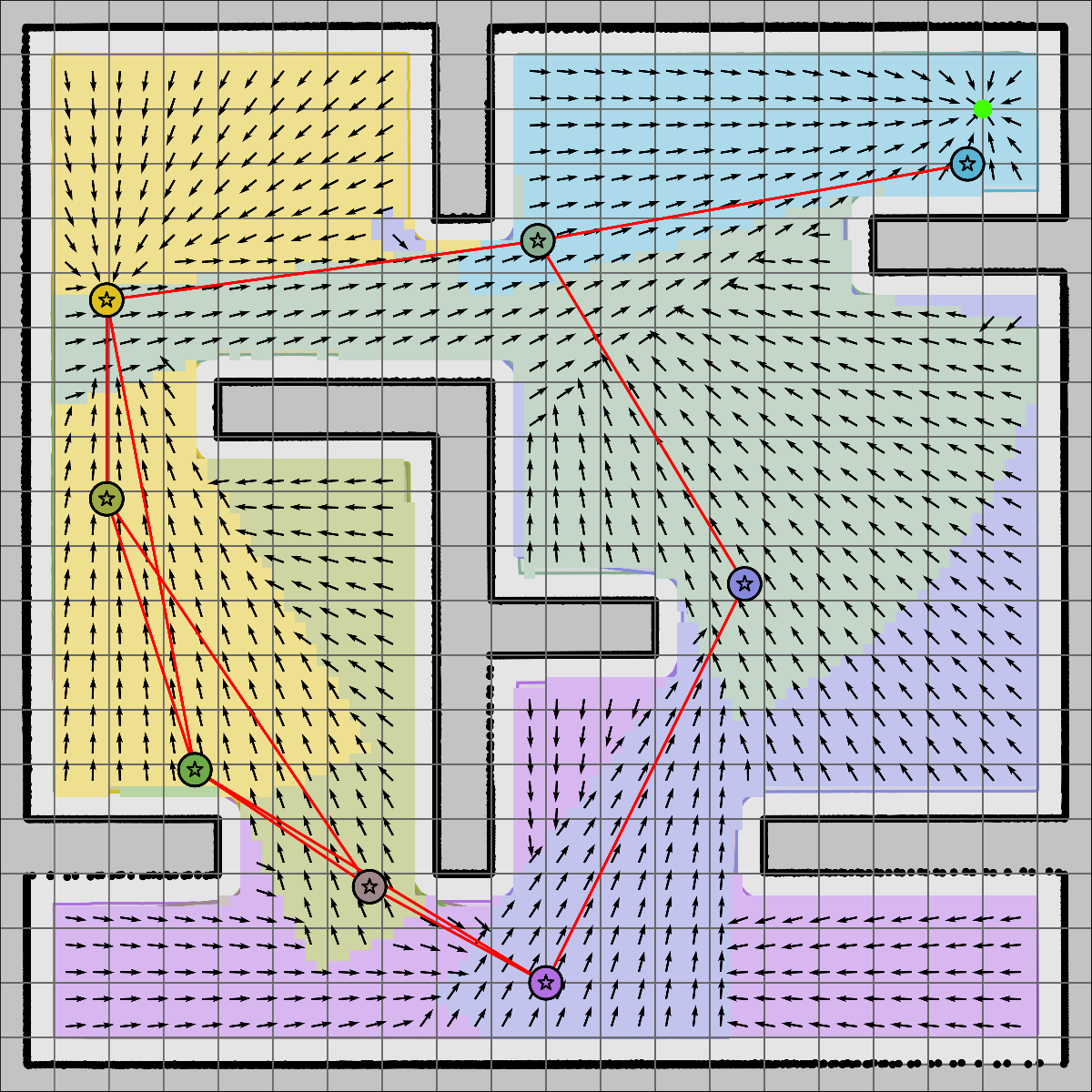}
&
\includegraphics[width=0.325\columnwidth]{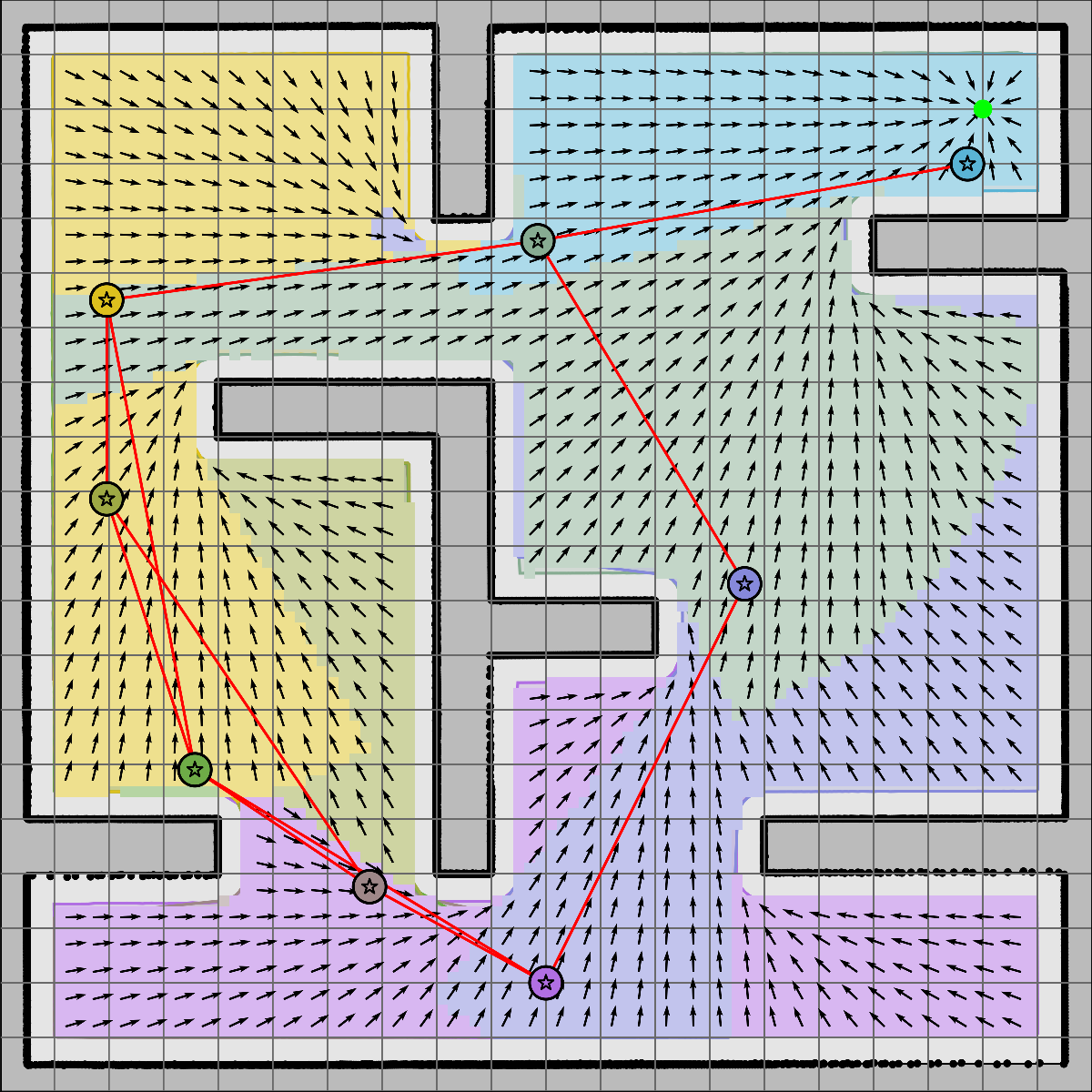}
\end{tabular}
\vspace{-1mm}
\caption{(left) Graph of star-convex scan regions (colored polygon patches with circular scan center icons). 
The sequential composition of (middle) the move-to-star-center vector field based on the distance-to-scan-center local cost and (right) the move-to-projected-scan-goal vector field  based on the distance-to-projected-scan-goal local cost, where colored regions highlight associated active scan regions.}
\label{fig.global_navigation_over_scan_polygons}
\vspace{-2mm}
\end{figure}

\indent $\bullet$ \emph{Local Cost Heuristic:} The local travel cost between any pair of points $\vect{x}, \vect{y} \! \in \! \safepoly{\scancenter, \scanpoints}$ within the safe polygon of a scan $(\scancenter, \scanpoints)$ can be heuristically measured by a positive function, denoted by $\localcost_{(\scancenter, \scanpoints)}(\vect{x}, \vect{y})$.
For example, one can use a constant travel cost (i.e., $\localcost_{(\scancenter, \scanpoints)}(\vect{x}, \vect{y}) \!=\! 1$) to describe a uniform regional cost, or use the distance to the scan center (i.e., $\localcost_{(\scancenter, \scanpoints)}(\vect{x}, \vect{y}) = \norm{\vect{x} \!-\! \scancenter} \!+\! \norm{\scancenter \!-\! \vect{y}}$) to capture a navigation behavior corresponding to the move-to-scan-center policy in \refsec{sec.move_through_star_center}, and see \reftab{tab.local_transition_cost}.\reffn{fn.LocalCost} 

\indent $\bullet$ \emph{Local Navigation Policy:} The fully-actuated velocity-controlled robot model in \refeq{eq.equation_of_motion} can be asymptotically brought from any start position $\vect{x} \! \in \! \safepoly{\scancenter, \scanpoints}$ within the positively invariant safe polygon of a scan $(\scancenter, \scanpoints)$ to any goal $\vect{y} \! \in \! \saferpoly{\scancenter, \scanpoints}$  using a local scan navigation policy $\ctrl_{(\scancenter, \scanpoints)}(\pos)$ associated with a non-increasing local navigation cost $\navcost_{(\scancenter, \scanpoints)}(\pos, \ctrlgoal)$. 
For example, one can use the move-to-scan-center or move-to-projected-scan-goal policy in \refsec{sec.local_navigation}, or any other navigation policy with similar properties.
Note that to align the planning heuristic with the control effort, one can select $\localcost_{\scancenter, \scanpoints}(\vect{x}, \vect{y}) = \navcost_{\scancenter, \scanpoints}(\vect{x}, \vect{y})$, which may be preferable in some cases but is not technically necessary, allowing for more complex planning strategies.

\begin{table}[b]
\vspace{-3mm}
\caption{Local Transition Costs for Graphs of Scan Regions}
\label{tab.local_transition_cost}
\centering
\vspace{-3mm}
\begin{tabular}{@{}c@{\hspace{1mm}}c@{}}
\hline
\hline 
\\[-2mm]

Cost Type & $\localcost_{(\scancenter, \scanpoints)}(\pos, \ctrlgoal)$
\\
\hline
\\[-2mm]
Uniform Constant Cost  & $1$
\\
Distance to Goal  & $\norm{\pos - \ctrlgoal}$
\\
Centroidal Distance & $\norm{\pos - \scancenter} + \norm{\scancenter - \ctrlgoal}$
\\
Projected Goal Distance \!\!\! & $\norm{\pos \!-\! \proj_{\pos, (\scancenter, \scanpoints)}\!(\ctrlgoal)} \!+\! \norm{\proj_{\pos, (\scancenter, \scanpoints)}\!(\ctrlgoal) \!-\! \ctrlgoal}$
\\
Centroidal Perimeter Cost &  $\norm{\pos - \ctrlgoal} + \norm{\pos - \scancenter} + \norm{\scancenter - \ctrlgoal}$
\\
Projected Perimeter Cost & $\norm{\pos \!-\! \ctrlgoal} \!+\! \norm{\pos \!-\! \proj_{\pos, (\scancenter, \scanpoints)}\!(\ctrlgoal)} \!+\! \norm{\proj_{\pos, (\scancenter, \scanpoints)}\!(\ctrlgoal) \!-\! \ctrlgoal}$ 
\\
Sym. Proj. Goal Distance  \!\!\!\!&  $\norm{\pos \!-\! \proj_{\pos, (\scancenter, \scanpoints)}\!(\ctrlgoal)} + \norm{\ctrlgoal \!-\! \proj_{\ctrlgoal, (\scancenter, \scanpoints)}\!(\pos)}$
\\
\hline
\\[-2.5mm]
\hline
\end{tabular}
\end{table}

Accordingly, in \refalg{alg.planning_over_star_convex_regions}, using a Dijkstra-like optimal graph search  over the motion graph $\graph(\scanset)$ (\refdef{def.motion_graph}) of the scan collection $\scanset = \plist{(\scancenter_1, \scanpoints_1), \ldots, (\scancenter_m ,\scanpoints_m)}$ with the local cost heuristic $\localcost_{(\scancenter, \scanpoints)}(\vect{x}, \vect{y})$, we determine the optimal travel cost $\policycost_{\goal, \scanset}(i)$ and the optimal local goal assignment $\policygoal_{\goal, \scanset}(i)$ for traversing through the safe polygon of a scan $(\scancenter_i, \scanpoints_i)$ towards a given global goal $\goal \in \bigcup_{i=1}^{m}\saferpoly{\scancenter_i, \scanpoints_i}$, which satisfies the following Bellman's optimality condition
{\small
\begin{align*}
\policycost_{\goal, \scanset}(i) & \leq  \policycost_{\goal, \scanset}(j) \\
& \hspace{10mm}+ \underbrace{\localcost_{(\scancenter_j, \scanpoints_j)}(\scancenter_i, \policygoal_{\goal,\scanset}(j))}_{\substack{\text{the local cost of moving from the scan center $\scancenter_i$} \\ \text{to the local goal of the scan $(\scancenter_j, \scanpoints_j)$} \\ \text{through safe scan polygon $\safepoly{\scancenter_j, \scanpoints_j}$}}}
\end{align*}
}%
for all $i = 1, \ldots, m$ and its neighbors $j \! \in \! \mathrm{neighbor}_{\graph(\scanset)}(i)$ of scan $(\scancenter_i, \scanpoints_i)$ in the motion graph $\graph(\scanset)$, where the inequality is strict for any scan $(\scancenter_i , \scanpoints_i)$ containing the global goal $\goal \in \saferpoly{\scancenter_i, \scanpoints}$ and is tight otherwise.

\begin{algorithm}[t]
\small
\caption{\small \mbox{Motion Planning over Graphs of Scan Polygons:} 
\\ \mbox{\hspace{17mm}} \mbox{Optimal Cost and Goal Assignment}}
\label{alg.planning_over_star_convex_regions}
\KwIn{%
\mbox{$\scanset = \plist{(\scancenter_1, \pointcloud_1),  \ldots, (\scancenter_m, \pointcloud_m)}$: A Collection of Scans}  \\ 
\hspace{8.5mm} $\goal \in \bigcup_{i=1}^{m} \saferpoly{\scancenter_i, \scanpoints_i}$: Global Goal Position \\ 
\mbox{\hspace{8.5mm} $\localcost_{(\scancenter, \scanpoints)}(\vect{x}, \vect{y})$: Local Cost over a Scan Polygon  }
}
\KwOut{%
\mbox{$\policycost_{\goal, \scanset}(i)$: Optimal Cost of a Scan Polygon}
\\
\mbox{\hspace{11mm} $\policygoal_{\goal, \scanset}(i)$: Local Goal of a Scan Polygon}
\vspace{-2mm}
\\
\hrulefill
}
$\graph(\scanset)=(\vertexset, \edgeset) \gets \mathrm{motiongraph}(\scanset)$ \\
$\mathrm{scanlist} \gets \varnothing$ \\
\For{$i \gets 1$ \KwTo $m$}{
$\policycost_{\goal, \scanset}(i) \gets \infty$ \\
$\policygoal_{\goal, \scanset}(i) \gets \scancenter_i$\\
\If{$\goal \in \saferpoly{\scancenter_i, \scanpoints_i}$}{
$\policycost_{\goal, \scanset}(i) \gets \localcost_{(\scancenter_i, \scanpoints_i)}(\scancenter_i, \goal)$  \\
$\policygoal_{\goal, \scanset}(i) \gets \goal$ \\
$\mathrm{scanlist} \gets \mathrm{scanlist} \cup \clist{i}$
}
}
\While{$\mathrm{scanlist} \neq \varnothing$}{
$i \gets \argmin\limits_{i \, \in \, \mathrm{scanlist}} \, \policycost_{\goal, \scanset}(i)$\\
$\mathrm{scanlist} \gets \mathrm{scanlist} \setminus \clist{i}$\\
\For{\emph{\textbf{all}} $j \in \mathrm{neighbor_{\graph(\scanset)}}(i)$}{
\mbox{$\mathrm{tempcost} \gets \localcost_{(\scancenter_i, \scanpoints_i)}(\scancenter_j, \policygoal_{\goal, \scanset}(i))$}\\
\If{\mbox{$\policycost_{\goal, \scanset}(j) \!> \policycost_{\goal,\scanset}(i) \!+\! \mathrm{tempcost}$}}{
\mbox{$\policycost_{\goal,\scanset}(j) \gets \policycost_{\goal,\scanset}(i) + \mathrm{tempcost}$}\\
$\policygoal_{\goal,\scanset}(j) \gets \scancenter_i$ \\
$\mathrm{scanlist} \gets \mathrm{scanlist} \cup \clist{j}$
}
}

}
\Return{$\policycost_{\goal, \scanset}, \policygoal_{\goal, \scanset}$}
\end{algorithm}

\addtocounter{footnote}{1}
\footnotetext{\label{fn.LocalCost}Note that a local cost heuristic might be asymmetric, e.g., the distance to the projected goal and the projected perimeter cost in \reftab{tab.local_transition_cost}.}

Safe scan polygons, by design, need to have overlaps in order to generate a connected motion graph \mbox{(\refdef{def.motion_graph})}. 
Hence, while navigating towards the global goal $\goal$, the robot's position might fall within more than one scan region.
To systematically and deterministically select a unique scan containing the robot position, we use the optimal travel cost $\policycost_{\goal, \scanset}$ of scans, to assign each robot position to a scan, which yields a non-overlapping, mutually exclusive, and exhaustive tessellation of the union of safe scan polygons into non-overlapping subregions (tiles), as seen in \reffig{fig.global_navigation_over_scan_polygons}.

\addtocounter{footnote}{1}
\footnotetext{\label{fn.zeno}To avoid infinitely many Zeno switchings between equally good scan policies in finite time, we assume that the active scan selection in \refeq{eq.active_scan} returns the smallest scan index among equally good and optimal scan policies.}

\begin{definition}\label{def.active_scan}
\emph{(Active Scan Polygon and Global Navigation Policy)}
For any set of scans $\scanset = \plist{(\scancenter_1, \pointcloud_1), \ldots, (\scancenter_m, \pointcloud_m)}$, the active scan index, denoted by $\activepolicy_{\goal, \scanset}(\pos)$, for a robot positioned at  $\pos \in \bigcup_{i=1}^{m} \safescanpoly(\scancenter_i, \pointcloud_i)$ moving towards a goal $\goal \in \bigcup_{i=1}^{m} \saferpoly{\scancenter_i, \scanpoints_i}$, is defined as the index of a scan containing the robot position within its safe scan polygon and ensuring the minimum total travel cost over the motion graph $\graph(\scanset)$ to the goal $\goal$ as\reffn{fn.zeno}~\reffn{fn.online_sequential_composition}
{\small
\begin{align}\label{eq.active_scan}
\activepolicy_{\goal, \scanset}(\pos)  :=  \!\! \argmin_{\substack{ i \, \in \clist{1, \ldots, m}\\ \pos \,\in\, \safescanpoly(\scancenter_i, \pointcloud_i)}}  \!\! \policycost_{\goal, \scanset}(i)  
\end{align}  
}%
based on the optimal cost $\policycost_{\goal,\scanset}(i)$ of the scan $(\scancenter_i, \scanpoints_i)$ as determined in \refalg{alg.planning_over_star_convex_regions}. 
Accordingly, we design a global feedback navigation policy to safely steer the robot position $\pos $ towards the goal  $\goal$ using an active convergent local scan navigation policy $\ctrl_{\vect{y}, (\scancenter, \pointcloud)}$ as
\begin{align}\label{eq.global_navigation_policy}
\dot{\pos} &= \ctrl_{\goal, \scanset} (\pos) =  \ctrl_{\ctrlgoal^*, (\scancenter_{i^*}, \scanpoints_{i^*})}(\pos)  
\end{align}
where  $i^* \!= \activepolicy_{\goal, \scanset}(\pos)$ denotes the active scan index and  $\ctrlgoal^* \!= \policygoal_{\goal, \scanset}(i^*)$ is the associated local scan goal.
\end{definition}

\addtocounter{footnote}{1}
\footnotetext{\label{fn.online_sequential_composition}Alternatively, one can select an active scan policy by using the non-increasing navigation cost $\navcost_{(\scancenter, \scanpoints)}(\pos, \ctrlgoal)$ of the local scan navigation policy $\ctrl_{\ctrlgoal, (\scancenter, \scanpoints)}(\pos)$  with the optimal travel cost $\policycost_{\goal,\scanset}(i)$ and the optimal local goal assignment $\policygoal_{\goal,\scanset}(i)$  of the scan $(\scancenter_i ,\scanpoints_i)$ as
\vspace{-4mm}
\begin{align*}
\activepolicy_{\goal, \scanset}(\pos) \! =\! \hspace{-4mm} \argmin_{\substack{i \in \clist{1, \ldots, m}\\ \pos \in \safescanpoly(\scancenter_i, \pointcloud_i)}} \hspace{-2mm} \begin{array}{@{}l@{}}
\\
\policycost_{\goal, \scanset}(i) \\
\quad  + \navcost_{(\scancenter_i, \scanpoints_i)}\!(\pos, \policygoal_{\goal, \scanset}\!(i)\!)
\end{array}
\end{align*}
which might exhibit Zeno-like many switching between local navigation policies in a sliding-mode control fashion around the policy boundaries, unless there is a strong alignment between planning, control and robot dynamics.
For example, the move-to-projected-scan-goal policy $\overline{\ctrl}_{\ctrlgoal, (\scancenter, \scanpoints)}$ in \refeq{eq.move_to_visible_projected_goal} for the fully-actuated kinematic robot model in \refeq{eq.equation_of_motion} with $\localcost_{(\scancenter, \scanpoints)}(\pos, \ctrlgoal) = \overline{\navcost}_{(\scancenter, \scanpoints)}(\pos, \ctrlgoal) = \norm{\pos \!-\! \proj_{\pos, (\scancenter, \scanpoints)}(\ctrlgoal)} \!+\! \norm{\proj_{\pos, (\scancenter, \scanpoints)}(\ctrlgoal) \!-\! \ctrlgoal}$ ensure a finite number of switching between local policies.  
}

By construction, the global convergence and safety of the sequential composition of local navigation policies is inherited from the safe convergence of individual policies~\cite{burridge_rizzi_koditschek_IJRR1999}.

\begin{theorem}\label{thm.sequential_composition_convergence}
\emph{(Global Convergence of Sequential Composition of Local Scan Navigation Policies)} Given a set of scans $\scanset = \plist{(\scancenter_1, \pointcloud_1), \ldots, (\scancenter_m, \pointcloud_m)}$, if their motion graph $\graph(\scanset)$ is connected, under \refasm{asm.star_polygon_safety}, the global feedback navigation policy  $\ctrl_{\goal,\scanset}(\pos)$ in \refeq{eq.global_navigation_policy} asymptotically brings all safe robot position $\pos \in \bigcup_{i=1}^{m} \safescanpoly(\scancenter_i, \pointcloud_i)$ in its positively invariant domain $\bigcup_{i=1}^{m} \safescanpoly(\scancenter_i, \pointcloud_i)$ to any goal position $\goal \! \in\! \bigcup_{i=1}^{m} \saferpoly{\scancenter_i, \pointcloud_i}$ without collisions along the way.
\end{theorem}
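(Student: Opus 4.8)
The plan is to instantiate the sequential-composition paradigm of \cite{burridge_rizzi_koditschek_IJRR1999}, for which I must assemble three facts: (i) each local policy is safe and convergent on its own domain, (ii) the safe scan polygons cover a positively invariant domain, and (iii) the optimal cost-to-go $\policycost_{\goal,\scanset}$ induces a well-founded ``prepares'' order whose unique sink is the goal-containing scan. Facts (i) and (ii) are essentially supplied by the earlier results: \refprop{prop.move_through_scan_center_convergence} and \refprop{prop.move_to_project_goal_convergence} show that each safe scan polygon $\safescanpoly(\scancenter_i,\pointcloud_i)$ is collision-free (a subset of $\freespace$ via erosion under \refasm{asm.star_polygon_safety}) and positively invariant under its own local policy, and that the active local policy asymptotically drives the robot to its assigned local goal $\policygoal_{\goal,\scanset}(i)$ with a non-increasing local navigation cost $\navcost_{(\scancenter_i,\scanpoints_i)}$.

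First I would settle safety. Since each safe scan polygon lies in $\freespace$ and is positively invariant under its own policy, and since at every switching instant the robot position belongs, by definition of $\activepolicy_{\goal,\scanset}$ in \refeq{eq.active_scan}, to the newly selected active scan's safe polygon, the union $\bigcup_{i=1}^{m}\safescanpoly(\scancenter_i,\pointcloud_i)$ is positively invariant and the entire trajectory remains collision-free. This reduces the theorem to proving convergence.

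Second, I would construct a discrete Lyapunov argument on the cost-to-go $V(\pos) := \policycost_{\goal,\scanset}(\activepolicy_{\goal,\scanset}(\pos))$, which is the pointwise minimum optimal cost over all scans whose safe polygon contains $\pos$. By the Bellman optimality condition satisfied by \refalg{alg.planning_over_star_convex_regions}, for any \emph{non-terminal} active scan $i^\ast$ the assigned local goal is the center $\scancenter_{j}$ of a predecessor scan $j$ with $\policycost_{\goal,\scanset}(j) < \policycost_{\goal,\scanset}(i^\ast)$, and the edge connectivity condition \refeq{eq.edge_connectivity} guarantees $\scancenter_j \in \saferpoly{\scancenter_{i^\ast},\scanpoints_{i^\ast}}$, so this goal is admissible for the active policy \refeq{eq.global_navigation_policy}. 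The policy drives $\pos$ toward $\scancenter_j$; because $\scancenter_j$ lies strictly inside $\safescanpoly(\scancenter_j,\pointcloud_j)$ (it is its own center), an entire neighborhood of $\scancenter_j$ lies in that lower-cost safe polygon, so the asymptotically converging trajectory enters it in finite time. At that instant $\activepolicy_{\goal,\scanset}$ reassigns the robot to a scan of strictly smaller optimal cost, whence $V$ strictly decreases; between such transitions the active scan, and hence $V$, is constant.

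Finally, since $\{\policycost_{\goal,\scanset}(i)\}_{i=1}^m$ is a finite set, $V$ can strictly decrease only finitely often; together with the tie-breaking rule (select the smallest index among equally optimal scans) that precludes Zeno-like switching, the robot reaches the goal-containing scan after finitely many switches, whereupon \refprop{prop.move_through_scan_center_convergence} or \refprop{prop.move_to_project_goal_convergence} drives it asymptotically to $\goal$ without collisions. I expect the main obstacle to be precisely this finite-time tile-transition step: the local policies converge only asymptotically, so I must argue that the robot crosses into the open region of the strictly-lower-cost scan \emph{before} reaching its local goal exactly. This rests on the local goal $\scancenter_j$ being interior to the successor's safe polygon, which converts asymptotic local convergence into a genuine finite-time switch and thereby closes the induction on the number of remaining scans.
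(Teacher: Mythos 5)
Your proposal is correct and takes essentially the same route as the paper's proof: safety via positive invariance of the active safe scan polygon, a strictly decreasing cost-to-go $\policycost_{\goal,\scanset}$ at each switch with finitely many cycle-free transitions, finite-time entry into the next domain because the assigned local goal (an adjacent scan center) is interior to its own safe polygon, and terminal asymptotic convergence via \refprop{prop.move_through_scan_center_convergence} or \refprop{prop.move_to_project_goal_convergence}. Your discrete Lyapunov function $V$ merely formalizes the paper's ``prioritized local controllers'' step; the one point you leave tacit---that connectivity of $\graph(\scanset)$ is what makes every $\policycost_{\goal,\scanset}(i)$ finite, so each non-terminal scan really does have a strictly cheaper predecessor as its local goal---is precisely the role the paper's ``connected motion graph'' fact plays.
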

\begin{proof}
See \refapp{app.sequential_composition_convergence}.
\end{proof}

\section{\!\!Autonomous Exploration of Key Scan Regions}
\label{sec.automated_deployment}

In this section, we describe how to perform key scan selection based on frontier and bridging scan criteria and then present an autonomous exploration strategy that uses frontier and bridging scans to incrementally build a motion graph of key scan regions for global mapping and navigation.

\subsection{Key Scan Selection for Automated Deployment}

Inspired by the classical frontier-based exploration for active mapping \cite{yamauchi_CIRA1997}, we consider a key scan selection criterion to expand the collectively covered region of scans by exploring new, unknown areas via frontier scans as follows.

\begin{definition}\label{def.frontier_scan}
\emph{\!(Frontier Scan)}
A scan $(\scancenter, \scanpoints)$ is said to be a \emph{frontier scan} with respect to a collection of scans $\scanset \!=\! \plist{\!(\scancenter_1, \pointcloud_1), \ldots, (\scancenter_m, \pointcloud_m)\!}$ if and only if its scan center distance to the boundary of the union of safer scan polygons $\bigcup_{i=1}^{m} \saferpoly{\scancenter_i, \scanpoints_i}$ is less than a critical threshold $\varepsilon > 0$, and its center distance to obstacles is greater than $\delta > \varepsilon$, i.e.,
{\small
\begin{align*}
\max_{\substack{i = 1, \ldots, m \\ \scancenter \, \in \,\saferpoly{\scancenter_i, \scanpoints_i}}} \!\bndrydist_{(\scancenter_i, \pointcloud_i)}\!(\scancenter) \leq \varepsilon 
\\
\min_{\substack{i = 1, \ldots, m \\ \scancenter \, \in \,\saferpoly{\scancenter_i, \scanpoints_i}}}\obstdist_{(\scancenter_i, \scanpoints_i)}\!(\scancenter) \geq \delta 
\end{align*} 
}%
where $\bndrydist_{(\scancenter, \scanpoints)}(\vect{x})$ returns the distance to the boundary of a scan polygon as in \refeq{eq.distance_to_boundary} and $\obstdist_{(\scancenter, \scanpoints)}(\vect{x})$ returns the distance to the scanned obstacle points as in \refeq{eq.distance_to_obstacles}.
\end{definition}

A useful observation for effective key scan selection is the redundancy property of multiple scans for safety verification and identifying missing (topological) connections.
\begin{lemma}\label{lem.CircularSafety}
\emph{\!\!(Safe Convex Hull of Scan Centers)}~Under \refasm{asm.star_polygon_safety}, any triple of scans \mbox{$(\scancenter_i, \scanpoints_i)$, $(\scancenter_j, \scanpoints_j)$, $(\scancenter_k, \scanpoints_k)$} with $\max \plist{\norm{\scancenter_i - \scancenter_j},\norm{\scancenter_i - \scancenter_k}, \norm{\scancenter_j - \scancenter_k}} \leq \maxsenserange - \radius$ can be interchangeably used to check the safety of the convex hull $\conv(\scancenter_i, \scancenter_j, \scancenter_k)$ of safe scan centers $\scancenter_i, \scancenter_j, \scancenter_k \in \freespace$ as
\begin{subequations}
\begin{align*}
\conv(\scancenter_i, \scancenter_j, \scancenter_k) \subseteq \freespace \Longleftrightarrow [\scancenter_i, \scancenter_j] \subseteq \safescanpoly(\scancenter_{k}, \scanpoints_k)
\\
\Longleftrightarrow [\scancenter_i, \scancenter_k] \subseteq \safescanpoly(\scancenter_{j}, \scanpoints_j)
\\
\Longleftrightarrow [\scancenter_j, \scancenter_k] \subseteq \safescanpoly(\scancenter_{i}, \scanpoints_i). 
\end{align*} 
\end{subequations}
\end{lemma}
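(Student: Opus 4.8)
The plan is to reduce the three-way equivalence to a single one and exploit the symmetry of the convex-hull condition. Since $\conv(\scancenter_i, \scancenter_j, \scancenter_k) \subseteq \freespace$ is symmetric in the three indices, it suffices to establish $\conv(\scancenter_i, \scancenter_j, \scancenter_k) \subseteq \freespace \iff [\scancenter_i, \scancenter_j] \subseteq \safescanpoly(\scancenter_k, \scanpoints_k)$; the remaining two equivalences then follow verbatim by relabeling $(i,j,k)$. Thus the whole lemma comes down to proving one equivalence in two directions.

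For the implication $[\scancenter_i, \scancenter_j] \subseteq \safescanpoly(\scancenter_k, \scanpoints_k) \Rightarrow \conv(\scancenter_i, \scancenter_j, \scancenter_k) \subseteq \freespace$, I would use that the triangle is the union of the cevians from the apex $\scancenter_k$ to its opposite edge,
\[
\conv(\scancenter_i, \scancenter_j, \scancenter_k) = \bigcup_{\vect{y} \in [\scancenter_i, \scancenter_j]} [\scancenter_k, \vect{y}].
\]
By the definition of the safe scan polygon in \refeq{eq.safe_scan_polygon}, membership $\vect{y} \in \safescanpoly(\scancenter_k, \scanpoints_k)$ means precisely $[\scancenter_k, \vect{y}] \subseteq \erode(\scanpoly(\scancenter_k, \scanpoints_k), \radius)$. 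Taking the union over $\vect{y} \in [\scancenter_i, \scancenter_j]$ yields $\conv(\scancenter_i, \scancenter_j, \scancenter_k) \subseteq \erode(\scanpoly(\scancenter_k, \scanpoints_k), \radius) \subseteq \freespace$, where the last inclusion is the erosion-safety fact $\erode(\scanpoly(\scancenter_k, \scanpoints_k), \radius) \subseteq \freespace$ noted just after \refasm{asm.star_polygon_safety}. This direction is short and uses neither the distance bound nor a contradiction.

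The converse $\conv(\scancenter_i, \scancenter_j, \scancenter_k) \subseteq \freespace \Rightarrow [\scancenter_i, \scancenter_j] \subseteq \safescanpoly(\scancenter_k, \scanpoints_k)$ is the substantive part, and here the distance bound enters. First I would observe that $\vect{z} \mapsto \norm{\vect{z} - \scancenter_k}$ is convex and hence maximized over the triangle at a vertex; together with $\max(\norm{\scancenter_i - \scancenter_k}, \norm{\scancenter_j - \scancenter_k}) \le \maxsenserange - \radius$ this gives $\norm{\vect{z} - \scancenter_k} \le \maxsenserange - \radius$ for every $\vect{z} \in \conv(\scancenter_i, \scancenter_j, \scancenter_k)$, so the entire robot body $\ball(\vect{z}, \radius)$ stays within the maximum sensing range of $\scancenter_k$. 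I would then argue by contradiction: if the edge is not in the safe polygon, there is a triangle point $\vect{z}$ whose body escapes the scan polygon, i.e.\ some $\vect{w} \in \ball(\vect{z}, \radius) \setminus \scanpoly(\scancenter_k, \scanpoints_k)$. Using star-convexity of $\scanpoly(\scancenter_k, \scanpoints_k)$ about $\scancenter_k$, the crossing of $\partial\scanpoly(\scancenter_k, \scanpoints_k)$ occurs at a boundary point $\vect{b}$ that is strictly within the maximum range. By the obstacle bullets of \refasm{asm.star_polygon_safety} (sub-range scan readings are genuine obstacles, and the polygon interior is obstacle-free), a boundary crossing strictly inside the sensing range forces an obstacle at $\vect{b}$, and locating $\vect{b}$ within distance $\radius$ of the free-space point $\vect{z}$ contradicts $\vect{z} \in \freespace$.

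The main obstacle is exactly the step \emph{a boundary crossing strictly within the maximum range implies an obstacle within the robot clearance}. This is where \refasm{asm.star_polygon_safety} and the modeling premise that the scan polygon faithfully captures the local obstacle-free space must be used carefully: a boundary segment $[\scanpoint_{l-1}, \scanpoint_l]$ whose nearer endpoint is inside the sensing range has that endpoint as an obstacle, and under adequate angular resolution such a segment cannot dip below $\maxsenserange - \radius$ without placing an obstacle inside the robot body, which is what makes the distance bound $\maxsenserange - \radius$ the natural hypothesis. I also expect the bookkeeping between points and bodies (the erosion) to need care: the clean route is to first show the triangle \emph{points} lie in $\scanpoly(\scancenter_k, \scanpoints_k)$, and then upgrade to $\erode(\scanpoly(\scancenter_k, \scanpoints_k), \radius)$ by applying the same within-range obstacle argument to the short crossing segment $[\vect{z}, \vect{w}]$, which keeps the offending obstacle automatically within $\radius$ of the free-space point $\vect{z}$ and thus forces the contradiction directly.
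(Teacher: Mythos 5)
Your proposal is correct and takes essentially the same route as the paper's proof: the same relabeling symmetry, the same cevian/star-convexity argument giving $[\scancenter_i, \scancenter_j] \subseteq \safescanpoly(\scancenter_k, \scanpoints_k) \Rightarrow \conv(\scancenter_i, \scancenter_j, \scancenter_k) \subseteq \erode(\scanpoly(\scancenter_k, \scanpoints_k), \radius) \subseteq \freespace$, and for the converse the same use of the $\maxsenserange - \radius$ bound together with \refasm{asm.star_polygon_safety} --- where the paper simply asserts that unoccluded visibility within the safe sensing range yields safe-polygon membership, your boundary-crossing contradiction is an explicit unpacking of that very step. Your observation that points on boundary segments between sampled scan readings are not literally obstacles under \refasm{asm.star_polygon_safety}, so the converse leans on the standing idealization that scan polygons faithfully capture the local obstacle-free space, is accurate and in fact slightly more careful than the paper's two-sentence treatment of that direction.
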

\begin{proof}
See \refapp{app.CircularSafety}.
\end{proof}

Hence, as a loop closing heuristic \cite{stachniss_hahnel_burgard_IROS2004}, we assess whether a new scan introduces novel (topological) connectivity into the motion graph of a collection of scans by considering the restricted local connectivity of the original motion graph from the perspective of the new scan as~follows.

\begin{definition}\label{def.scan_contrained_motion_graph}
\emph{(Scan-Constrained Motion Graph)}
The constrained subgraph $\subgraph_{(\scancenter, \scanpoints)}(\scanset) \!=\! (\subvertexset_{\!(\scancenter, \scanpoints)}, \subedgeset_{(\scancenter, \scanpoints)})$ of the motion graph $\graph(\scanset) \! = \!(\vertexset, \edgeset)$ of a collection of scans $\scanset=\plist{\!(\scancenter_1,\scanpoints_1), \ldots, (\scancenter_m, \scanpoints_m)\!}$ to the safer polygon of a scan $(\scancenter, \scanpoints)$ is defined by its constrained vertex and edge sets as:

\indent $\bullet$  Constrained Vertices: A vertex $i \!\in\! \vertexset\!=\!\clist{1, \ldots, m}$ is a constrained vertex in $\subvertexset_{\!(\scancenter, \scanpoints)}$ if and only if the centers of scans $(\scancenter_i, \scanpoints_i)$ and $(\scancenter, \scanpoints)$ are within each other's safer polygon,~i.e.,
\begin{align*}
\subvertexset_{(\scancenter, \scanpoints)} \!=\! \clist{i \in \vertexset  \Big|  \scancenter_i \!\in\! \saferpoly{\scancenter, \scanpoints}, \scancenter \!\in\! \saferpoly{\scancenter_i, \scanpoints_i}\!}.
\end{align*}

\indent $\bullet$ Constrained Edges: Any edge $(i,j) \in \subedgeset_{(\scancenter, \scanpoints)}$ between vertices $i, j \in \subvertexset_{(\scancenter, \scanpoints)}$ exists if and only if $(i,j) \in \edgeset$ and the line segment joining scan centers $\scancenter_i$ and $\scancenter_j$ is contained in the safer polygon of the scan $(\scancenter, \scanpoints)$ (and the equivalent conditions in \reflem{lem.CircularSafety}), i.e.,\footnote{Here, the equivalent relations from \reflem{lem.CircularSafety} are used and are only needed to increase robustness against \refasm{asm.star_polygon_safety}.} 
\begin{align*}
\blist{\scancenter_i , \scancenter_j} &\subseteq \saferpoly{\scancenter, \scanpoints},
\\
\blist{\scancenter , \scancenter_i} &\subseteq \saferpoly{\scancenter_j, \scanpoints},
\\
\blist{\scancenter , \scancenter_j} &\subseteq \saferpoly{\scancenter_i, \scanpoints}.
\end{align*}
\end{definition}

As expected, for any $i = 1, \ldots, m$, the constrained motion graph $\subgraph_{(\scancenter_i, \scanpoints_i)}(\scanset)$ to the existing scan $(\scancenter_i, \scanpoints_i)$ is always a connected subgraph of $\graph(\scanset)$ with vertices $\subvertexset = \clist{i} \cup \clist{j \in \vertexset \mid (i, j) \in \edgeset}$ and edge set $\subedgeset \supseteq \clist{(i, j) \in \edgeset \mid j \in \vertexset}$, even if the motion graph $\graph(\scanset)$ might not be connected.
Therefore, if a constrained motion graph $\subgraph_{(\scancenter, \scanpoints)}(\scanset)$ to a new scan $(\scancenter, \scanpoints)$ is unconnected, we say that the inclusion of the new scan improves the connectivity of the original motion graph.
Because a star-convex safer scan polygon $\saferpoly{\scancenter, \scanpoints}$ is simply connected and topologically equivalent to a point, the unconnectedness of the Euclidean embedding of the constrained motion graph $\bigcup_{(i,j) \in \subedgeset_{(\scancenter, \scanpoints)}} \blist{\scancenter_i, \scancenter_j} = \bigcup_{(i,j) \in \edgeset} \blist{\scancenter_i, \scancenter_j} \cap \saferpoly{\scancenter, \scanpoints}$ implies missing connectivity in the motion graph.
In particular, an unconnected constrained motion graph $\subgraph_{(\scancenter, \scanpoints)}(\scanset)$ of a connected motion graph $\graph(\scanset)$ implies that the new scan $(\scancenter, \scanpoints)$ captures novel (topological) connections, as illustrated in \reffig{fig.bridging_scan}.
Accordingly, we refer to such novel connection scans as loop-closing bridging scans.

\begin{definition}\label{def.bridging_scan}
\emph{(Bridging Scan)}
A scan $(\scancenter,\scanpoints)$ is said to be a \emph{bridging scan} for a collection of scans $\scanset = \plist{(\scancenter_1, \pointcloud_1) \ldots, (\scancenter_m, \pointcloud_m)}$ if and only if the constrained motion graph $\subgraph_{(\scancenter, \scanpoints)}(\scanset)$ is unconnected.\footnote{One can check the connectivity of an undirected, unweighted graph by examining the positivity of either the second smallest eigenvalue of the graph Laplacian or the elements of the reachability matrix of the graph.}
\end{definition}

Determining whether a previously unvisited position corresponds to a frontier or bridging scan without knowing the actual scan readings at that position is critical for exploration in unknown environments. 
By \refdef{def.frontier_scan}, a frontier scan position can be detected without knowing the scan measurements at that position.
Similar to Definitions \ref{def.scan_contrained_motion_graph} and \ref{def.bridging_scan}, one can detect candidate bridging scan positions without knowledge of the actual scan measurements at these positions by leveraging the safety equivalence of scans in \reflem{lem.CircularSafety}, i.e., if the scan points $\scanpoints$ at a scan center $\scancenter$ are unavailable, one can equivalently check if $\blist{\scancenter_i, \scancenter_j} \in \saferpoly{\scancenter, \scanpoints}$ using $\blist{\scancenter, \scancenter_i} \in \saferpoly{\scancenter_j, \scanpoints}$ and $\blist{\scancenter, \scancenter_j} \in \saferpoly{\scancenter_i, \scanpoints}$.

\begin{figure}
\centering
\begin{tabular}{@{}c@{\hspace{0.5mm}}c@{\hspace{0.5mm}}c@{}}
\includegraphics[width=0.33\linewidth]{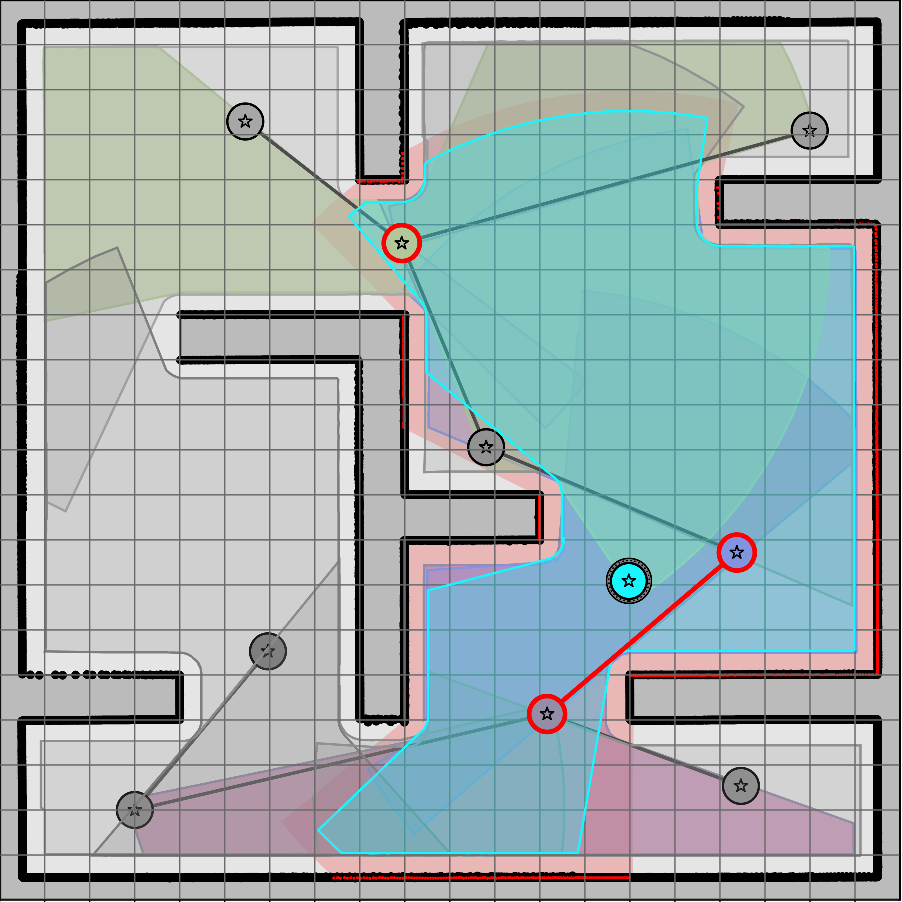}
&
\includegraphics[width=0.33\linewidth]{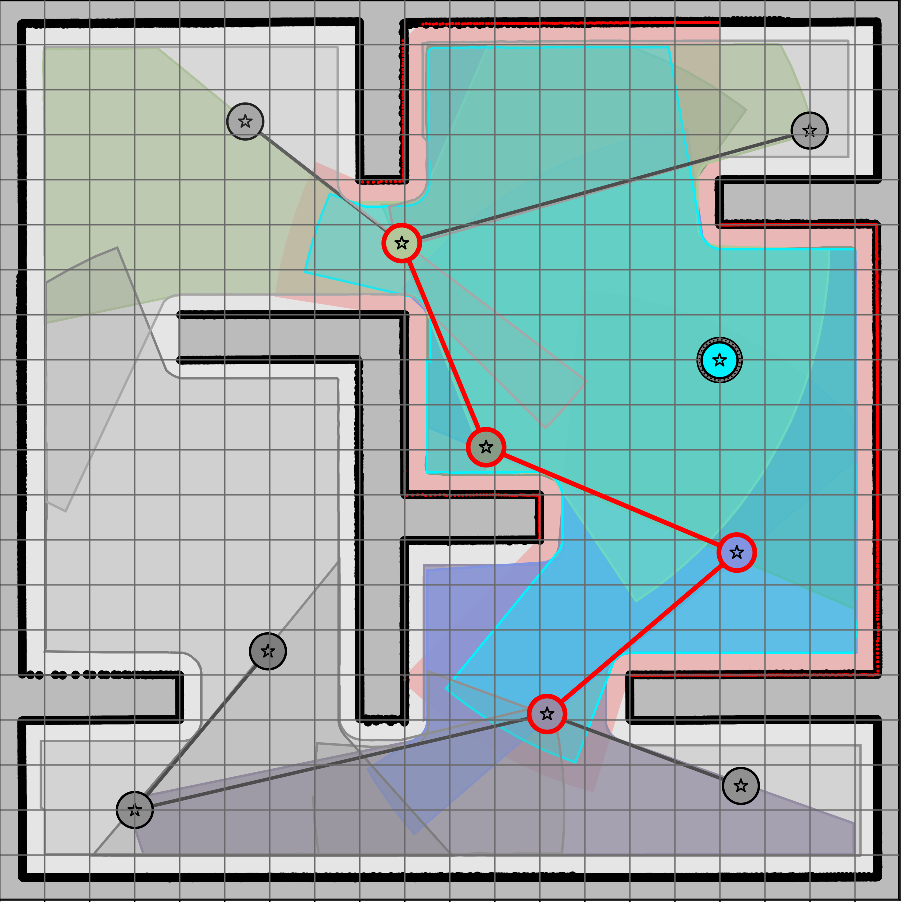}
&
\includegraphics[width=0.33\linewidth]{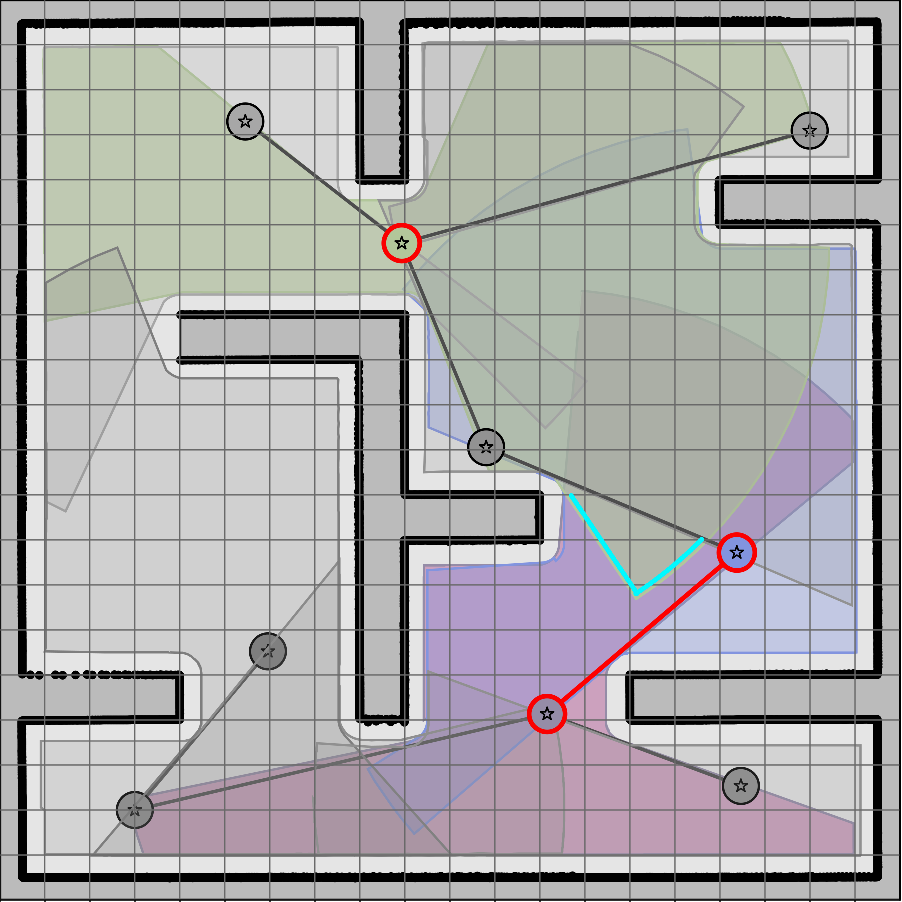}
\end{tabular}
\vspace{-4mm}
\caption{\!(left) A bridging scan (cyan) where the constrained motion graph (red) is unconnected. (middle) A non-bridging scan (cyan) where the constrained motion graph (red) is connected. (right) A set of bridging scan positions (cyan points) on the boundary of an existing scan where the constrained motion graph (red) onto these points is unconnected.}
\label{fig.bridging_scan}
\vspace{-4mm}
\end{figure}

\begin{definition}\label{def.position_constrained_motion_graph}
\emph{\!\!(Position-Constrained Motion Graph)}
The position-constrained subgraph $\subgraph_{\scancenter}(\scanset)\! =\! (\subvertexset_{\!\scancenter}, \subedgeset_{\scancenter})$ of the motion graph $\graph(\scanset)=(\vertexset, \edgeset)$ of a set of scans $\scanset = \plist{(\scancenter_1, \scanpoints_1), \ldots, (\scancenter_m, \scanpoints_m)\!}$ with respect to a scan observation position $\scancenter$ is defined by its constrained vertices and edges as:

\indent $\bullet$ Constrained Vertices: Each vertex $i \in \subvertexset_{\!\scancenter}$ is a vertex in $\vertexset \!=\! \clist{1, \ldots, m}$ associated with scan $(\scancenter_i, \scanpoints_i)$ whose safer polygon contains the scan position $\scancenter$, i.e.,
\begin{align*}
\subvertexset_{\scancenter} = \clist{i \in \vertexset \, \big | \,  \scancenter \in \saferpoly{\scancenter_i, \scanpoints_i}}.
\end{align*} 

\indent $\bullet$ Constrained Edges: An edge $(i,j) \!\in\! \subedgeset_{\scancenter}$ between $i,j \!\in\! \subvertexset_{\!\scancenter}$ exists if and only if $(i, j) \!\in \!\edgeset$ and the scan position $\scancenter$ is in the safer polygons of scans $(\scancenter_i, \scanpoints_i)$ and $(\scancenter_j, \scanpoints_j)$,~i.e., 
\begin{align*}
\blist{\scancenter, \scancenter_i} \subseteq \saferpoly{\scancenter_j, \scanpoints_j} \text{ and } 
\blist{\scancenter, \scancenter_j} \subseteq \saferpoly{\scancenter_i, \scanpoints_i}.
\end{align*}

\end{definition}

\begin{definition}\label{def.bridging_scan_position}
\emph{(Bridging Scan Position)}
A scan position $\scancenter \!\in\! \bigcup_{i=1}^{m}\saferpoly{\scancenter_i, \scanpoints_i}$ is said to be a \emph{bridging scan position} for a set of scans $\scanset \!\!=\!\! \plist{\!(\scancenter_1, \! \scanpoints_1), \ldots, (\scancenter_m,\! \scanpoints_m)\!}$ if and only if the position-constrained motion subgraph $\subgraph_{\scancenter}(\scanset)$ is unconnected. 
\end{definition}

As with bridging scans, a bridging scan position constrains the Euclidean embedding of the motion graph over $\bigcup_{i \in \subvertexset_{\scancenter}} \saferpoly{\scancenter_i, \scanpoints_i}$, which is a simply connected space and topologically equivalent to a point. 
If  the constrained motion graph embedding $\bigcup_{(i,j) \in \subedgeset_{\scancenter}} \blist{\scancenter_i, \scancenter_j}= \bigcup_{(i,j) \in \edgeset} \blist{\scancenter_i, \scancenter_j} \cap \bigcup_{i \in \subvertexset_{\scancenter}} \saferpoly{\scancenter_i, \scanpoints_i}$ is unconnected, this indicates a missing (topological) connection in the motion graph, see \reffig{fig.bridging_scan}, which can be resolved by collecting a new scan at~$\scancenter$.

\subsection{Autonomous Exploration via Frontier \& Bridging Scans}

One can use frontier and bridging scans in several ways to incrementally collect new key scans and deploy new local navigation policies over unknown environments for  mapping and navigation. 
Below, we present an autonomous exploration strategy that prioritizes frontier exploration over bridging exploration, completing area coverage
first and then focusing on loop closing, as illustrated in \reffig{fig.autonomous_exploration_frontier_bridging_scans}.

To efficiently detect potential frontier and bridging scan positions, we restrict the search for scan center candidates to the boundary regions of the safe polygons of existing scans as shown in \reffig{fig.autonomous_exploration_frontier_bridging_scans}.
These candidates are classified into bridging and frontier scan positions and then grouped into clusters based on their connectedness. 
Within each cluster of bridging and frontier scan positions, we determine the cluster midpoint that minimizes the Euclidean distance to all other points in the cluster. 
Subsequently, we calculate the shortest path distance on the motion graph of existing scans to these cluster centers, giving priority to frontier scans over bridging scans, and select the closest one as the potential scan observation point. 
Using the local navigation policies of existing scans, we autonomously drive the robot to the selected observation point to collect a new scan. 
The robot repeats this procedure autonomously until no more bridging or frontier scan positions remain, which implies area exploration and loop closing are complete.
Here is the summary of autonomous exploration steps:

\indent $\bullet$ Frontier Exploration: Determine the frontier scan positions on the boundary of the existing safe scan polygons. 
If a frontier exists, navigate to the closest midpoint of the frontier clusters based on the shortest path on the motion graph; otherwise, proceed to the next step of bridge exploration.

\indent $\bullet$ Bridging Exploration: Determine the bridging scan positions on the boundary of the existing safe scan polygons. If a bridging scan position is found, navigate to the closest midpoint of the bridging clusters, prioritizing connectivity improvement in the motion graph; otherwise, proceed to the next step to assess the progress of the exploration.

\indent $\bullet$ Termination: If there are no frontier or bridging scan positions, exploration for mapping and navigation is completed; otherwise, return to the first step of frontier exploration.

In \reffig{fig.autonomous_exploration_frontier_bridging_scans}, we present an example of autonomous exploration steps for integrated mapping and navigation, where local scan navigation policies are deployed with each key scan selection and directly used for global navigation.
One important observation is that while frontier-based exploration accurately creates a metric map of the environment, the resulting global navigation field can be improved due to missing topological connections and shortcuts in the motion graph. 
Frontier-based exploration tends to produce a tree-like motion graph.
In contrast, the combined frontier- and bridging-based exploration captures both metric and topological connectivity of the environment in the motion graph of scans, leading to a more effective global navigation planner.

\begin{figure}[t]
\centering
\begin{tabular}{@{}c@{\hspace{1.0mm}}c@{}}
\begin{tabular}{@{}c@{\hspace{1.0mm}}c@{}}
\includegraphics[width=0.235\linewidth]{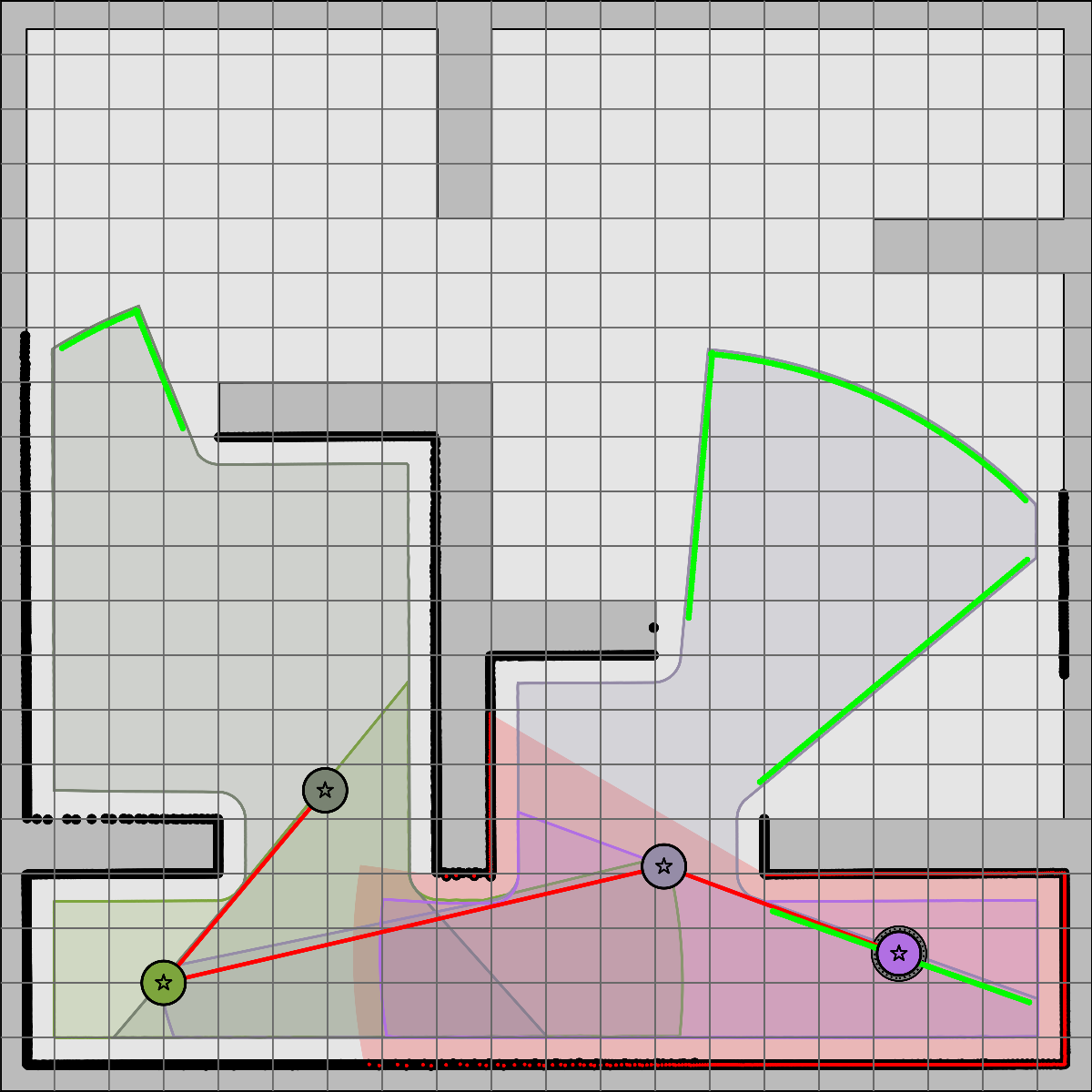}
&
\includegraphics[width=0.235\linewidth]{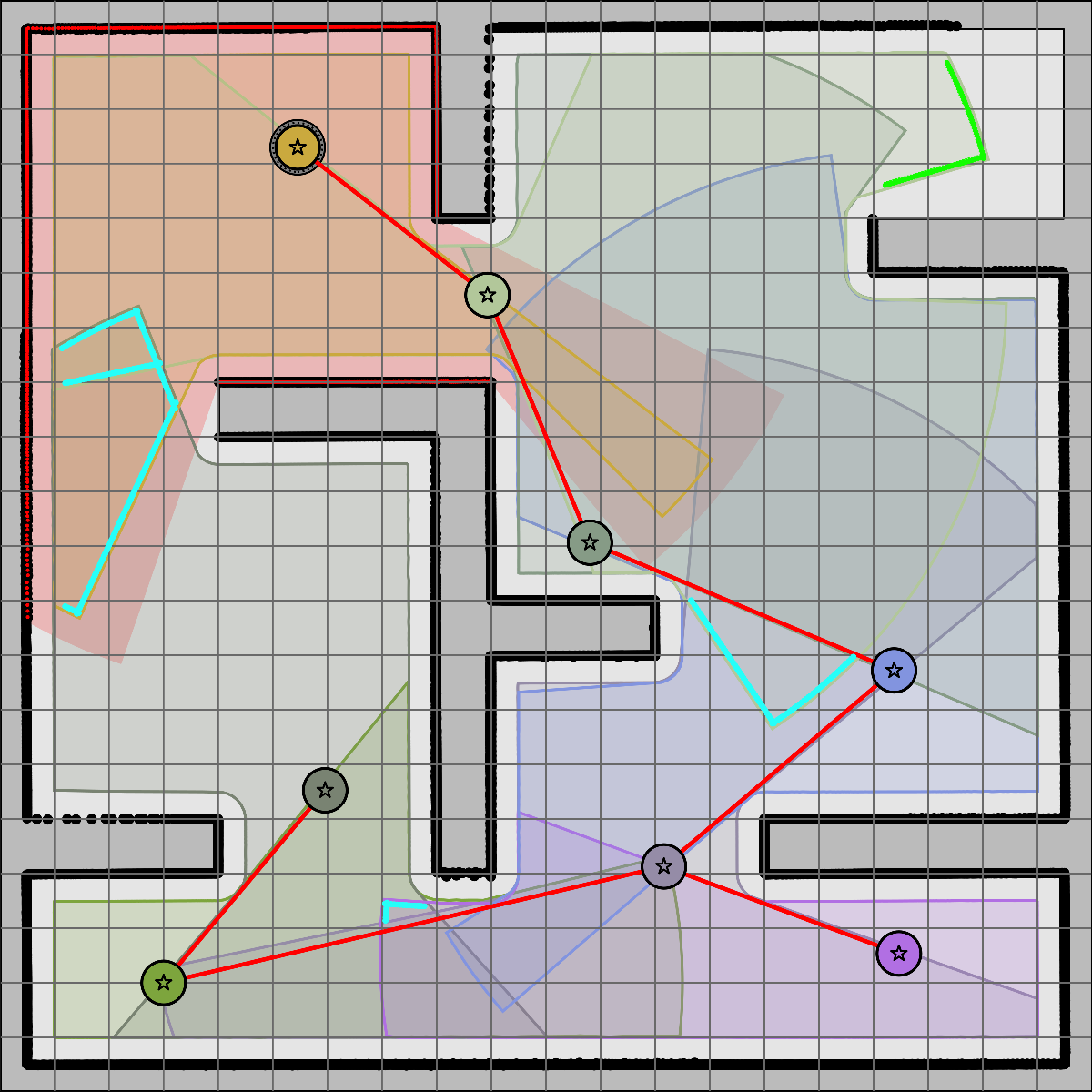}
\end{tabular}
&
\begin{tabular}{@{}c@{\hspace{1.0mm}}c@{}}
\includegraphics[width=0.235\linewidth]{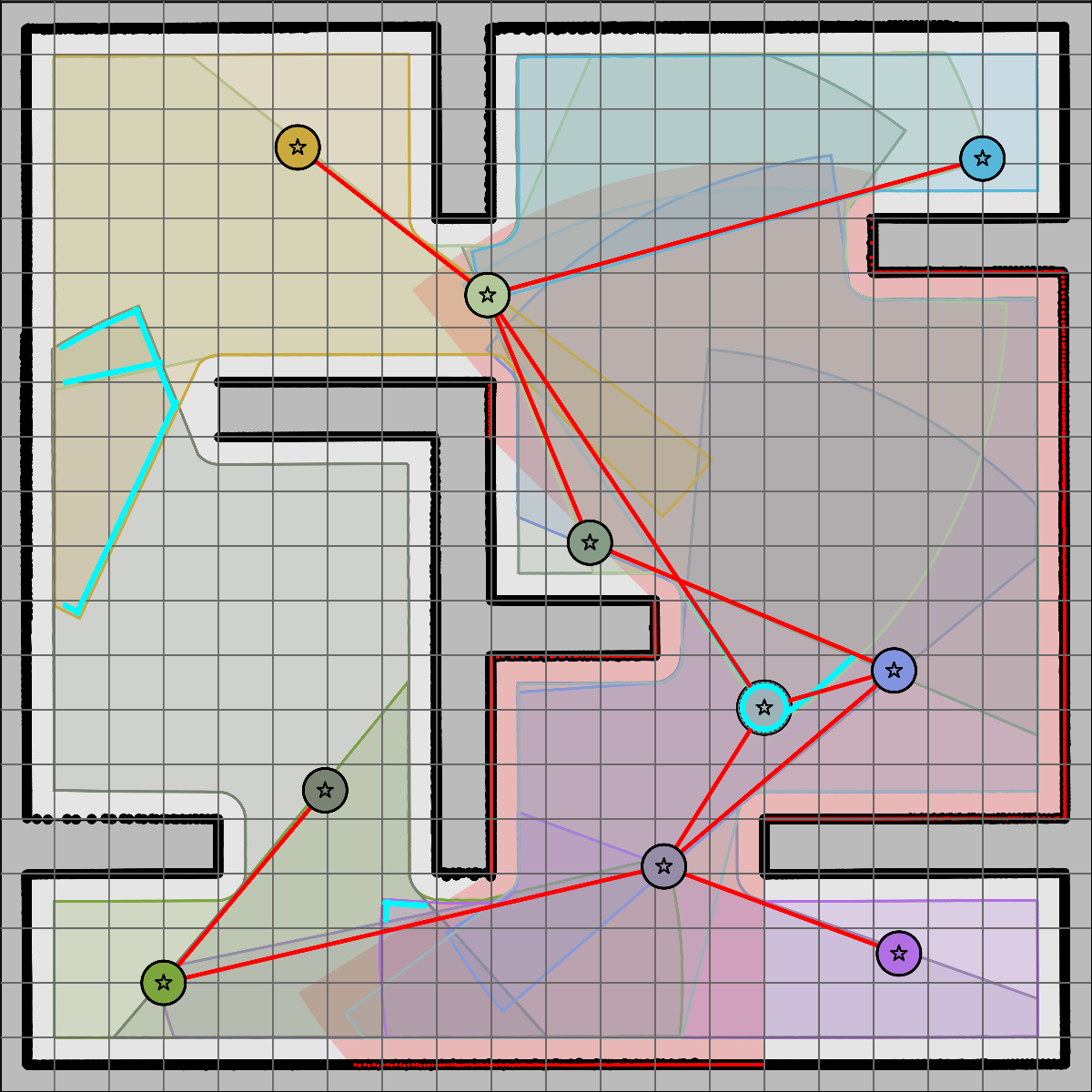}
&
\includegraphics[width=0.235\linewidth]{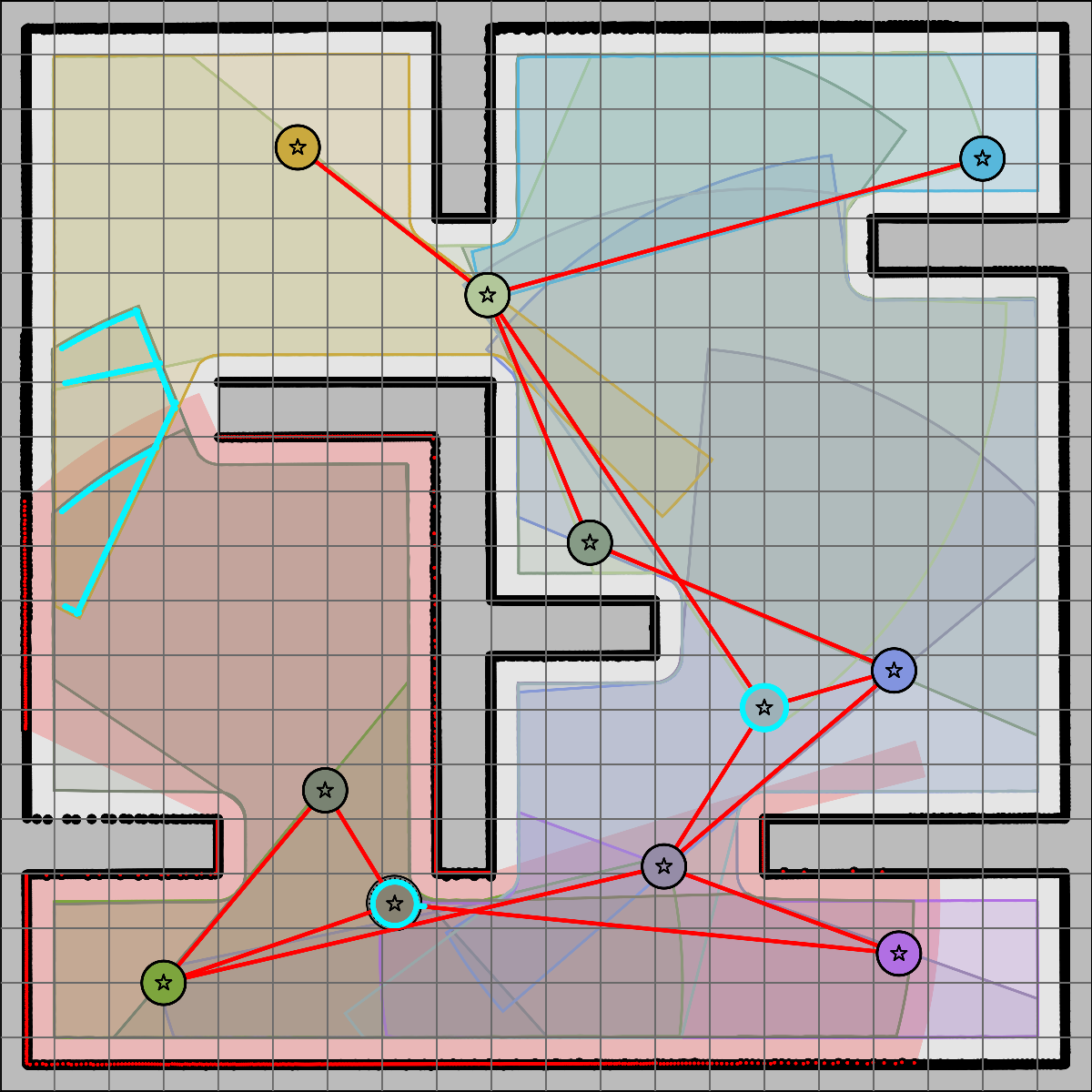}
\end{tabular}
\\[-0.5mm]
\begin{tabular}{@{}c@{}}
\includegraphics[width=0.48\linewidth]{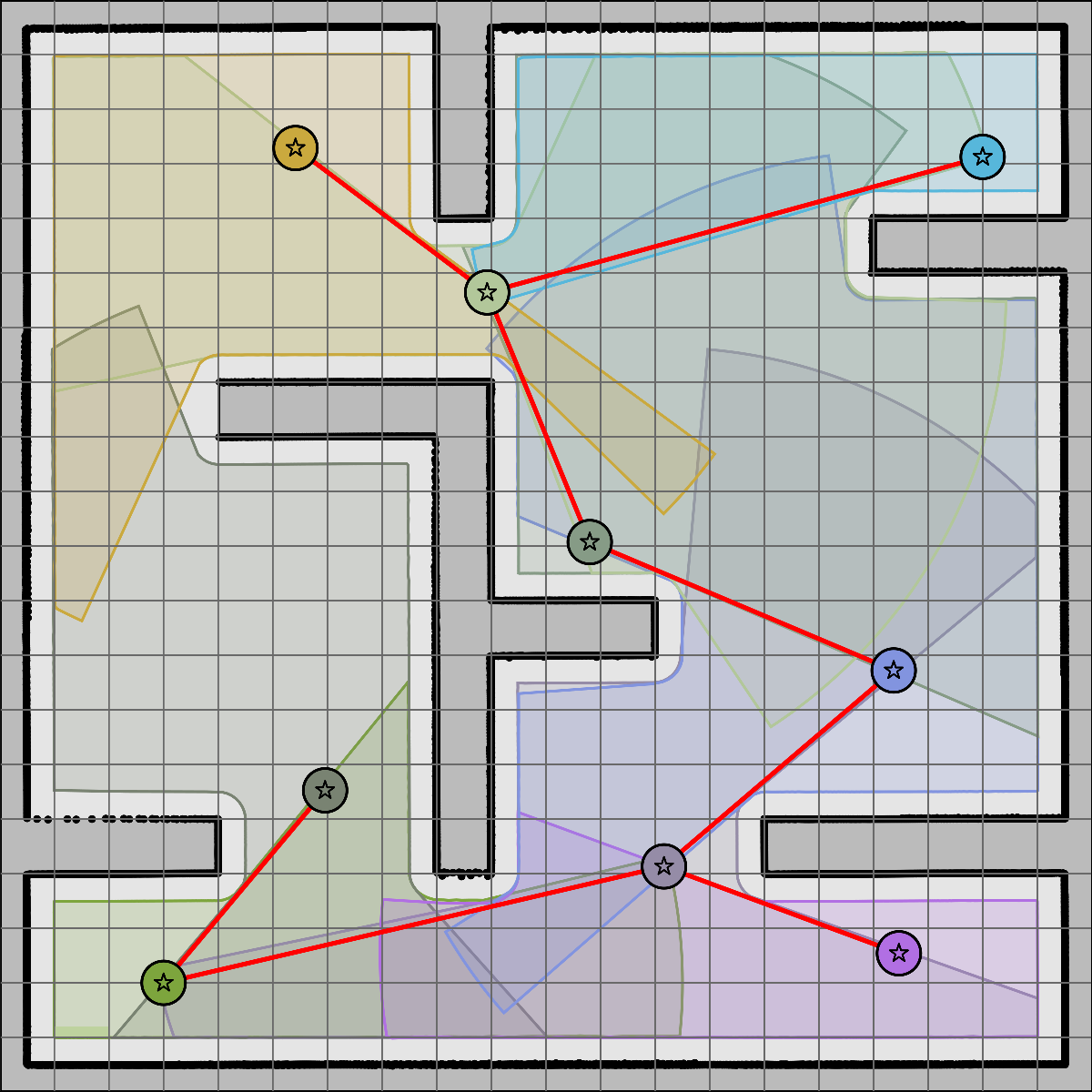}
\end{tabular}
&
\begin{tabular}{@{}c@{}}
\includegraphics[width=0.48\linewidth]{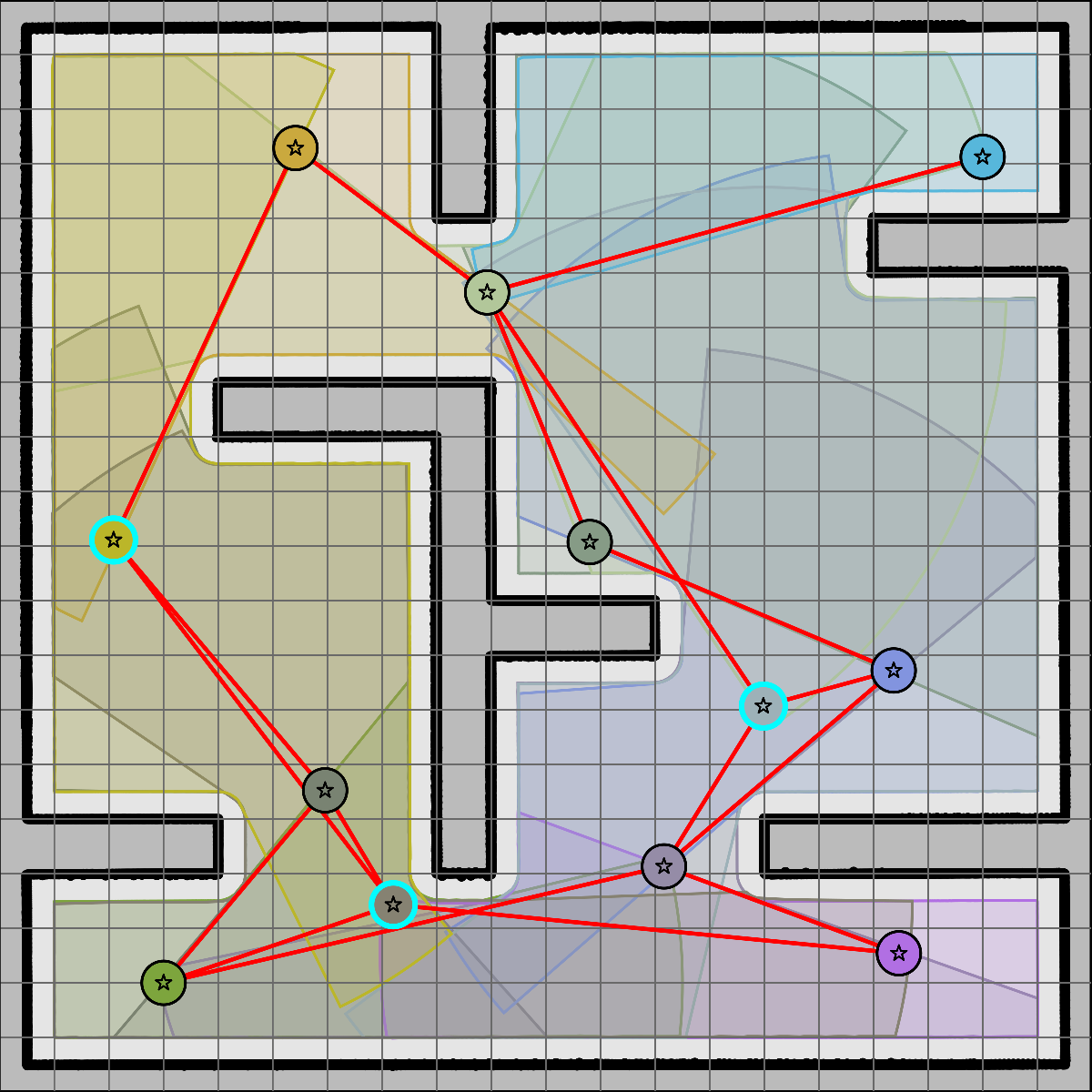}
\end{tabular}
\\[-0.5mm]
\begin{tabular}{@{}c@{}}
\includegraphics[width=0.48\linewidth]{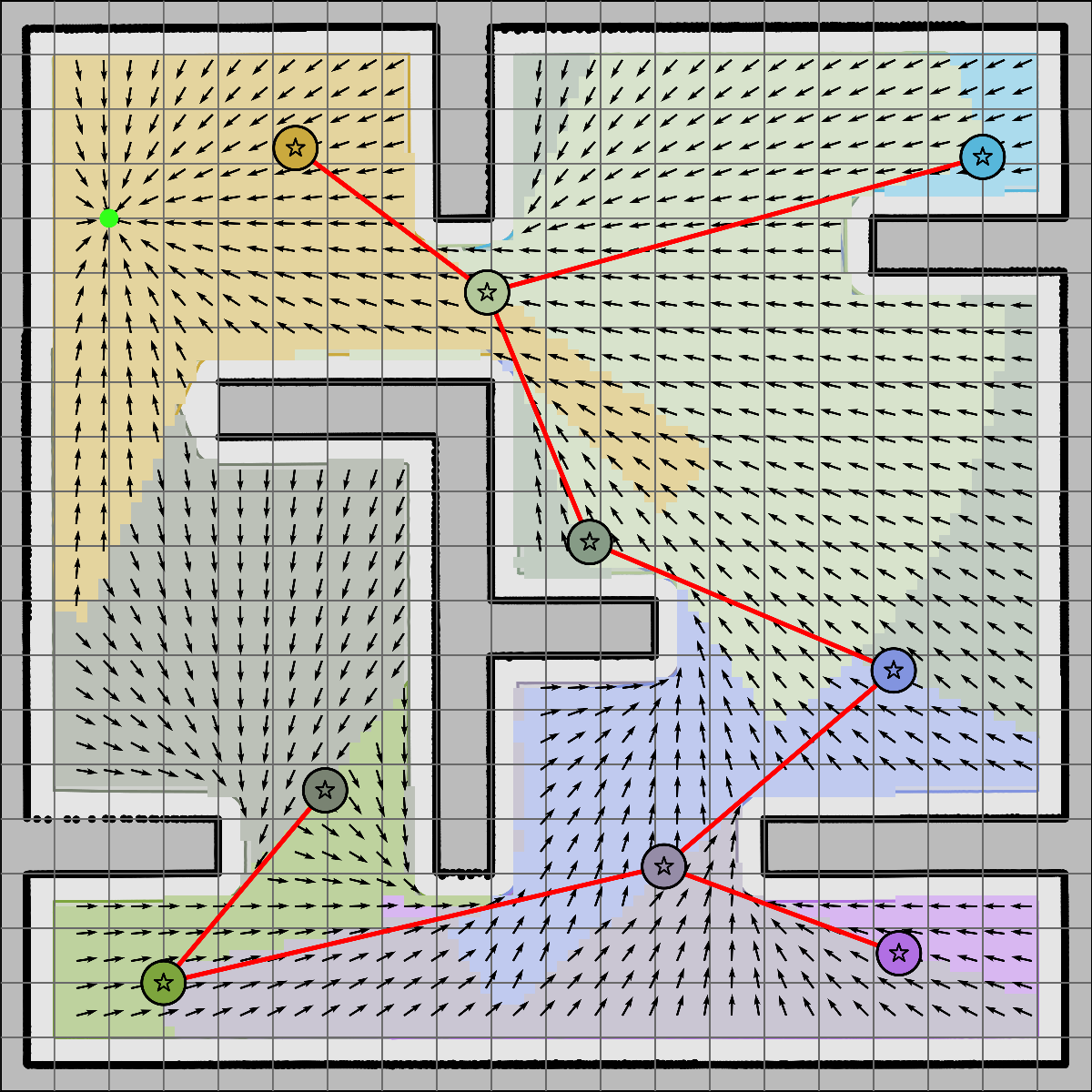}
\end{tabular}
&
\begin{tabular}{@{}c@{}}
\includegraphics[width=0.48\linewidth]{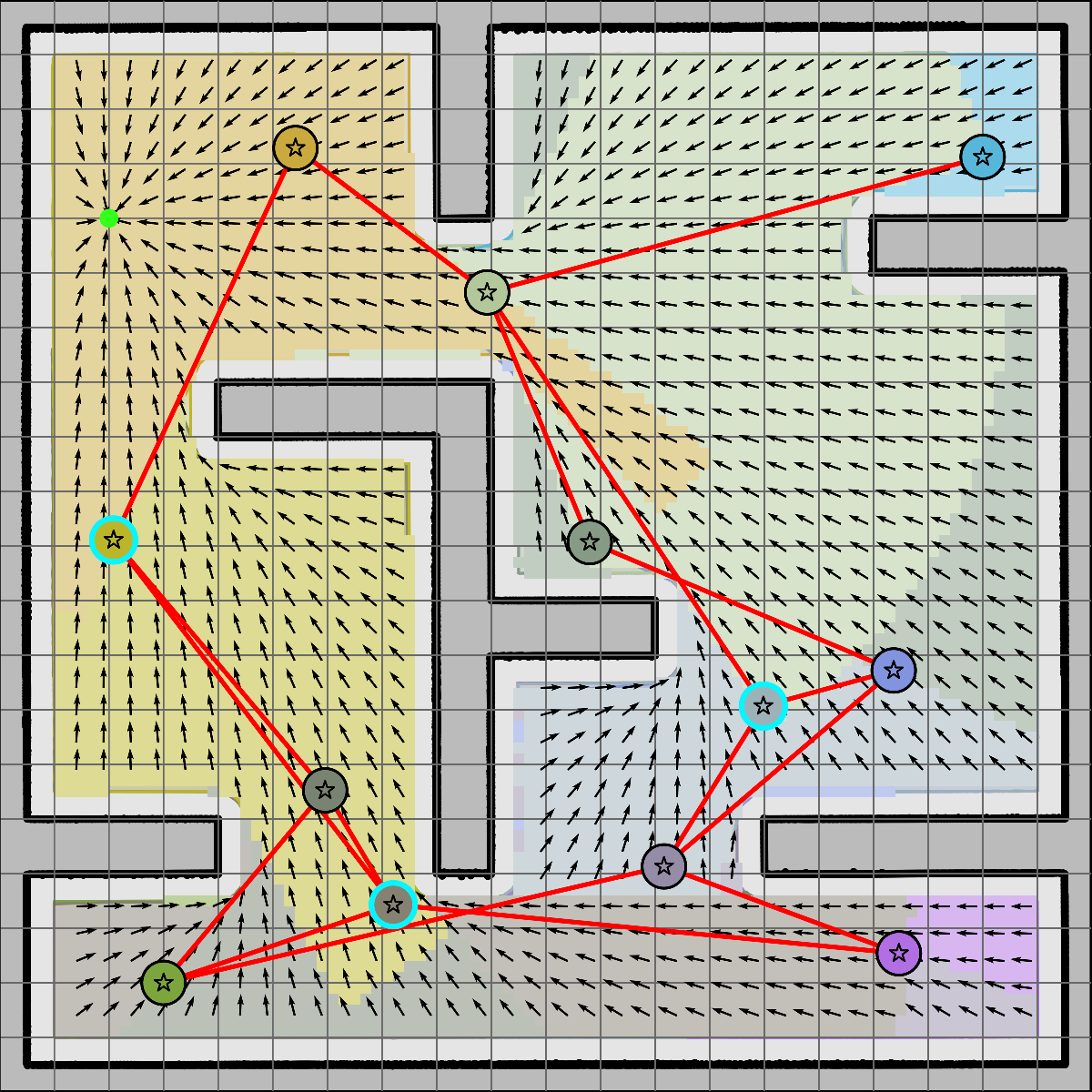}
\end{tabular}
\end{tabular}
\vspace{-2mm}
\caption{Autonomous exploration  with (left) only frontier scans (green) and (right) with additional bridging scans (cyan). Frontier exploration ensures complete metric mapping, while bridging exploration extends the motion graph with topological information and potential shortcuts (middle) for better global navigation as seen in the resulting vector fields (bottom).}        
\label{fig.autonomous_exploration_frontier_bridging_scans}
\vspace{-3mm}
\end{figure}

\section{Numerical Simulations \& Experiments}
\label{sec.numerical_simulations}

\begin{figure}[t]
\centering
\begin{tabular}{@{}c@{}}
\includegraphics[width=0.98\linewidth]{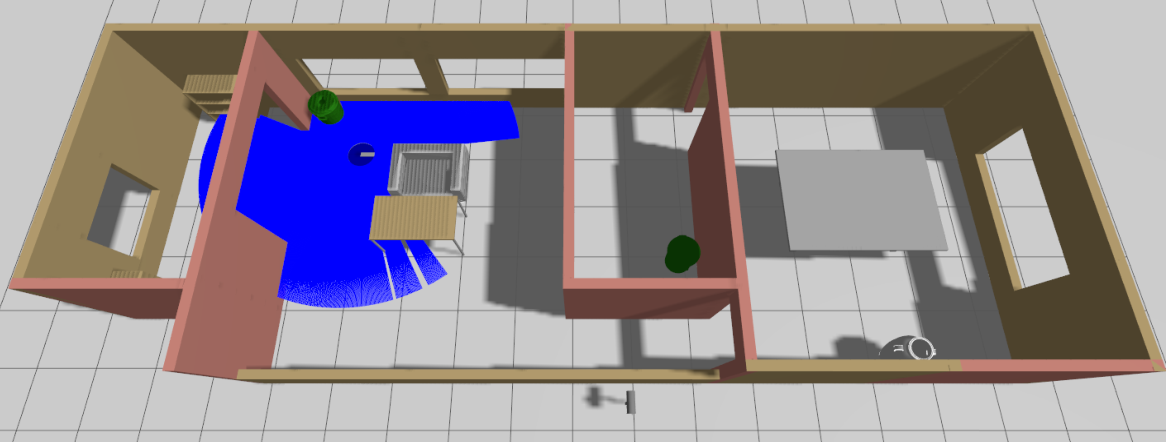}
\\[-0.5mm]
\includegraphics[width=0.98\linewidth]{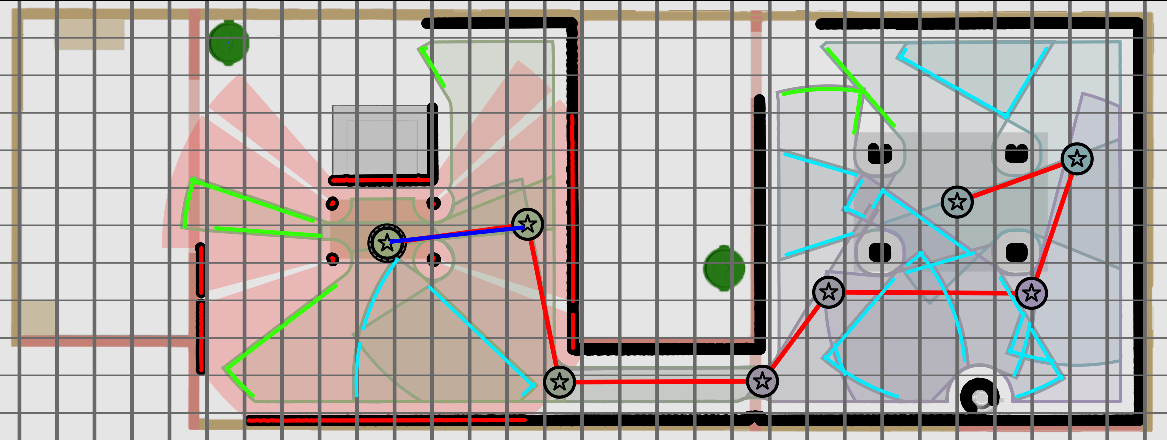}
\\[-0.5mm]
\includegraphics[width=0.98\linewidth]{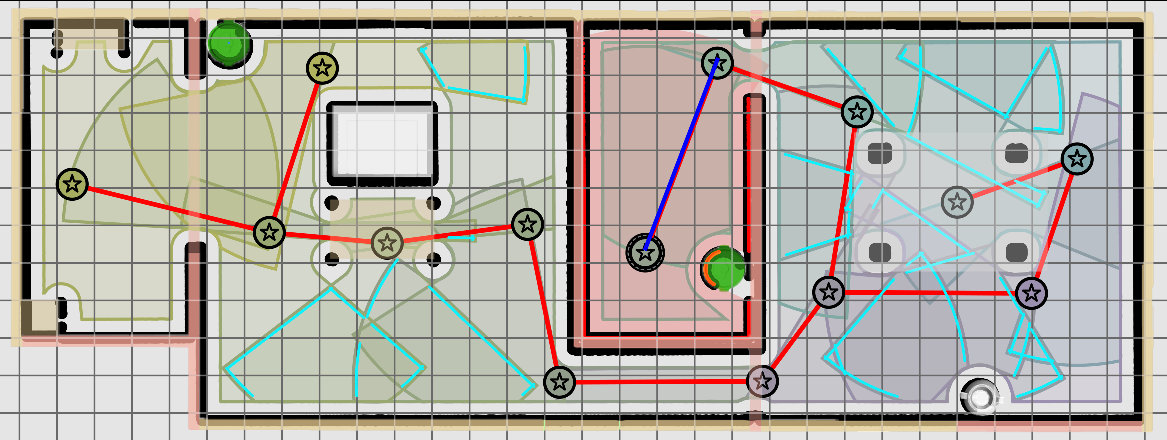}
\\[-0.5mm]
\includegraphics[width=0.98\linewidth]{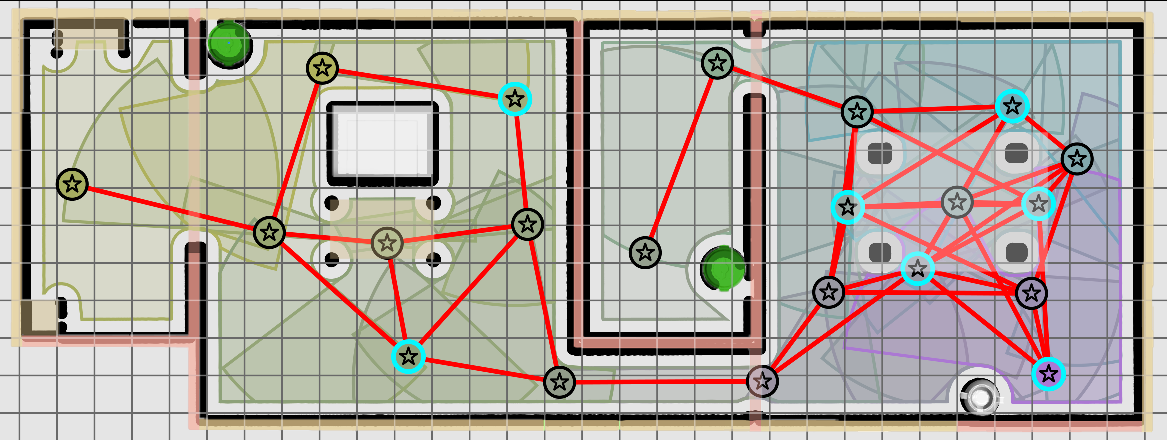}
\end{tabular}
\vspace{-1mm}
\caption{Autonomous exploration using frontier (green) and bridging (cyan) scans for key-scan-based mapping and navigation. (top) A mobile robot with a laser scanner in a simulated office-like cluttered environment.  (upper-middle) An intermediate stage of frontier-only exploration.  (lower-middle) The completed frontier-only exploration. (bottom) The complete motion graph of key scans built with frontier and bridging scan exploration.}        
\label{fig.autonomous_exploration_office_like_environment}
\vspace{-2mm}
\end{figure}

\subsection{Autonomous Exploration in an Office-Like Environment}

To demonstrate the effectiveness of our key-scan-based mapping and navigation framework in large environments, we consider autonomous exploration of a 6m $\times$ 16m office-like cluttered environment in ROS-Gazebo simulation%
\footnote{In numerical ROS-Gazebo simulations, we use a circular mobile robot with a body radius of 0.25m, equipped with a 2D $360^{\circ}$ laser scanner that generates 1081 samples with a maximum range of 3m at 30Hz. The robot's pose is obtained using a simulated motion capture system at 30Hz, and it is controlled using the move-to-projected-goal navigation policy at 30Hz with a linear gain of $\gain = 1.8$ and a maximum linear speed of 0.5m/s.}
using a fully-actuated velocity-controlled mobile robot, shown in \reffig{fig.autonomous_exploration_office_like_environment}~(top).  
As expected, by design, the robot prioritizes area coverage by exploring frontier scans first to complete the global mapping, as seen in \reffig{fig.autonomous_exploration_office_like_environment}~(lower-middle). 
It then continues exploring bridging scans to complete missing topological connections and shortcuts, improving the effectiveness of global navigation, as observed in \reffig{fig.autonomous_exploration_office_like_environment}~(bottom).
While frontier exploration always yields a spanning-tree-like motion graph of key scans with a complete metric map of the environment, bridging exploration increases the connectivity of the motion graph, capturing different ways of navigating around obstacles (e.g., around the legs of the large table).

\begin{figure*}[t]
\centering
\begin{tabular}{@{}c@{\hspace*{0.5mm}}c@{\hspace*{0.5mm}}c@{\hspace*{0.5mm}}c@{\hspace*{0.5mm}}c@{}}
\includegraphics[width=0.195\textwidth]{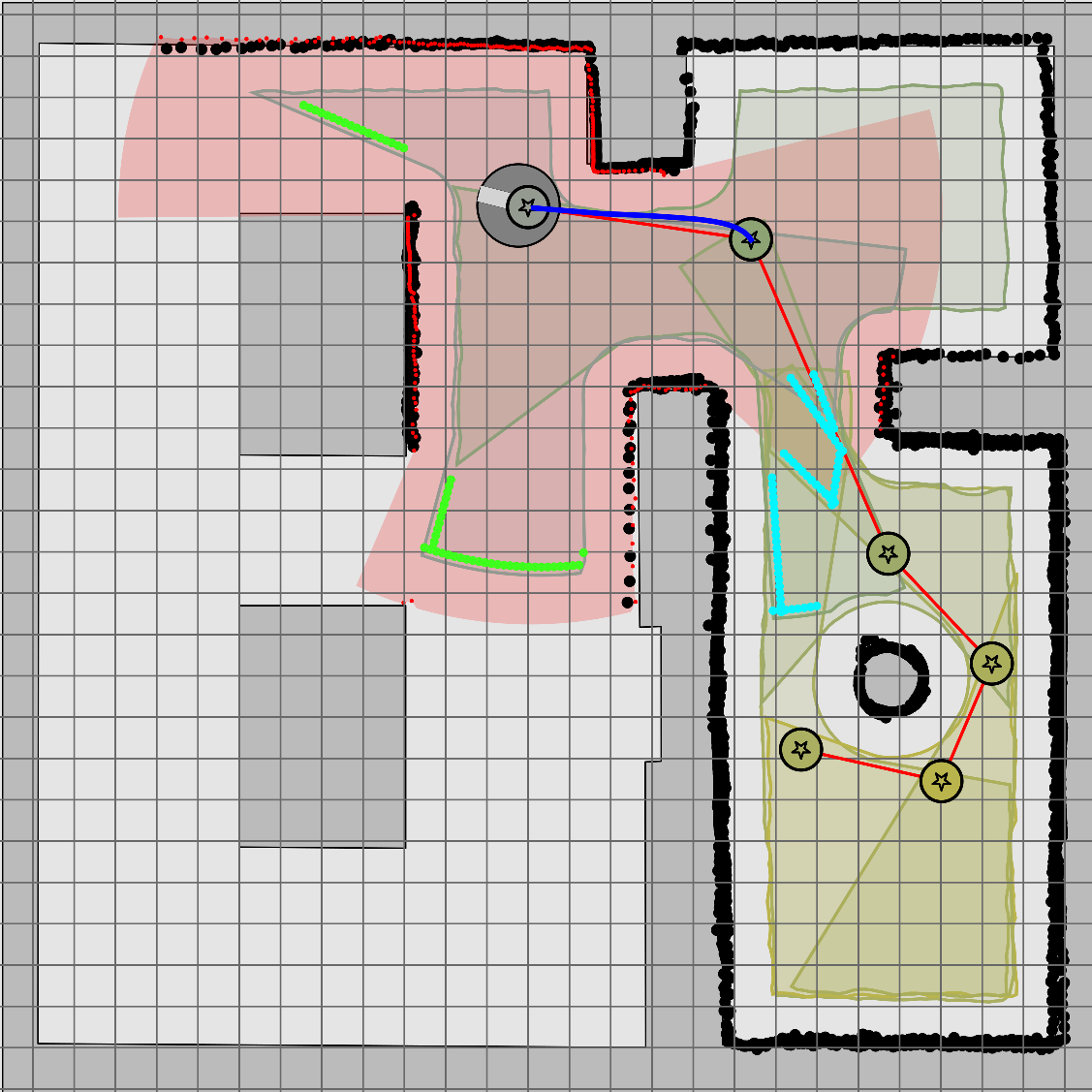}
&
\includegraphics[width=0.195\textwidth]{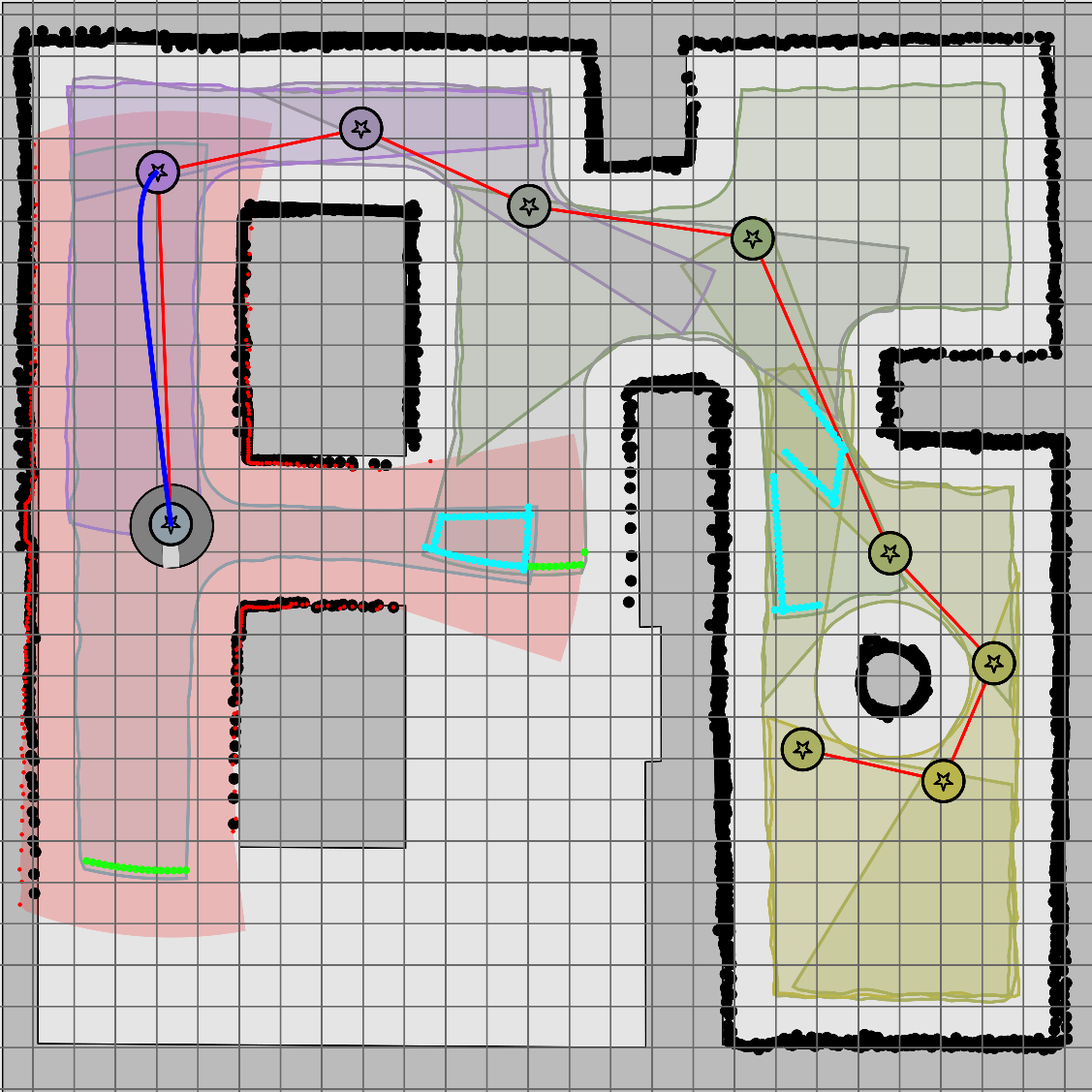}
&
\includegraphics[width=0.195\textwidth]{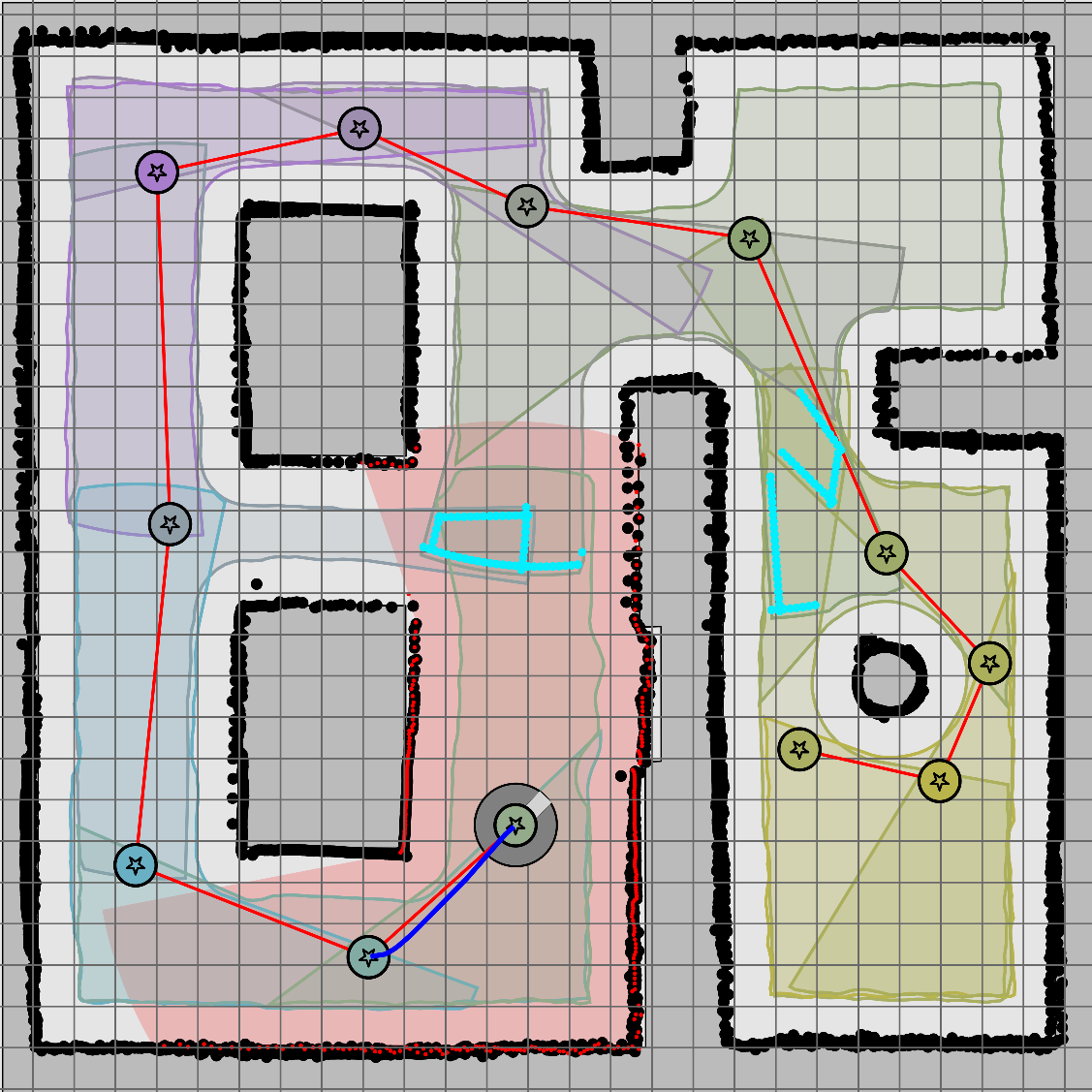}
&
\includegraphics[width=0.195\textwidth]{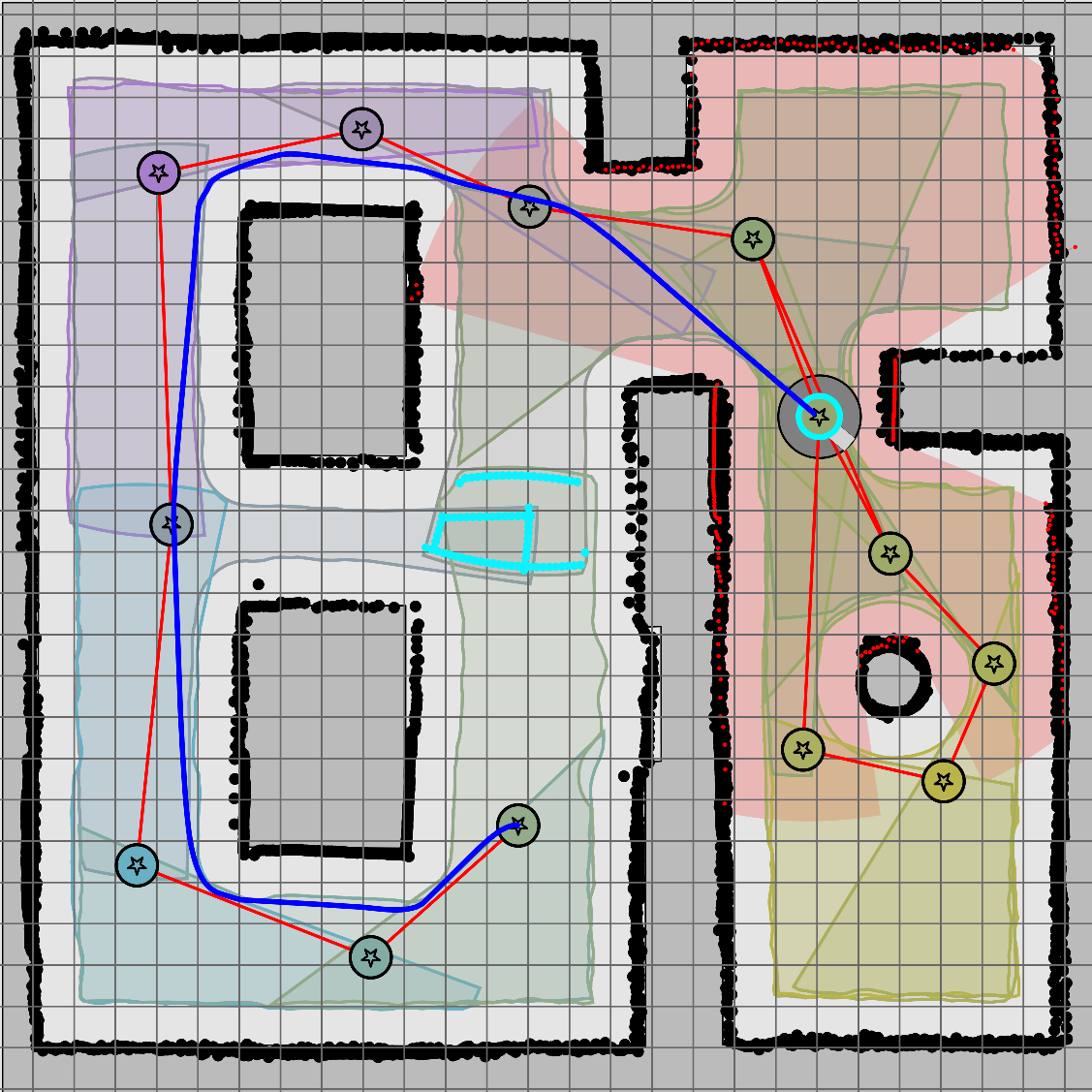}
&
\includegraphics[width=0.195\textwidth]{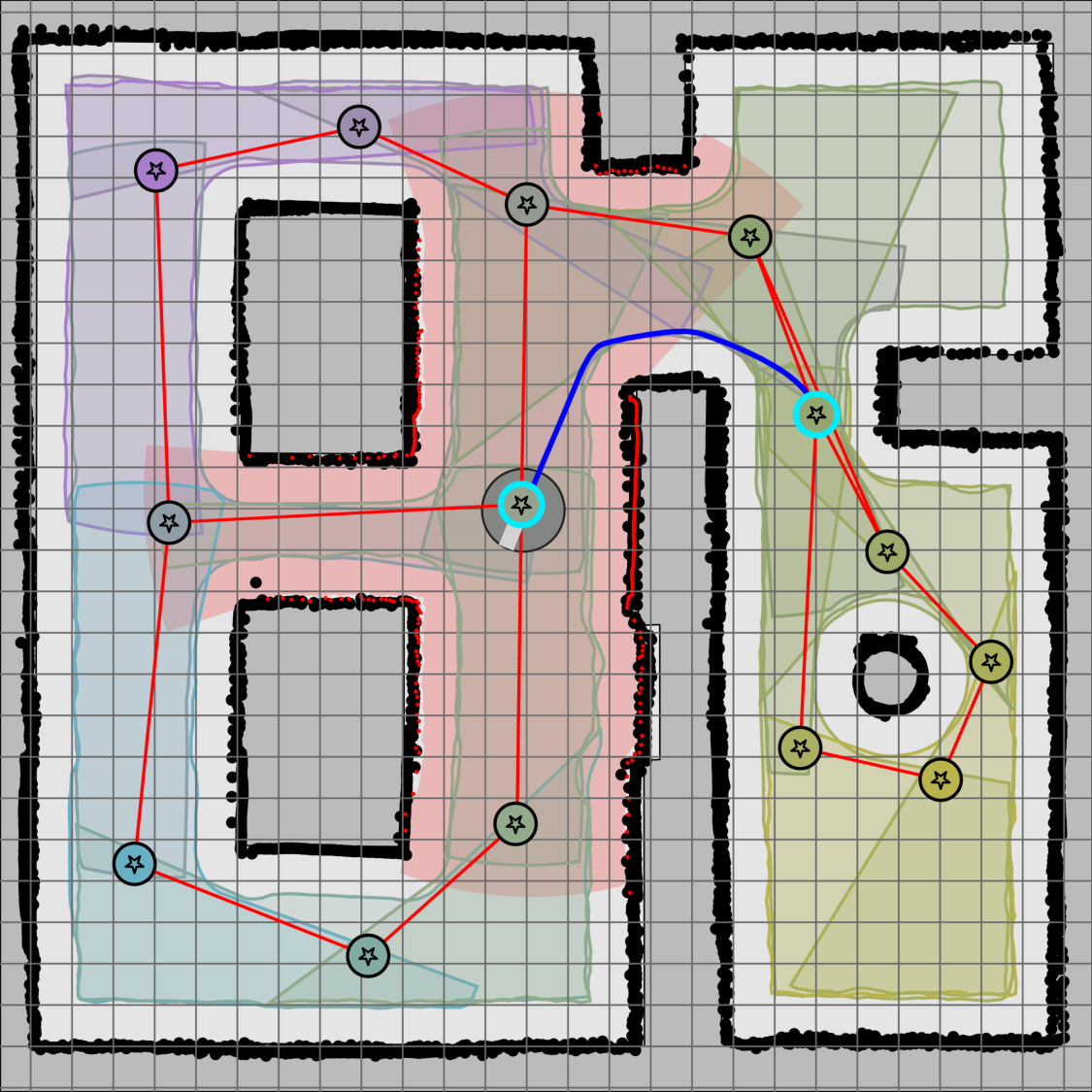}
\\[-1mm]
\includegraphics[width=0.195\textwidth]{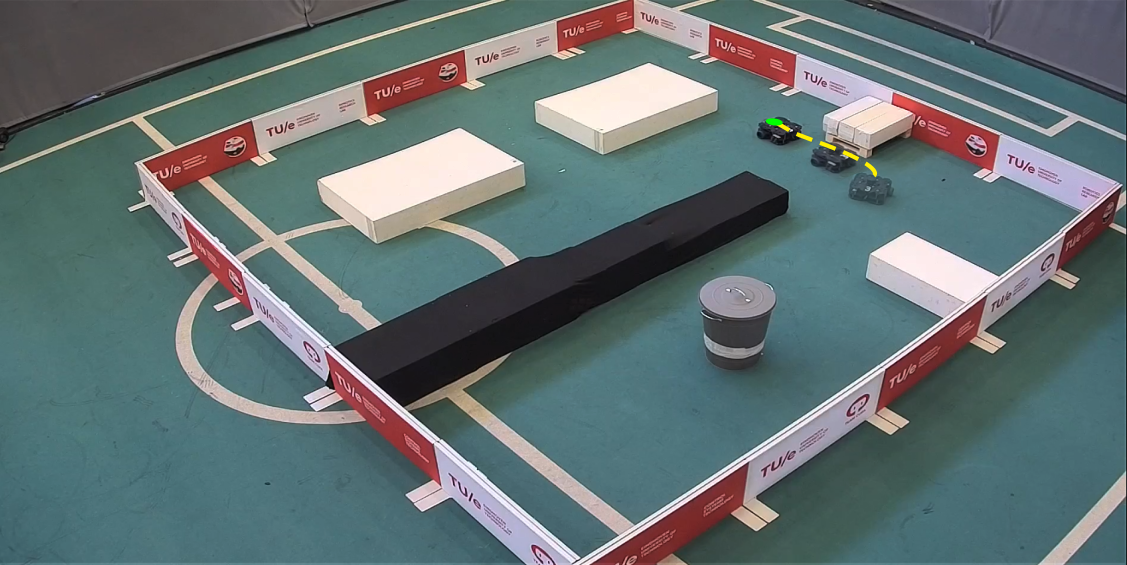}
&
\includegraphics[width=0.195\textwidth]{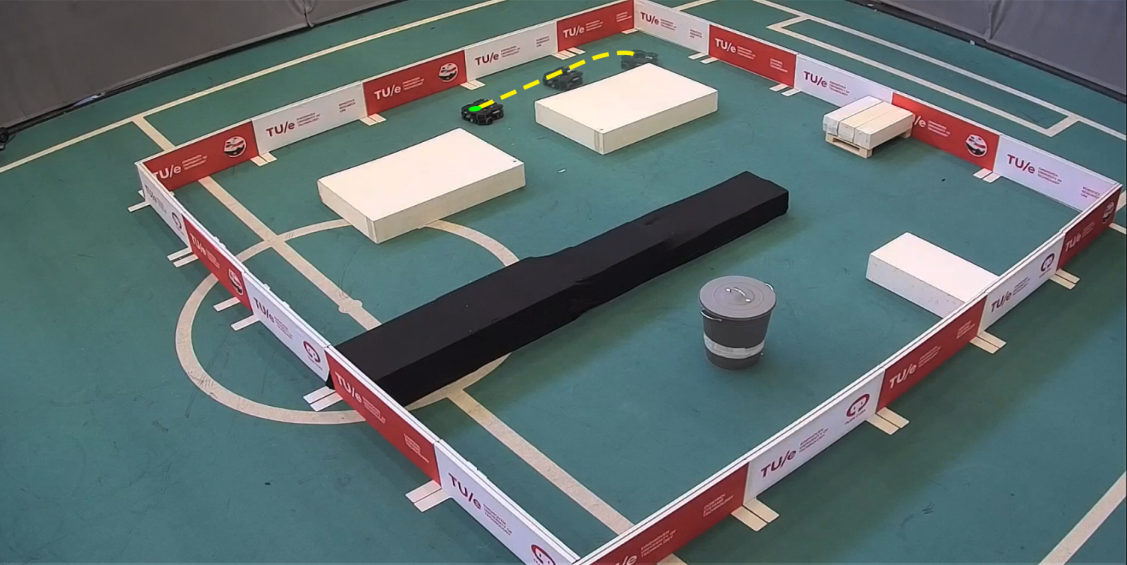}
&
\includegraphics[width=0.195\textwidth]{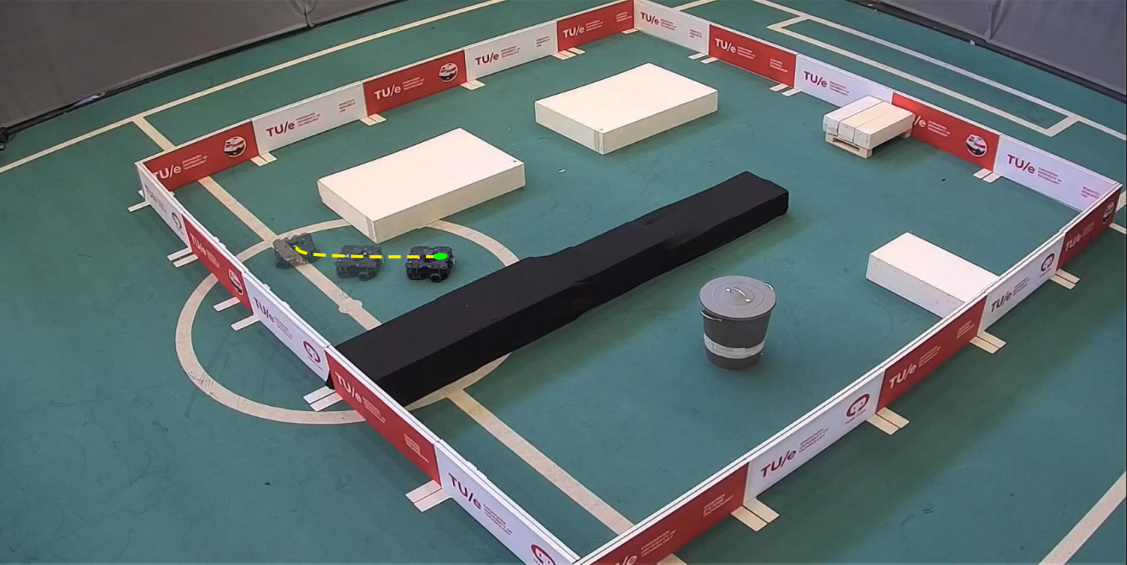}
&
\includegraphics[width=0.195\textwidth]{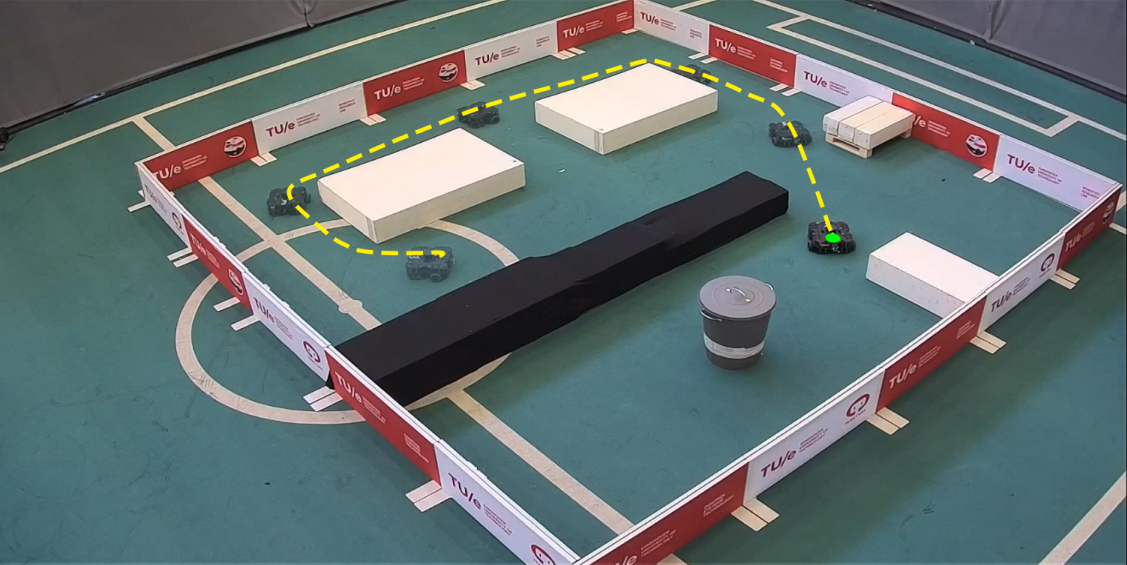}
&
\includegraphics[width=0.195\textwidth]{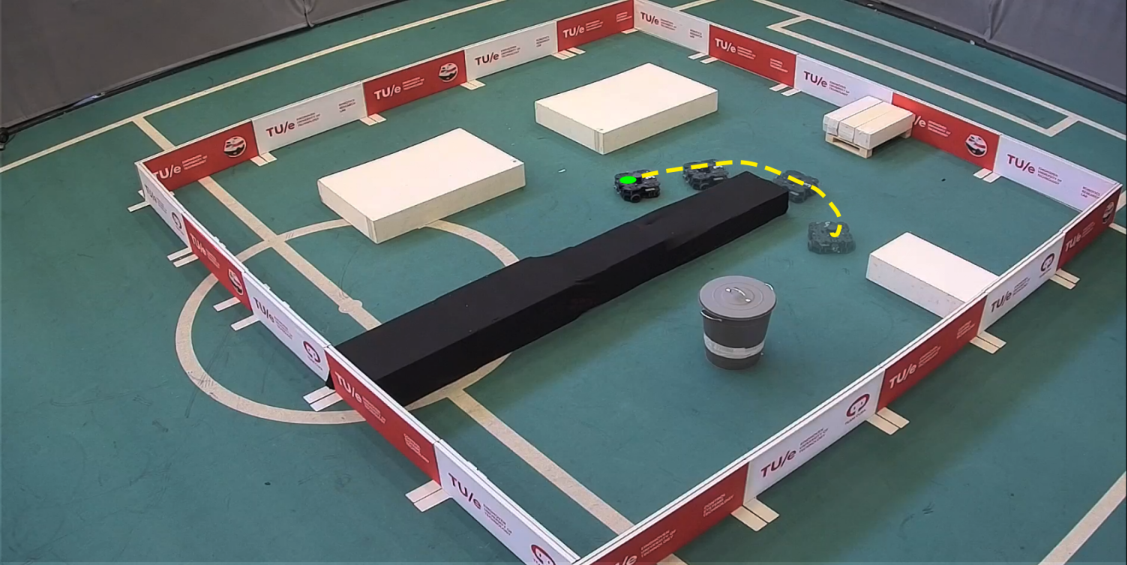}        
\\[-2.0mm]
\footnotesize{(a)} & \footnotesize{(b)} & \footnotesize{(c)} & \footnotesize{(d)} & \footnotesize{(e)}
\end{tabular}
\caption{Autonomous exploration experiment in a cluttered lab environment using a TurtleBot3 mobile robot equipped with a 2D laser scanner, employing frontier (green) and bridging (cyan) scan exploration for key-scan-based integrated mapping and navigation. (a, b) Intermediate stages of frontier-only exploration. (c) Completed frontier-only exploration. (d) First bridging exploration after the completion of frontier exploration. (e) Final completed exploration with both frontier and bridging scans. (Bottom) Example robot trajectories during autonomous exploration.}
\label{fig.autonomous_exploration_experiment}
\end{figure*}

\subsection{Autonomous Exploration Experiment with a Mobile Robot}

To demonstrate the real-time performance and applicability of our key-scan-based mapping and navigation framework in practice, we conduct physical experiments using a differential-drive TurtleBot3 Waffle Pi platform equipped with a 2D $360^{\circ}$ laser scanner (with a maximum sensing range of 2m) moving in a cluttered lab environment,%
\footnote{
The TurtleBot3 Waffle Pi robot platform has a body radius of 0.22m with respect to the motion center of its differential drive wheels and is equipped with a 2D $360^{\circ}$ LiDAR range scanner (LDS-01) generating 360 samples at 5Hz with a thresholded maximum sensing range of 2m. The robot's pose is obtained from an OptiTrack motion capture system at 30Hz, and it is controlled using a forward adaptive headway unicycle controller \cite{isleyen_vandewouw_arslan_CDC2023} based on the move-to-projected-scan-goal navigation policy, with a control gain of $\gain = 0.5$ and a headway coefficient $\kappa_{\varepsilon} = 0.5$, at a maximum linear velocity of 0.26m/s and a maximum angular velocity of 0.8rad/s.
}
tracked by an OptiTrack motion capture system for localization, see \reffig{fig.autonomous_exploration_experiment}.
We adapted our global feedback motion planner from the fully-actuated robot model to the unicycle robot dynamics using feedback motion prediction \cite{isleyen_vandewouw_arslan_IROS2023, isleyen_vandewouw_arslan_CDC2023} and reference governors \cite{isleyen_vandewouw_arslan_RAL2022}.
At the end of the frontier-based exploration, as seen in \reffig{fig.autonomous_exploration_experiment}(c), the robot automatically constructs a linear motion graph of key scans along a single path. 
While this provides complete metric mapping of the environment, it poorly represents its topological connectivity and shortcuts. 
Consequently, while navigating toward the first bridging scan, as shown in \reffig{fig.autonomous_exploration_experiment}(d), the robot takes a longer path than necessary. 
Fortunately, this issue is automatically resolved after completing both frontier and bridging exploration with an improved perception and action model of the environment, as shown in \reffig{fig.autonomous_exploration_experiment}~(e). 
As a result, the final motion graph of deployed key scans allows the robot to navigate effectively and safely in all possible directions around obstacles.

\section{Conclusions}
\label{sec.conclusions}

In this paper, we describe an integrated mapping, planning, and control framework for perception-driven mobile robot navigation in unknown unstructured environments using an incrementally built motion graph of key scan regions. 
By leveraging the star-convexity of scan regions, we present simple yet effective strategies for safe local navigation over star-convex scan polygons and apply a sequential composition of these local scan navigation policies for global feedback motion planning over the collective coverage and motion graph of the scan regions. 
We also show that the motion graph of scan regions can be used to determine informative bridging and frontier scan positions, which are then applied for key-scan selection and autonomous exploration, thereby facilitating active integrated mapping and navigation in unknown environments.
In particular, the new concept of bridging scans allows for loop closing and completing missing topological connections and shortcuts in the motion graphs of star-convex scan polygons, resulting in a more accurate perception and action model of the environment for global navigation.
We demonstrate the effectiveness and applicability of our key-scan-based mapping and navigation framework through numerical simulations and real physical hardware experiments.
We conclude that tightly coupling perception, planning, and control at the design stage is a key enabler for better action and perception in robotics.

We are currently working on the systematic integration of localization, mapping, planning, and control for fully autonomous, safe, and reliable mobile robot navigation in large indoor human environments over long durations. 
Another promising research direction is extending these concepts to 3D perception, planning, and control for aerial robots and 3D navigation settings, as well as optimizing the placement of key scans to minimize redundancy and avoid misplacements.





%

%



\bibliographystyle{IEEEtran}
\bibliography{references}

\begin{thebibliography}{10}
\providecommand{\url}[1]{#1}
\csname url@rmstyle\endcsname
\providecommand{\newblock}{\relax}
\providecommand{\bibinfo}[2]{#2}
\providecommand\BIBentrySTDinterwordspacing{\spaceskip=0pt\relax}
\providecommand\BIBentryALTinterwordstretchfactor{4}
\providecommand\BIBentryALTinterwordspacing{\spaceskip=\fontdimen2\font plus
\BIBentryALTinterwordstretchfactor\fontdimen3\font minus
  \fontdimen4\font\relax}
\providecommand\BIBforeignlanguage[2]{{%
\expandafter\ifx\csname l@#1\endcsname\relax
\typeout{** WARNING: IEEEtran.bst: No hyphenation pattern has been}%
\typeout{** loaded for the language `#1'. Using the pattern for}%
\typeout{** the default language instead.}%
\else
\language=\csname l@#1\endcsname
\fi
#2}}

\bibitem{renan_nascimento_RAS2021}
{\'{I}}.~R. da~Costa~Barros and T.~P. Nascimento, ``Robotic mobile fulfillment
  systems: A survey on recent developments and research opportunities,''
  \emph{Robot. Auton. Syst.}, vol. 137, p. 103729, 2021.

\bibitem{taranta_etal_INDIN2021}
D.~Taranta, F.~Marques, A.~Lourenço, P.~A. Prates, A.~Souto, E.~Pinto, and
  J.~Barata, ``An autonomous mobile robot navigation architecture for dynamic
  intralogistics,'' in \emph{IEEE International Conference on Industrial
  Informatics}, 2021, pp. 1--6.

\bibitem{bettencourt_lima_ICARSC2021}
R.~Bettencourt and P.~U. Lima, ``Multimodal navigation for autonomous service
  robots,'' in \emph{IEEE International Conference on Autonomous Robot Systems
  and Competitions}, 2021, pp. 25--30.

\bibitem{gul_rahiman_alhady_sahal_CE2019}
F.~Gul, W.~Rahiman, and S.~S. Nazli~Alhady, ``A comprehensive study for robot
  navigation techniques,'' \emph{Cogent Engineering}, vol.~6, no.~1, 2019.

\bibitem{paola_etal_IJARS2010}
D.~D. Paola, A.~Milella, G.~Cicirelli, and A.~Distante, ``An autonomous mobile
  robotic system for surveillance of indoor environments,'' \emph{International
  Journal of Advanced Robotic Systems}, vol.~7, no.~1, p.~8, 2010.

\bibitem{halder_afsari_AS2023}
S.~Halder and K.~Afsari, ``Robots in inspection and monitoring of buildings and
  infrastructure: A systematic review,'' \emph{Applied Sciences}, vol.~13,
  no.~4, 2023.

\bibitem{oriolo_ulivi_vendittelli_TSMC1998}
G.~Oriolo, G.~Ulivi, and M.~Vendittelli, ``Real-time map building and
  navigation for autonomous robots in unknown environments,'' \emph{IEEE
  Transactions on Systems, Man, and Cybernetics}, vol.~28, no.~3, pp. 316--333,
  1998.

\bibitem{placed_etal_TRO2023}
J.~A. Placed, J.~Strader, H.~Carrillo, N.~Atanasov, V.~Indelman, L.~Carlone,
  and J.~A. Castellanos, ``A survey on active simultaneous localization and
  mapping: State of the art and new frontiers,'' \emph{IEEE Transactions on
  Robotics}, vol.~39, no.~3, pp. 1686--1705, 2023.

\bibitem{thrun_AI1998}
S.~Thrun, ``Learning metric-topological maps for indoor mobile robot
  navigation,'' \emph{Artificial Intelligence}, vol.~99, no.~1, pp. 21--71,
  1998.

\bibitem{konolige_marder_marthi_ICRA2011}
K.~Konolige, E.~Marder-Eppstein, and B.~Marthi, ``Navigation in hybrid
  metric-topological maps,'' in \emph{IEEE International Conference on Robotics
  and Automation}, 2011, pp. 3041--3047.

\bibitem{canny_ComplexityRobotMotionPlanning1988}
J.~Canny, \emph{The complexity of robot motion planning}.\hskip 1em plus 0.5em
  minus 0.4em\relax MIT press, 1988.

\bibitem{choset_etal_PrinciplesOfRobotMotion2005}
H.~M. Choset, K.~M. Lynch, S.~Hutchinson, G.~Kantor, W.~Burgard, L.~Kavraki,
  S.~Thrun, and R.~C. Arkin, \emph{Principles of Robot Motion: Theory,
  Algorithms, and Implementations}.\hskip 1em plus 0.5em minus 0.4em\relax MIT
  Press, 2005.

\bibitem{siciliano_etal_RoboticsModellingPlanningControl2009}
B.~Siciliano, L.~Sciavicco, L.~Villani, and G.~Oriolo, \emph{Robotics:
  Modelling, Planning and Control}.\hskip 1em plus 0.5em minus 0.4em\relax
  Springer, 2009.

\bibitem{lynch_park_ModernRobotics2017}
K.~M. Lynch and F.~C. Park, \emph{Modern Robotics: Mechanics, Planning, and
  Control}.\hskip 1em plus 0.5em minus 0.4em\relax Cambridge University Press,
  2017.

\bibitem{aguiar_hespanha_kokotovic_Automatica2008}
A.~P. Aguiar, J.~P. Hespanha, and P.~V. Kokotović, ``Performance limitations
  in reference tracking and path following for nonlinear systems,''
  \emph{Automatica}, vol.~44, no.~3, pp. 598--610, 2008.

\bibitem{koenig_likhachev_TRO2005}
S.~Koenig and M.~Likhachev, ``Fast replanning for navigation in unknown
  terrain,'' \emph{IEEE Transactions on Robotics}, vol.~21, no.~3, pp.
  354--363, 2005.

\bibitem{ding_gao_wang_shen_TRO2019}
W.~Ding, W.~Gao, K.~Wang, and S.~Shen, ``An efficient b-spline-based
  kinodynamic replanning framework for quadrotors,'' \emph{IEEE Transactions on
  Robotics}, vol.~35, no.~6, pp. 1287--1306, 2019.

\bibitem{tordesillas_etal_TRO2021}
J.~Tordesillas, B.~T. Lopez, M.~Everett, and J.~P. How, ``Faster: Fast and safe
  trajectory planner for navigation in unknown environments,'' \emph{IEEE
  Transactions on Robotics}, pp. 1--17, 2021.

\bibitem{nguyen_etal_ECC2021}
H.~Nguyen, M.~Kamel, K.~Alexis, and R.~Siegwart, ``Model predictive control for
  micro aerial vehicles: A survey,'' in \emph{European Control Conference},
  2021, pp. 1556--1563.

\bibitem{deits_tedrake_ICRA2015}
R.~Deits and R.~Tedrake, ``Efficient mixed-integer planning for uavs in
  cluttered environments,'' in \emph{IEEE International Conference on Robotics
  and Automation}, 2015, pp. 42--49.

\bibitem{chen_liu_shen_ICRA2016}
J.~Chen, T.~Liu, and S.~Shen, ``Online generation of collision-free
  trajectories for quadrotor flight in unknown cluttered environments,'' in
  \emph{IEEE International Conference on Robotics and Automation}, 2016, pp.
  1476--1483.

\bibitem{liu_etal_RAL2017}
S.~Liu, M.~Watterson, K.~Mohta, K.~Sun, S.~Bhattacharya, C.~J. Taylor, and
  V.~Kumar, ``Planning dynamically feasible trajectories for quadrotors using
  safe flight corridors in 3-d complex environments,'' \emph{IEEE Robotics and
  Automation Letters}, vol.~2, no.~3, pp. 1688--1695, July 2017.

\bibitem{gao_etal_JFR2019}
F.~Gao, W.~Wu, W.~Gao, and S.~Shen, ``Flying on point clouds: Online trajectory
  generation and autonomous navigation for quadrotors in cluttered
  environments,'' \emph{Journal of Field Robotics}, vol.~36, no.~4, pp.
  710--733, 2019.

\bibitem{marcucci_etal_JoO2024}
T.~Marcucci, J.~Umenberger, P.~Parrilo, and R.~Tedrake, ``Shortest paths in
  graphs of convex sets,'' \emph{SIAM Journal on Optimization}, vol.~34, no.~1,
  pp. 507--532, 2024.

\bibitem{marcucci_etal_SR2023}
T.~Marcucci, M.~Petersen, D.~von Wrangel, and R.~Tedrake, ``Motion planning
  around obstacles with convex optimization,'' \emph{Science Robotics}, vol.~8,
  no.~84, 2023.

\bibitem{zhou_etal_TRO2021}
B.~Zhou, J.~Pan, F.~Gao, and S.~Shen, ``Raptor: Robust and perception-aware
  trajectory replanning for quadrotor fast flight,'' \emph{IEEE Transactions on
  Robotics}, vol.~37, no.~6, pp. 1992--2009, 2021.

\bibitem{khatib_IJRR1986}
O.~Khatib, ``Real-time obstacle avoidance for manipulators and mobile robots,''
  \emph{The International Journal of Robotics Research}, vol.~5, no.~1, pp.
  90--98, 1986.

\bibitem{rimon_kod_TRA1992}
E.~Rimon and D.~Koditschek, ``Exact robot navigation using artificial potential
  functions,'' \emph{IEEE Transactions on Robotics and Automation}, vol.~8,
  no.~5, pp. 501--518, 1992.

\bibitem{koren_borenstein_ICRA1991}
Y.~Koren and J.~Borenstein, ``Potential field methods and their inherent
  limitations for mobile robot navigation,'' in \emph{IEEE International
  Conference on Robotics and Automation}, 1991, pp. 1398--1404 vol.2.

\bibitem{barraquand_langlois_latombe_TSMC1992}
J.~Barraquand, B.~Langlois, and J.~C. Latombe, ``Numerical potential field
  techniques for robot path planning,'' \emph{IEEE Transactions on Systems,
  Man, and Cybernetics}, vol.~22, no.~2, pp. 224--241, 1992.

\bibitem{burridge_rizzi_koditschek_IJRR1999}
R.~R. Burridge, A.~A. Rizzi, and D.~E. Koditschek, ``Sequential composition of
  dynamically dexterous robot behaviors,'' \emph{The International Journal of
  Robotics Research}, vol.~18, no.~6, pp. 535--555, 1999.

\bibitem{belta_isler_pappas_TRO2005}
C.~Belta, V.~Isler, and G.~Pappas, ``Discrete abstractions for robot motion
  planning and control in polygonal environments,'' \emph{IEEE Transactions on
  Robotics}, vol.~21, no.~5, pp. 864--874, 2005.

\bibitem{conner_choset_rizzi_tro2009}
D.~Conner, H.~Choset, and A.~Rizzi, ``Flow-through policies for hybrid
  controller synthesis applied to fully actuated systems,'' \emph{IEEE
  Transactions on Robotics}, vol.~25, no.~1, pp. 136--146, 2009.

\bibitem{conner_howie_rizzi_AR2011}
D.~C. Conner, H.~Choset, and A.~A. Rizzi, ``Integrating planning and control
  for single-bodied wheeled mobile robots,'' \emph{Autonomous Robots}, vol.~30,
  no.~3, pp. 243--264, 2011.

\bibitem{arslan_saranli_TRO2012}
{\"O}.~Arslan and U.~Saranl{\i}, ``Reactive planning and control of planar
  spring-mass running on rough terrain,'' \emph{Robotics, IEEE Transactions
  on}, vol.~28, no.~3, pp. 567--579, 2012.

\bibitem{arslan_guralnik_kod_TRO2016}
O.~Arslan, D.~P. Guralnik, and D.~E. Koditschek, ``Coordinated robot navigation
  via hierarchical clustering,'' \emph{IEEE Transactions of Robotics}, vol.~32,
  no.~2, pp. 352--371, 2016.

\bibitem{arslan_koditschek_IJRR2019}
{\"O}.~Arslan and D.~E. Koditschek, ``Sensor-based reactive navigation in
  unknown convex sphere worlds,'' \emph{The International Journal of Robotics
  Research}, vol.~38, no. 2-3, pp. 196--223, 2019.

\bibitem{elfes_C1989}
A.~Elfes, ``Using occupancy grids for mobile robot perception and navigation,''
  \emph{Computer}, vol.~22, no.~6, pp. 46--57, 1989.

\bibitem{lavalle_PlanningAlgorithms2006}
S.~M. LaValle, \emph{Planning Algorithms}.\hskip 1em plus 0.5em minus
  0.4em\relax Cambridge University Press, 2006.

\bibitem{bormann_etal_ICRA2016}
R.~Bormann, F.~Jordan, W.~Li, J.~Hampp, and M.~Hägele, ``Room segmentation:
  Survey, implementation, and analysis,'' in \emph{IEEE International
  Conference on Robotics and Automation}, 2016, pp. 1019--1026.

\bibitem{blochliger_etal_ICRA2018}
F.~Blochliger, M.~Fehr, M.~Dymczyk, T.~Schneider, and R.~Siegwart, ``Topomap:
  Topological mapping and navigation based on visual slam maps,'' in \emph{IEEE
  International Conference on Robotics and Automation}, 2018, pp. 3818--3825.

\bibitem{chen_etal_IROS2022}
X.~Chen, B.~Zhou, J.~Lin, Y.~Zhang, F.~Zhang, and S.~Shen, ``Fast 3d sparse
  topological skeleton graph generation for mobile robot global planning,'' in
  \emph{IEEE/RSJ International Conference on Intelligent Robots and Systems},
  2022, pp. 10\,283--10\,289.

\bibitem{lu_milios_AR1997}
F.~Lu and E.~Milios, ``Globally consistent range scan alignment for environment
  mapping,'' \emph{Autonomous robots}, vol.~4, pp. 333--349, 1997.

\bibitem{grisetti_etal_MITS2010}
G.~Grisetti, R.~Kümmerle, C.~Stachniss, and W.~Burgard, ``A tutorial on
  graph-based slam,'' \emph{IEEE Intelligent Transportation Systems Magazine},
  vol.~2, no.~4, pp. 31--43, 2010.

\bibitem{mendes_koch_lacroix_SSRR2016}
E.~Mendes, P.~Koch, and S.~Lacroix, ``Icp-based pose-graph slam,'' in
  \emph{IEEE International Symposium on Safety, Security, and Rescue Robotics},
  2016, pp. 195--200.

\bibitem{lluvia_lazkano_ansuategi_Sensors2021}
I.~Lluvia, E.~Lazkano, and A.~Ansuategi, ``Active mapping and robot
  exploration: A survey,'' \emph{Sensors}, vol.~21, no.~7, 2021.

\bibitem{yamauchi_CIRA1997}
B.~Yamauchi, ``A frontier-based approach for autonomous exploration,'' in
  \emph{IEEE International Symposium on Computational Intelligence in Robotics
  and Automation}, 1997, pp. 146--151.

\bibitem{stachniss_hahnel_burgard_IROS2004}
C.~Stachniss, D.~Hahnel, and W.~Burgard, ``Exploration with active loop-closing
  for fastslam,'' in \emph{IEEE/RSJ International Conference on Intelligent
  Robots and Systems}, 2004, pp. 1505--1510 vol.2.

\bibitem{valencia_etal_IROS2012}
R.~Valencia, J.~Valls~Miró, G.~Dissanayake, and J.~Andrade-Cetto, ``Active
  pose slam,'' in \emph{IEEE/RSJ International Conference on Intelligent Robots
  and Systems}, 2012, pp. 1885--1891.

\bibitem{fermin-leon_neira_castellanos_ECMR2017}
L.~Fermin-Leon, J.~Neira, and J.~A. Castellanos, ``Tigre: Topological graph
  based robotic exploration,'' in \emph{European Conference on Mobile Robots},
  2017, pp. 1--6.

\bibitem{kim_eustice_IJRR2015}
A.~Kim and R.~M. Eustice, ``Active visual slam for robotic area coverage:
  Theory and experiment,'' \emph{The International Journal of Robotics
  Research}, vol.~34, no. 4-5, pp. 457--475, 2015.

\bibitem{suresh_etal_ICRA2020}
S.~Suresh, P.~Sodhi, J.~G. Mangelson, D.~Wettergreen, and M.~Kaess, ``Active
  slam using 3d submap saliency for underwater volumetric exploration,'' in
  \emph{IEEE International Conference on Robotics and Automation}, 2020, pp.
  3132--3138.

\bibitem{yang_etal_ICRA2021}
F.~Yang, D.-H. Lee, J.~Keller, and S.~Scherer, ``Graph-based topological
  exploration planning in large-scale 3d environments,'' in \emph{IEEE
  International Conference on Robotics and Automation}, 2021, pp.
  12\,730--12\,736.

\bibitem{segal_haehnel_thrun_RSS2009}
A.~Segal, D.~Haehnel, and S.~Thrun, ``Generalized-{ICP}.'' in \emph{Robotics:
  Science and Systems}, 2009.

\bibitem{dellaert_kaess_FTR2017}
F.~Dellaert and M.~Kaess, ``Factor graphs for robot perception,''
  \emph{Foundations and Trends{\textregistered} in Robotics}, vol.~6, no. 1-2,
  pp. 1--139, 2017.

\bibitem{webster_Convexity1995}
R.~Webster, \emph{Convexity}.\hskip 1em plus 0.5em minus 0.4em\relax Oxford
  University Press, 1995.

\bibitem{liu_JCO1995}
J.~Liu, ``Sensitivity analysis in nonlinear programs and variational
  inequalities via continuous selections,'' \emph{SIAM Journal on Control and
  Optimization}, vol.~33, no.~4, pp. 1040--1060, 1995.

\bibitem{isleyen_vandewouw_arslan_CDC2023}
A.~{\.I}{\c{s}}leyen, N.~van~de Wouw, and {\"O}.~Arslan, ``Adaptive headway
  motion control and motion prediction for safe unicycle motion design,'' in
  \emph{IEEE Conference on Decision and Control}, 2023, pp. 6942--6949.

\bibitem{isleyen_vandewouw_arslan_IROS2023}
------, ``Feedback motion prediction for safe unicycle robot navigation,'' in
  \emph{IEEE/RSJ International Conference on Intelligent Robots and Systems},
  2023, pp. 10\,511--10\,518.

\bibitem{isleyen_vandewouw_arslan_RAL2022}
------, ``From low to high order motion planners: Safe robot navigation using
  motion prediction and reference governor,'' \emph{IEEE Robotics and
  Automation Letters}, vol.~7, no.~4, pp. 9715--9722, 2022.

\bibitem{blanchini_Automatica1999}
F.~Blanchini, ``Set invariance in control,'' \emph{Automatica}, vol.~35,
  no.~11, pp. 1747 -- 1767, 1999.

\bibitem{khalil_NonlinearSystems2001}
H.~K. Khalil, \emph{Nonlinear Systems}.\hskip 1em plus 0.5em minus 0.4em\relax
  Prentice Hall, 2001.

\end{thebibliography}


\appendices 

\section{Proofs}
\label{sec.proofs}

\subsection{Proof of \refprop{prop.move_through_scan_center_convergence}}
\label{app.move_through_scan_center_convergence}

\begin{proof}
Since $\ctrl_{\ctrlgoal, (\scancenter, \scanpoints)}(\pos)$ in \refeq{eq.local_navigation_policy} is a convex combination of $-\gain(\pos - \scancenter)$ and $-\gain(\pos - \ctrlgoal)$, the robot velocity always points towards the tangentially inside of $\conv(\pos, \scancenter,  \ctrlgoal)$, ensuring that the triangle $\conv(\scancenter, \pos, \ctrlgoal)$ shrinks over time along the closed-loop robot trajectory $\pos(t)$ due to the Nagumo theorem of sub-tangentiality for positive invariance \cite{blanchini_Automatica1999}, i.e.
\begin{align*}
\conv(\pos(t), \scancenter, \ctrlgoal) \supseteq \conv(\pos(t'), \scancenter, \ctrlgoal) \quad  \forall t \leq t'.
\end{align*}  
Hence, both the area and perimeter of $\conv(\scancenter, \pos, \ctrlgoal)$ are nonincreasing under the move-through-scan-center policy in \refeq{eq.local_navigation_policy}. 
Note that both the area and the perimeter of $\conv(\scancenter, \pos, \ctrlgoal)$ strictly decrease for a nontrivial triangle with nonzero area, and the robot moves directly towards the goal if the triangle $\conv(\scancenter, \pos, \ctrlgoal)$ is trivial with zero area and collinear vertices. 
Since the scan center is assumed to have positive body clearance from obstacles (\refasm{asm.star_polygon_safety}) and the goal is strictly inside the safe scan polygon, the move-through-scan-center navigation strategy switches in finite time from moving towards the scan center $\scancenter$ to moving towards the goal $\ctrlgoal$.  
Moreover, since the robot always moves towards a visibly safe goal point $\scancenter$ or $\ctrlgoal$ in $\safescanpoly(\scancenter, \scanpoints)$,  the safety of the local navigation policy follows from the positive invariance of the safe scan polygon $\safescanpoly(\scancenter, \scanpoints)$. 
Therefore, using the perimeter of $\conv(\scancenter, \pos, \ctrlgoal)$ as a Lyapunov function, it follows from LaSalle's invariance principle \cite{khalil_NonlinearSystems2001} that the robot's position $\pos$ asymptotically converges to the goal $\ctrlgoal$.
\end{proof}

\subsection{Proof of \ref{prop.move_to_project_goal_convergence}}
\label{app.move_to_project_goal_convergence}

\begin{proof}
The safety of the closed-loop robot motion follows from the positive invariance of the safe scan polygon $\safescanpoly(\scancenter, \scanpoints)$ under the move-to-projected-scan-goal law which is due to due to the Nagumo theorem of the sub-tangentiality condition of positive invariance \cite{blanchini_Automatica1999} since the robot always moves towards a visibly safe projected goal point $\proj_{\pos, (\scancenter, \scanpoints)}(\ctrlgoal)$ in $\safescanpoly(\scancenter, \scanpoints)$.
Moreover, the move-to-projected-scan-goal policy $\overline{\ctrl}_{\goal, (\scancenter, \scanpoints)}(\pos)$, by design, continuously moves the robot directly towards the visible projected goal $\proj_{\pos, (\scancenter, \scanpoints)}(\ctrlgoal)$ which continuously get closer to the goal $\goal$ away from it, i.e.,
\begin{align*}
\norm{\proj_{\pos(t), (\scancenter, \scanpoints)}(\ctrlgoal) - \ctrlgoal} \geq \norm{\proj_{\pos(t'), (\scancenter, \scanpoints)}(\ctrlgoal) - \ctrlgoal} \quad \forall t \leq t'. 
\end{align*}
where the inequality is strict after a finite time since the scan center $\scancenter$ and the goal $\ctrlgoal$ have positive robot-body-clearance from obstacles (\refasm{asm.star_polygon_safety}).
Hence, due to the Nagumo theorem of the sub-tangentiality condition of positive set invariance \cite{blanchini_Automatica1999}, the triangle $\conv\plist{\pos, \proj_{\pos, (\scancenter, \scanpoints)}(\ctrlgoal), \ctrlgoal}$ strictly shrinks over time along the closed-loop motion trajectory $\pos(t)$ towards the projected scan goal $\proj_{\pos(t), (\scancenter, \scanpoints)}(\ctrlgoal)$, i.e.,
\begin{align*}
\conv\plist{\pos(t), \proj_{\pos(t), (\scancenter, \scanpoints)}\!(\ctrlgoal), \ctrlgoal} \supseteq \conv\plist{\pos(t'), \proj_{\pos(t'), (\scancenter, \scanpoints)}\!(\ctrlgoal), \ctrlgoal}
\end{align*} 
for all $t \leq t'$, where the equality only holds if and only if $\pos = \ctrlgoal$. 
Therefore, both the perimeter and the area of $\conv\plist{\pos, \proj_{\pos, (\scancenter, \scanpoints)}(\ctrlgoal), \ctrlgoal}$ strictly decrease%
\footnote{
Let $\overline{\ctrlgoal} := \proj_{\pos, (\scancenter, \scanpoints)}(\ctrlgoal)$ which satisfies $\dot{\overline{\ctrlgoal}} = -\alpha(\pos) (\overline{\ctrlgoal} - \ctrlgoal)$ under the move-to-projected-scan-goal law for some positive function $\alpha(\pos) \geq 0$. Hence, we have from $\dot{\pos} = - \gain(\pos - \overline{\ctrlgoal})$ that
\begin{align*}
\tfrac{\diff}{\diff t} \plist{\norm{\pos - \overline{\ctrlgoal}} + \norm{\overline{\ctrlgoal} - \ctrlgoal}} &= 2\tfrac{\tr{(\pos - \overline{\ctrlgoal})}}{\norm{\pos - \overline{\ctrlgoal}}} (\dot{\pos} - \dot{\overline{\ctrlgoal}}) + 2 \tfrac{\tr{(\overline{\ctrlgoal} - \ctrlgoal)}}{\norm{\overline{\ctrlgoal} - \ctrlgoal}} \dot{\overline{\ctrlgoal}} \\
& \hspace{-20mm} = - 2 \gain \norm{\pos - \overline{\ctrlgoal}} + 2 \alpha(\pos)  \tfrac{\tr{(\pos - \overline{\ctrlgoal})}}{\norm{\pos - \overline{\ctrlgoal}}}(\overline{\ctrlgoal} - \ctrlgoal) - 2 \alpha(\pos) \norm{\overline{\ctrlgoal} - \ctrlgoal}
\\
& \hspace{-20mm} \leq  - 2 \gain \norm{\pos - \overline{\ctrlgoal}}
\end{align*}
which is strictly negative when $\pos \neq \overline{\ctrlgoal}$ and so $\pos \neq \ctrlgoal$.
}%
when $\pos \neq \ctrlgoal$.
Moreover, since the robot moves directly from $\pos$ towards $\proj_{\pos, (\scancenter, \scanpoints)}(\ctrlgoal)$ and the distance $\norm{\proj_{\pos, (\scancenter, \scanpoints)}(\ctrlgoal) - \ctrlgoal}$ is non-increasing, the length $\norm{\pos - \proj_{\pos, (\scancenter, \scanpoints)}(\ctrlgoal)} + \norm{\proj_{\pos, (\scancenter, \scanpoints)}(\ctrlgoal) - \ctrlgoal}$ of the piecewise linear path between $\pos$ and $\ctrlgoal$, connected through the visible projected goal $\proj_{\pos, (\scancenter, \scanpoints)}(\ctrlgoal)$, strictly decreases when $\pos \neq \ctrlgoal$.
Thus, using the total distance $\norm{\pos - \proj_{\pos, (\scancenter, \scanpoints)}\!(\ctrlgoal)\!} + \norm{\proj_{\pos, (\scancenter, \scanpoints)}\!(\ctrlgoal) \!-\! \ctrlgoal}$ of the robot and the goal to the projected goal as a Lyapunov function, it follows from Lyapunov stability theory \cite{khalil_NonlinearSystems2001} that the move-to-projected-scan-goal policy asymptotically and safely brings all robot positions in the positively invariant safe scan polygon $\safescanpoly(\scancenter, \scanpoints)$ to the goal $\ctrlgoal$, which completes the proof.
\end{proof}

\subsection{Proof of \refthm{thm.sequential_composition_convergence}}
\label{app.sequential_composition_convergence}

\begin{proof}
The global convergence of the sequential composition of local scan navigation policies in \refeq{eq.global_navigation_policy} follows from the convergence properties of the local scan navigation policies, due to the guaranteed finite-time, cycle-free, and strictly decreasing cost transitions between these local policies. 
Therefore, the result can be verified using the following facts.

\indent $\bullet$ (Connected Motion Graph) The connectivity of the motion graph $\graph(\scanset)$ ensures that there is a sequence of local controllers whose domain $\safescanpoly(\scancenter_i, \scanpoints_i)$ contains a path joining $\pos, \goal \in \bigcup_{i=1}^{m} \safescanpoly(\scancenter_i, \pointcloud_i)$ since each point in the safe scan polygon is safely connected to the scan center via a straight line, and the centers of adjacent scans in the motion graph are also safely connected by straight lines.  
 
\indent $\bullet$ (Prioritized Local Controllers) Each local navigation policy associated with scan $(\scancenter_i, \scanpoints_i)$ has a priority inversely proportional with its cost-to-go estimate $\policycost_{\goal,\scanset}(i)$ to the goal position $\goal$ in  \refalg{alg.planning_over_star_convex_regions}. Since an active scan in \refeq{eq.active_scan} is selected with the minimum cost-to-go, a transition to another local navigation policy ensures a strict decrease in the cost and thus a finite number of cycle-free transitions.
 
\indent $\bullet$ (Goal Controller) If the goal position $\goal$ is in the active safe scan polygon $\saferpoly{\scancenter_i, \scanpoints_i}$, then the robot asymptotically moves towards the goal $\goal$ due to the convergence property of the local scan navigation policy. 
Note that the active scan might switch to another scan with a smaller cost that contains the goal, but this still ensures that the robot continues to asymptotically move towards the global goal.

\indent $\bullet$ (Local Navigation Transitions) Otherwise, the robot moves under the active local navigation policy with the highest priority (i.e., the smallest cost-to-go estimate to the global goal) containing the robot's position and moves towards its local goal $\policygoal_{\goal, \scanset}(i)$ which is the scan center of the another scan at a smaller cost (see lines 15-17 in \refalg{alg.planning_over_star_convex_regions}). 
Since the local scan navigation policies ensure the positive invariance of their safe scan polygons and the scan centers of adjacent scan regions are strictly contained within their safer scan polygons (\refdef{def.motion_graph}), the robot enters the safe scan polygon of the next local navigation policy in finite time while asymptotically reaching the next scan center. 
Hence, the active scan includes the global goal in finite time after a finite number of transitions between local navigation policies and so guarantees asymptotic global convergence to the global goal.

\indent $\bullet$ (Safety) Finally, the safety of the resulting robot motion follows from the fact that each local navigation policy keeps its safe scan polygon positively invariant and safe scan polygons are assumed to contain no obstacles inside their interiors (\refasm{asm.star_polygon_safety}).   \qedhere
\end{proof}

\subsection{Proof of \reflem{lem.CircularSafety}}
\label{app.CircularSafety}

\begin{proof}
The necessity  can be verified as follows. If $\conv(\scancenter_i, \scancenter_j, \scancenter_k) \subseteq \freespace$, then the points $\scancenter_i$ and $\scancenter_j$ can be sensed by the scan center $\scancenter_k$ without occlusion by obstacles within the maximum safe sensing range $\maxsenserange - \radius$, and so $\scancenter_i, \scancenter_j \in \safescanpoly(\scancenter_k, \scanpoints_k)$.  
Since any point $ \scancenter \in \blist{\scancenter_i, \scancenter_j}$ is within the maximum safe sensing range (i.e., $\norm{\scancenter \!-\! \scancenter_k} \leq \max(\norm{\scancenter_i \!-\! \scancenter_k}, \norm{\scancenter_j \!-\! \scancenter_k})\! \leq\! \maxsenserange \!-\! \radius$) and is visible from~$\scancenter_k$ without occlusion with obstacles (i.e., $\blist{\scancenter, \scancenter_k} \!\subseteq\! \freespace$), we~have $\blist{\scancenter, \scancenter_k} \! \subseteq \! \safescanpoly(\scancenter_k, \scanpoints_k)$, implying that  $\blist{\scancenter_i, \scancenter_j} \! \subseteq \! \safescanpoly(\scancenter_k, \scanpoints_k)$.

The sufficiency is due to \refasm{asm.star_polygon_safety} and the star convexity of safe scan polygons: If $\blist{\scancenter_i, \scancenter_k} \! \in \! \safescanpoly(\scancenter_k, \scanpoints_k)$, then $\conv(\scancenter_i, \scancenter_j, \scancenter_k) \subseteq \safescanpoly(\scancenter_k, \scanpoints_k) \subseteq \freespace$, where the first inclusion follows from star convexity and the last is due to \refasm{asm.star_polygon_safety}. Thus, this holds similarly for the rest. 
\end{proof}

\end{document}